\newcommand{\dpi}{\mathit{dpi}}
\newcommand{\sol}{\mathbf{diag}}
\newcommand{\conf}{\mathbf{conf}}
\newcommand{\dt}{\mathcal{D}^*}
\newcommand{\md}{\mathcal{D}}
\newcommand{\mD}{\mathbf{D}}
\newcommand{\mC}{\mathbf{C}}
\newcommand{\cost}{\mathsf{cost}}
\newcommand{\qqm}{\mathsf{heur}}
\newcommand{\state}{\mathsf{state}}
\newcommand{\nd}{\mathsf{nd}}
\newcommand{\node}{\mathsf{n}}
\newcommand{\ld}{\mathit{ld}}
\newcommand{\mo}{\mathcal{K}}
\newcommand{\mb}{\mathcal{B}}
\newcommand{\tax}{\mathit{ax}}
\newcommand{\Tp}{\mathit{P}}
\newcommand{\Tn}{\mathit{N}}
\newcommand{\tp}{\mathit{p}}
\newcommand{\tn}{\mathit{n}}
\newcommand{\mc}{\mathcal{C}}
\newcommand{\RQ}{{\mathit{R}}}
\newcommand{\Queue}{{\mathbf{Q}}}
\newcommand{\pr}{\mathit{pr}}
\newcommand{\XSup}[1]{\times_{(\supset #1)}}
\algrenewcommand\algorithmicrequire{\textbf{Input:}}
\algrenewcommand\algorithmicensure{\textbf{Output:}}
\algrenewcommand\alglinenumber[1]{\tiny #1:} 
\algnewcommand{\IfThen}[2]{
	\State \algorithmicif\ #1\ \algorithmicthen\ #2
}
\newcounter{examplecounter}
\newenvironment{example}{
	\refstepcounter{examplecounter}%
	
	\vspace{7pt}
	\noindent\textbf{Example \arabic{examplecounter}}%
	\quad
}{
	
	\vspace{7pt}
	%
}
\newtheorem{theorem}{Theorem}
\newtheorem{property}{Property}
\newtheorem{problem}{Problem}
\begin{document}
	
	\title{DynamicHS: \\ Streamlining Reiter's Hitting-Set Tree for Sequential Diagnosis\thanks{\; Earlier and significantly shorter versions \protect\cite{rodler2020ecai,rodler_dynHS_dx,rodler2020socs} of this paper were accepted and presented at the \emph{31st International Workshop on Principles of Diagnosis (DX'20)}, the \emph{Symposium on Combinatorial Search (SoCS'20)}, as well as at the \emph{24th European Conference on Artificial Intelligence (ECAI'20)}. The present version extends these earlier versions in various regards, providing i.a.\ a more detailed discussion of the proposed algorithm, a correctness proof, additional examples, a more comprehensive treatment of related works, and a substantially augmented evaluation section.}}
	
	\author{\name Patrick Rodler \email patrick.rodler@aau.at \\
			\addr Universit\"atsstr.\ 65-67, 9020 Klagenfurt, Austria}

\maketitle             
\begin{abstract}
Given a system that does not work as expected, Sequential Diagnosis (SD) aims at suggesting a series of system measurements to isolate the true explanation for the system's misbehavior from a potentially exponential set of possible explanations. To reason about the best next measurement, SD methods usually require a sample of possible fault explanations at each step of the iterative diagnostic process. The computation of this sample can be accomplished by various diagnostic search algorithms. Among those, Reiter's HS-Tree is one of the most popular due its desirable properties and general applicability. Usually, HS-Tree is used in a stateless fashion throughout the SD process to (re)compute a sample of possible fault explanations in each iteration, each time given the latest (updated) system knowledge including all so-far collected measurements. At this, the built search tree is discarded between two iterations, although often large parts of the tree have to be rebuilt in the next iteration, involving redundant operations and calls to costly reasoning services.

As a remedy to this, we propose DynamicHS, a variant of HS-Tree that maintains state throughout the diagnostic session and additionally embraces special strategies to minimize the number of expensive reasoner invocations. DynamicHS provides an answer to a longstanding question posed by Raymond Reiter in his seminal paper from 1987, where he wondered whether there is a reasonable strategy to reuse an existing search tree to compute fault explanations after new system information is obtained.

We conducted extensive evaluations on real-world diagnosis problems from the domain of knowledge-based systems---a field where the usage of HS-Tree is state-of-the-art---under various diagnosis scenarios in terms of the number of fault explanations computed and the heuristic for measurement selection used. The results prove the reasonability of the novel approach and testify its clear superiority to HS-Tree wrt.\ computation time. More specifically: \emph{(1)}~DynamicHS required less time than HS-Tree in 96\,\% of the executed sequential diagnosis sessions. \emph{(2)}~DynamicHS exhibited substantial and statistically significant time savings over HS-Tree in most scenarios, with median and maximal savings of 52\,\% and 75\,\%, respectively. \emph{(3)}~The relative amount of saved time appears to neither depend on the number of computed fault explanations nor on the used measurement selection heuristic. \emph{(4)}~In the hardest (most time-intensive) cases per diagnosis scenario, DynamicHS achieved even higher savings than on average, and could avoid median and maximal time overheads of over 175\,\% and 800\,\%, respectively, as opposed to a usage of HS-Tree.

Remarkably, DynamicHS achieves these performance improvements while preserving all desirable properties as well as the general applicability of HS-Tree.     
\end{abstract}



%
%
%
\section{Introduction}
\emph{Model-based diagnosis} \cite{Reiter87,dekleer1987} is a popular, well-understood, domain-independent and principled paradigm that has over the last decades found widespread adoption for troubleshooting systems as different as programs, circuits, physical devices, knowledge bases, spreadsheets, production plans, robots, vehicles, or aircrafts \cite{ng1990model,sachenbacher1998electrics,gorinevsky2002model,steinbauer2005detecting,DBLP:conf/ieaaie/FelfernigMMST09,DBLP:journals/jair/FeldmanPG10a,jannach2010toward,wotawa2010fault,Rodler2015phd,rodler_teppan2020}.
The theory of model-based diagnosis 
\cite{Reiter87} assumes a \emph{system}
that is composed of a set of \emph{components}, and a formal \emph{system description}. The latter is given as a logical knowledge base and can be used to derive the expected behavior of the system by means of automated deduction systems. If the predicted system behavior, under the assumption that all components are functioning normally, 
is not in line with \emph{observations} made about the system,
the goal is to locate the abnormal system components that are responsible for this discrepancy. Given a \emph{diagnosis problem instance (DPI)}---consisting of the system description, the system components, and the observations---a \emph{diagnosis} is a set of components that, when assumed abnormal, 
leads to consistency between system description (predicted behavior) and observations (real behavior). 
In many cases, there 
is a substantial number
of different diagnoses given the initial system observations. However, only one of the diagnoses (which we refer to as the \emph{actual diagnosis}) pinpoints the actually faulty components. 
To isolate the actual diagnosis, \emph{sequential diagnosis} \cite{dekleer1987} methods collect additional system observations (called \emph{measurements}) to gradually refine the set of diagnoses. 

The basic idea behind measurements is to exploit the fact that different diagnoses predict different system behaviors or properties (e.g., intermediate values or outputs). Observing a system aspect for which the predictions of 
diagnoses disagree then leads to the invalidation of those diagnoses whose predictions are inconsistent with the observation.  
Since 
\begin{enumerate*}[label=\emph{(\roman*)}]
	\item the amount of useful information gained differs significantly for different possible measurements,
	\item performing a measurement is usually costly (as, e.g., a human operator needs to take action), and
	\item determining (nearly) optimal measurement points is beyond the reach of humans for sufficiently complex systems,
\end{enumerate*}
it is important to have a diagnosis system provide appropriate measurement suggestions automatically.
To this end, several (efficiently computable) \emph{heuristics} \cite{moret1982decision,dekleer1987,pattipati1990,Shchekotykhin2012,Rodler2013,DBLP:journals/corr/Rodler16a,rodler17dx_activelearning,DBLP:conf/ruleml/RodlerS18}, that define the goodness of measurements based on various information-theoretic considerations, have been proposed to deal with the problem that optimal measurement proposal is NP-hard \cite{hyafil1976}. These heuristics, in principle, assess the impact of measurements by comparing a-priori (before measurement is taken) and expected a-posteriori (after measurement outcome is known) situations regarding the (known) diagnoses and their properties (e.g., probabilities). That is, the evaluation of the expected utility of measurements requires the knowledge of a sample of diagnoses. This sample is often termed the \emph{leading diagnoses} and usually defined as the best diagnoses according to some preference criterion such as minimal cardinality (where a minimal \emph{number} of components is assumed abnormal) or maximal probability. Beside such criteria, for computational and pragmatic reasons, a least requirement is usually that only minimal diagnoses
are computed (cf.\ the ``principle of parsimony'' \cite{Reiter87}). A \emph{minimal diagnosis} is one that does not strictly contain (in the sense of set-inclusion) any diagnosis.

A generic \emph{sequential diagnosis process} (see Fig.~\ref{fig:SD_overview}) can thus be thought of as a recurring execution of 
\begin{enumerate*}[label=\emph{(\roman{*})}]
	\item the computation of a set of leading minimal diagnoses,
	\item the selection of the most informative measurement based on these,
	\item the conduction of measurement actions (by some oracle or user), and
	\item the exploitation of the measurement outcome to refine the system knowledge.
\end{enumerate*}
This iterative process continues until sufficient diagnostic certainty is obtained (e.g., one diagnosis has overwhelming probability). 

Decisive factors for successful and practicable sequential diagnosis are a low number of measurements and a low cost per measurement (effort for human operator), as well as reasonable system computation times (waiting time of human operator). While the first two factors depend on a suitable measurement selection, the latter depends largely on the efficiency of the 
algorithm for (leading) diagnoses computation \cite{Shchekotykhin2014,DBLP:journals/corr/Rodler2017,rodler2020mbd_sampling}.

One of the most popular and widely used such algorithms is Reiter's HS-Tree \cite{Reiter87}, which is adopted in various domains such as for the debugging of software \cite{wotawa2010fault,abreu2011simultaneous} or ontologies and knowledge bases \cite{friedrich2005gdm,kalyanpur2006thesis,Horridge2011a,meilicke2011thesis,Rodler2015phd}, or for the diagnosis of hardware \cite{friedrich1999model,zaman2013integrated} or of recommender and configuration systems \cite{DBLP:journals/ai/FelfernigFJS04,felfernig2008intelligent}.
The main reasons for 
the widespread adoption of HS-Tree are that
\begin{enumerate*}[label=\emph{(\roman*)}]
	\item \emph{it is broadly applicable}, because all it assumes is a system description in some (monotonic\footnote{\emph{Monotonicity} means that the entailment relation $\models$ of the logical language satisfies $\alpha \models \beta \implies \alpha \cup \gamma \models \beta$ for all sets of sentences $\alpha, \beta, \gamma$ over this language. That is, adding additional sentences $\gamma$ to a knowledge base $\alpha$ does not invalidate any conclusions that could be drawn from $\alpha$ beforehand.}) knowledge representation language for which a sound and complete inference method exists,
	\item \emph{it is sound and complete}, as it computes only and all\footnote{Given a non-empty set of minimal diagnoses for a DPI, the problem of deciding whether there is a minimal diagnosis for this DPI which is not in the given set is NP-hard \cite{Bylander1991}. Hence, the completeness of HS-Tree (or any other algorithm that computes minimal diagnoses) guarantees the computation of all minimal diagnoses for an arbitrary DPI (only) if unlimited time and memory are assumed. Note that in practical diagnosis applications the computation of (even multiple) minimal diagnoses is often accomplishable in reasonable time.} minimal diagnoses, and
	\item \emph{it computes diagnoses in best-first order} according to a given preference criterion.
\end{enumerate*}

\begin{figure}[t]
	\centering
	\includegraphics[width=0.6\columnwidth]{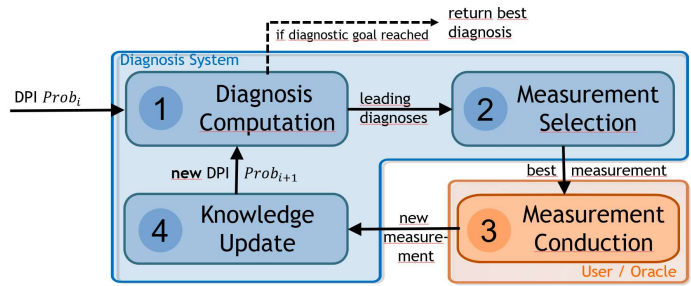}
	\caption{Overview of a generic sequential diagnosis process. Note that the diagnosis problem instance (DPI) changes between phases 4 and 1 in each iteration, i.e., $\mathit{Prob}_{i+1}$ is the reault of adding the new measurement obtained from phase 3 to $\mathit{Prob}_i$.}
	\label{fig:SD_overview}
\end{figure}

However, HS-Tree per-se does not encompass any specific provisions for being used in an iterative way. In other words, the DPI to be solved is assumed constant throughout the execution of HS-Tree. As a consequence of that, the question we address in this work is whether HS-Tree can be optimized for adoption \emph{in a sequential diagnosis scenario}, where the DPI to be solved is subject to successive change (information acquisition through measurements).
Already Raymond Reiter, in his seminal paper \cite{Reiter87} from 1987, 
asked: 
\begin{quotation}
	\emph{When new diagnoses do arise as a result of system measurements, can we determine these new diagnoses in a reasonable way from the (\dots) HS-Tree already computed in determining the old diagnoses?}
\end{quotation}
To the best of our knowledge, no study or algorithm has yet been proposed that sheds light on this very question.

As a result, 
sequential approaches which draw on HS-Tree for diagnosis computation
have to handle the varying DPI to be solved by re-invoking HS-Tree each time a new 
piece of system knowledge (measurement outcome)
is obtained. This amounts to a \emph{discard-and-rebuild} usage of HS-Tree, where the data structure (search tree) produced in one iteration is dropped prior to the next iteration, where a new 
one
is built from scratch. 
As the new tree obtained after incorporating the information about one measurement outcome usually quite closely resembles the existing tree, this approach generally requires substantial redundant computations. 
What even exacerbates this fact is that
these computations often involve a significant number of expensive reasoner calls. 
For instance, when debugging 
knowledge bases written in highly expressive logics such as OWL~2~DL, a single 
consistency check performed by
an inference service is already 2NEXPTIME-complete \cite{grau2008owl}.

Motivated by that, the \textbf{contribution of this work} is the development of 
DynamicHS, a novel \emph{stateful} variant of HS-Tree 
that pursues a \emph{reuse-and-adapt} strategy and is able to manage the dynamicity of the DPI throughout sequential diagnosis while avoiding the mentioned redundancy issues.
The main objective of this new algorithm is to allow for more efficient computations than HS-Tree while maintaining all the aforementioned advantages (generality, soundness, completeness, best-first property) of the latter. For that purpose, DynamicHS implements sophisticated techniques such as a time-efficient lazy update strategy or procedures to minimize expensive reasoning by trading it for cheaper 
(reasoning or set) operations.

In extensive evaluations we put DynamicHS to the test, using a benchmark of 20 real-world diagnosis problems from the domain of knowledge-based systems, which is one prominent field where HS-Tree is 
the prevalent tool for (sequential) diagnosis computation \cite{friedrich2005gdm,kalyanpur2006thesis,schlobach2007debugging,Horridge2011a,meilicke2011thesis,Shchekotykhin2012,Rodler2015phd,fu2016graph,baader2018weakening}. One major cause for the attractivity of HS-Tree 
in this domain are its mild assumptions made about the system description language 
and its resulting independence from the particular used language and reasoning engine, combined with the fact that a multitude of different logical languages are adopted in knowledge-based systems (with the goal to optimally trade off expressivity with reasoning efficiency).
For each diagnosis problem in our dataset, we experimented with DynamicHS and HS-Tree
under various \emph{diagnosis scenarios} wrt.\ the number of diagnoses computed per sequential diagnosis iteration and the employed measurement selection heuristic. The \textbf{main insights from our evaluations} are:
\begin{itemize}[noitemsep]
	\item DynamicHS is superior to HS-Tree in terms of computation time in 99.4\,\% of the investigated diagnosis scenarios, and in 96\,\% of all single sequential diagnosis sessions run. Roughly, these savings are achieved by trading less time (fewer redundant operations and reasoner calls) for more space (statefulness), where the additional memory required by DynamicHS was reasonable in the vast majority of the scenarios; and, whenever HS-Tree was applicable in our experiments in terms of memory requirements, DynamicHS was so as well.
	\item The runtime savings over HS-Tree achieved by DynamicHS are substantial and statistically significant in most scenarios, and reach median and maximal values of 52\,\% and 75\,\%. That is, HS-Tree requires up to an \emph{average} of four times the computation time of DynamicHS in the tested diagnosis scenarios. In \emph{single} diagnosis sessions, we observed that it took HS-Tree up to more than nine times as much time as DynamicHS---notably, while both algorithms always compute \emph{the same} solutions.
	\item The median runtime savings of DynamicHS per scenario appear to be neither dependent on the number of diagnoses computed nor on the measurement selection heuristic used.
	\item Considering the hardest cases per diagnosis scenario, which were up to one order of magnitude harder than the average cases, the time savings obtained by means of DynamicHS are even more substantial than on average, and reach median and maximal values of 64\,\% and 89\,\%, respectively.
\end{itemize}
The \textbf{organization of this work} is
as follows: Preliminaries on (sequential) model-based diagnosis are given in Sec.~\ref{sec:basics}. The novel algorithm DynamicHS is discussed in detail in Sec.~\ref{sec:dynamicHS}, where we follow a didactical approach by ``deriving'' DynamicHS from Reiter's well-known HS-Tree algorithm. Moreover, we prove the algorithm's correctness and detail some advanced techniques used by DynamicHS regarding tree management, reasoning operations and node storage. Related works are reviewed in Sec.~\ref{sec:related_work}. In Sec.~\ref{sec:eval}, we describe our experimental settings, the used test dataset, and thoroughly discuss the obtained results. More specifically, we first study the performance of DynamicHS, and then provide explanations for the outcomes through additional analyses. Finally, Sec.~\ref{sec:conclusion} concludes this work. 

\section{Preliminaries}
\label{sec:basics}
We briefly characterize the basic technical concepts used throughout this work, based on the framework of \cite{Shchekotykhin2012,Rodler2015phd} which is (slightly) more general \cite{rodler17dx_reducing} than Reiter's theory of diagnosis \cite{Reiter87}. In other words, any model-based diagnosis problem that can be formulated by means of Reiter's theory, can be equivalently stated by the following framework.
The main reason for using this more general framework is its ability to handle negative measurements (things that must \emph{not} be true for the diagnosed system)
which are helpful, e.g., for diagnosing knowledge bases \cite{DBLP:journals/ai/FelfernigFJS04,Shchekotykhin2012,schekotihin2018icbo_test-driven}.

\subsection{Diagnosis Problem Instance (DPI)}
We assume that the diagnosed system, consisting of a set of components $\setof{c_1,\dots,c_k}$, is described by a finite set of logical sentences $\mo \cup \mb$, where $\mo$ (possibly faulty sentences) includes knowledge about the behavior of the system components, and $\mb$ (correct background knowledge) comprises any additional available 
system knowledge and system observations. More precisely, there is a one-to-one relationship between sentences $\tax_i \in \mo$ and components $c_i$, where $\tax_i$ describes the nominal behavior
of $c_i$ (\emph{weak fault model}\footnote{\emph{Weak fault models}, in constrast to \emph{strong fault models}, define \emph{only the normal behavior} of the system components, and do not specify any behavior in case components are at fault \cite{feldman2008computing}. 
The weak fault model is also referred to as ``Ignorance of Abnormal Behavior'' property \cite{dekleer2008improved}.}). 
For instance, if $c_i$ is an AND-gate in a circuit, then $\tax_i := out(c_i) = and(in1(c_i),in2(c_i))$; $\mb$ in this example might encompass sentences stating, e.g., which components (gates) are connected by wires, 
or observed outputs of the circuit. The inclusion of a sentence $\tax_i$ in $\mo$ corresponds to the (implicit) assumption that $c_i$ is healthy. Evidence about the system behavior is captured by sets of positive ($\Tp$) and negative ($\Tn$) measurements \cite{dekleer1987,Reiter87,DBLP:journals/ai/FelfernigFJS04}. Each measurement is a logical sentence; positive ones, $\tp\in\Tp$, must be true, and negative ones, $\tn\in\Tn$, must not be true. The former can be, depending on the context, e.g., observations about the system, probes or required system properties. The latter model properties that must not hold for the system. For example, if the diagnosed system is a logical description of a university domain, a negative test case could be $\forall X\, (\mathit{researcher(X)} \rightarrow \mathit{professor(X)})$, i.e., that it must not be true that each researcher is a professor. 
We call $\tuple{\mo,\mb,\Tp,\Tn}$ a \emph{diagnosis problem instance (DPI)}. 

\begin{example}\label{ex:dpi}
Tab.~\ref{tab:example_DPI} depicts an example of a DPI, formulated in propositional logic. 
The ``system'' (which is the knowledge base itself in this case) comprises five ``components'' $c_1, \dots,c_5$, and the ``nominal behavior'' of $c_i$ is given by the respective axiom $\tax_i \in \mo$. There is neither any background knowledge ($\mb = \emptyset$) nor any positive test cases ($\Tp=\emptyset$) available from the start. But, there is one negative test case (i.e., $\Tn = \setof{\lnot A}$), which postulates that $\lnot A$ must \emph{not} be an entailment of the correct system (knowledge base). Note, however, that $\mo$ (i.e., the assumption that all ``components'' work nominally) in this case does entail $\lnot A$ (e.g., due to the axioms $\tax_1,\tax_2$) and therefore some axiom in $\mo$ must be faulty (i.e., some ``component'' is not healthy).\qed
\end{example}

\begin{table}
	\centering
	\footnotesize
	\caption{\small Example DPI stated in propositional logic.}
	\label{tab:example_DPI}
	\renewcommand{\arraystretch}{1}
	\begin{tabular}{@{}ccc@{}}
		\toprule
	\multicolumn{1}{ c  }{\multirow{2}{*}{$\mo\;=$} } & \multicolumn{2}{ l  }{$\{ \tax_1: A \to \lnot B$ \;\; $\tax_2: A \to B$ \;\; $\tax_3: A \to \lnot C$} \\
	& \multicolumn{2}{ l  }{$\phantom{\{} \tax_4: B \to C$ \;\;\,\hspace{4pt} $\tax_5: A \to B \lor C \qquad\qquad\,\quad\;\}$} \\
	\cmidrule{1-3}
	\multicolumn{3}{ l  }{$\mb =\emptyset \quad\qquad\qquad\qquad \Tp=\emptyset \quad\qquad\qquad\qquad \Tn=\setof{\lnot A}$} \\
	
	\bottomrule
\end{tabular}
\end{table}

\subsection{Diagnoses}
\label{sec:diagnoses}
Given that the system description along with the positive measurements (under the 
assumption $\mo$ that all components are healthy) is inconsistent, i.e., $\mo \cup \mb \cup \Tp \models \bot$, or some negative measurement is entailed, i.e., $\mo \cup \mb \cup \Tp \models \tn$ for some $\tn \in \Tn$, some assumption(s) about the healthiness of components, i.e., some sentences in $\mo$, must be retracted. We call such a set of sentences $\md \subseteq \mo$ a \emph{diagnosis} for the DPI $\tuple{\mo,\mb,\Tp,\Tn}$ iff $(\mo \setminus \md) \cup \mb \cup \Tp \not\models x$ for all $x \in \Tn \cup \setof{\bot}$. We say that $\md$ is a \emph{minimal diagnosis} for a DPI $\dpi$ iff there is no diagnosis $\md' \subset \md$ for $\dpi$. The set of minimal diagnoses is representative for all diagnoses (under the weak fault model \cite{Kleer1992}), i.e., 
the set of all diagnoses is exactly given by the set of all supersets of all minimal diagnoses.
Therefore, diagnosis approaches often restrict their focus to only minimal diagnoses. In the following, we denote the set of all minimal diagnoses for a DPI $\dpi$ by $\sol(\dpi)$. We furthermore denote by $\dt$ the (unknown) \emph{actual diagnosis} which pinpoints the actually faulty axioms, i.e., all elements of $\dt$ are in fact faulty and all elements of $\mo\setminus\dt$ are in fact correct.

\begin{example}\label{ex:diagnoses}
For our DPI $\dpi$ in Tab.~\ref{tab:example_DPI} we have four minimal diagnoses, given by $\md_1:=[\tax_1,\tax_3]$, $\md_2:=[\tax_1,\tax_4]$, $\md_3:=[\tax_2,\tax_3]$, and $\md_4 := [\tax_2,\tax_5]$, i.e., $\sol(\dpi) = \setof{\md_1,\dots,\md_4}$.\footnote{In this work, we denote diagnoses by square brackets.} For instance, $\md_1$ is a minimal diagnosis as $(\mo\setminus\md_1) \cup \mb\cup \Tp = \setof{\tax_2,\tax_4,\tax_5}$ is both consistent and does not entail the given negative test case $\lnot A$.\qed
\end{example}

\subsection{Conflicts}
\label{sec:conflicts}
Useful for the computation of diagnoses is the concept of a conflict \cite{dekleer1987,Reiter87}.
A conflict is a set of healthiness assumptions for components $c_i$ that cannot all hold given the current knowledge about the system. More formally, $\mc \subseteq \mo$ is a \emph{conflict} for the DPI $\tuple{\mo,\mb,\Tp,\Tn}$ iff $\mc \cup \mb \cup \Tp \models x$ for some $x \in \Tn \cup \setof{\bot}$. We call $\mc$ a \emph{minimal conflict} for a DPI $\dpi$ iff there is no conflict $\mc' \subset \mc$ for $\dpi$.
In the following, we denote the set of all minimal conflicts for a DPI $\dpi$ by $\conf(\dpi)$.
A (minimal) diagnosis for $\dpi$ is then a (minimal) hitting set of all minimal conflicts for $\dpi$ \cite{Reiter87}, where $X$ is a \emph{hitting set} of a collection of sets $\mathbf{S}$ iff $X \subseteq \bigcup_{S_i \in \mathbf{S}} S_i$ and $X \cap S_i \neq \emptyset$ for all $S_i \in S$.

\begin{example}\label{ex:conflicts}
For our running example, $\dpi$, in Tab.~\ref{tab:example_DPI}, there are four minimal conflicts, given by $\mc_1 := \langle\tax_1,\tax_2\rangle$, $\mc_2 := \langle\tax_2,\tax_3,\tax_4\rangle$, $\mc_3 := \langle\tax_1,\tax_3,\tax_5\rangle$, and $\mc_4 := \langle\tax_3,\tax_4,\tax_5\rangle$, i.e., $\conf(\dpi) = \setof{\mc_1,\dots,\mc_4}$.\footnote{In this work, we denote conflicts by angle brackets.} For instance, $\mc_4$, in CNF equal to $(\lnot A \lor \lnot C) \land (\lnot B \lor C) \land (\lnot A \lor B \lor C)$, is a conflict because, adding the unit clause $(A)$ to this CNF yields a contradiction, which is why the negative test case $\lnot A$ is an entailment of $\mc_4$. The minimality of the conflict $\mc_4$ can be verified by rotationally removing from $\mc_4$ a single axiom at the time and controlling for each so obtained subset that this subset is consistent and does not entail $\lnot A$.

For example, the minimal diagnosis $\md_1$ (see Example~\ref{ex:diagnoses}) is a hitting set of all minimal conflict sets because each conflict in $\conf(\dpi)$ contains $\tax_1$ or $\tax_3$. It is moreover a \emph{minimal} hitting set since the elimination of $\tax_1$ implies an empty intersection with, e.g., $\mc_1$, and the elimination of $\tax_3$ means that, e.g., $\mc_4$ is no longer hit.\qed   
\end{example}

Literature offers a variety of algorithms for conflict computation, e.g., \cite{junker04,marques2013minimal,shchekotykhin2015mergexplain}. Given a DPI $\dpi = \tuple{\mo,\mb,\Tp,\Tn}$ as input, one call to such an algorithm returns one minimal conflict for $\dpi$, if existent, and 'no conflict' otherwise. All algorithms require an appropriate theorem prover that is used as an oracle to perform consistency checks over the logic by which the DPI is expressed. In the worst case, none of the available algorithms requires fewer than 
$O(|\mo|)$ theorem prover calls per conflict computation \cite{marques2013minimal}. 
The performance of diagnosis computation methods depends largely on the complexity of consistency checking for the used logic and on the number of consistency checks executed (cf., e.g., \cite{pill2011eval_hitting_set_algos}). Since consistency checking is often NP-complete or beyond for practical problems \cite{kalyanpur2006thesis,grau2008owl,Horridge2011a,romero2012more,Rodler2019userstudy}, and diagnostic algorithms have no influence on the used system description language, it is pivotal to keep the number of conflict computations at a minimum.

\subsection{Sequential Diagnosis (SD) Problem}
We 
now define 
the 
sequential diagnosis (SD)
problem, which calls for a set of measurements for a given DPI, such that adding these measurements to the DPI implies diagnostic certainty, i.e., a single remaining minimal diagnosis for the new DPI. In the optimization version (OptSD), the goal is to find such a measurement set of minimal cost.   
\begin{problem}[(Opt)SD]\label{prob:SD}
	\textbf{Given:} A DPI $\tuple{\mo,\mb,\Tp,\Tn}$.
	\textbf{Find:} A (minimal-cost) set of measurements $\Tp' \cup \Tn'$ such that $|\sol(\tuple{\mo,\mb,\Tp\cup\Tp',\Tn\cup\Tn'})| = 1$.\footnote{There are different characterizations of the SD Problem in literature. E.g., it is sometimes formulated as the problem of finding a policy that suggests the next measurement given any possible situation that might be encountered starting from a given DPI. For the purpose of this paper, we basically stick to the conceptualization of \cite{Reiter87}, where measurements are sets of logical sentences.}
\end{problem}

A minimal-cost set of measurements $\Tp' \cup \Tn'$
minimizes $\sum_{m \in \Tp' \cup \Tn'} \cost(m)$, where $\cost(m)$ is the cost value assigned to measurement $m$. Possible cost functions range from simple unit cost assumptions for measurements ($\cost(m)= 1$ for all $m$) to sophisticated measurement time, effort, or complexity evaluations \cite{Rodler2015phd,DBLP:journals/corr/Rodler2017,Rodler2019userstudy}. 
%

\begin{example}\label{ex:seq_diag}
Let $\tuple{\mo,\mb,\Tp,\Tn}$ be the example DPI described in Tab.~\ref{tab:example_DPI}, and assume that $\dt = \md_2$, i.e., $[\tax_1,\tax_4]$ is the actual diagnosis. Then, one solution to Problem~\ref{prob:SD} is $\Tp'=\{A \land B\}$ and $\Tn'=\{C\}$. In other words, $\sol(\tuple{\mo,\mb,\Tp\cup\Tp',\Tn\cup\Tn'}) = \{\md_2\}$. An alternative solution would be $\Tp'=\{A \to \lnot C\}$ and $\Tn'=\{A\to C, A \to \lnot B\}$. If the solution cost is quantified by the number of measurements, then the first solution has a cost of 2, whereas the second has a cost of 3. We would thus prefer the first solution. 
If we alternatively adopt a more fine-grained cost analysis, considering also the estimated complexity of the measurements, e.g., that negation and implication are harder (to understand for an interacting domain expert who labels sentences as positive or negative measurements) than conjunction, then the first solution would be even more preferred to the second than under the assumption of uniform measurement costs. 
\qed
\end{example}
%
%

\subsection{Computational Issues in Sequential Diagnosis} 
\label{sec:computational_issues_in_SD}
In principle, the OptSD problem can be tackled by appropriate \emph{measurement point\footnote{
In the scope of this work, the relation between measurement and measurement point is as follows. A \emph{measurement} is a logical sentence that describes
the outcome of a \emph{measuring action} at some predefined \emph{measurement point}. For instance, when diagnosing a circuit, the measurement action could be the application of a voltmeter, and the measurement point a particular terminal of a gate in a circuit. The outcome would then be whether the measured wire $w$ is high (1) or low (0), yielding one of the (positive) measurements $w=1$ or $w=0$, respectively.}
selection\footnote{What we call \emph{measurement point selection} in this work is often simply referred to as \emph{measurement selection} in literature, e.g., \cite{friedrich1992choosing,chen2015approach}. Moreover, the measurement point selection might also include a prior \emph{measurement point computation}, for (e.g., knowledge-based) systems for which the possible measurement points are not explicitly given \cite{Shchekotykhin2012,DBLP:journals/corr/Rodler2017}.}} \cite{dekleer1987} approaches. 
However, due to the NP-hardness of OptSD \cite{hyafil1976}, 
sequential diagnosis systems, in general, can merely attempt to approximate its optimal solution. This is usually accomplished by means of a one-step-lookahead approach \cite{dekleer1992onesteplook} which evaluates (the gain of) possible measurement points based on the 
a-posteriori situations encountered for its different outcomes, weighted by their respective (estimated) probabilities. To make these estimations, a sample of (minimal) diagnoses, the \emph{leading diagnoses}, is usually taken as a basis \cite{dekleer1987,DBLP:journals/jair/FeldmanPG10a,Shchekotykhin2012,Rodler2015phd}. 
In the context of one-step-lookahead analysis, a range of measurement point evaluation \emph{heuristics} have been suggested in literature, such as the measurement point's information gain \cite{dekleer1987}, 
its ability to guarantee a maximal worst-case diagnoses invalidation rate \cite{moret1982decision,Shchekotykhin2012}, a dynamic combination thereof \cite{Rodler2013}, or the measurement point's goodness 
wrt.\ various active learning criteria \cite{DBLP:journals/corr/Rodler16a,rodler17dx_activelearning,DBLP:conf/ruleml/RodlerS18}. Besides these more sophisticated assessment criteria, a common minimal requirement to a rational measurement point selection strategy is that it suggests only \emph{discriminating measurement points}, i.e., for each measurement outcome there must be at least one 
leading 
diagnosis that is inconsistent with this outcome. In other words, proposing a discriminating measurement point guarantees the invalidation of some spurious fault hypothesis (or: the gain of some useful information) already a-priori, irrespective of the measurement outcome. 

While appropriate measurement point selection is one important factor determining the overall efficiency of the diagnostic process,
not only the cost associated with the \emph{measurement conduction} 
is of relevance from the viewpoint of a user interacting with a sequential diagnosis engine. In fact, there is another crucial factor which is left implicit in Problem~\ref{prob:SD}, namely the question how efficiently 
the \emph{measurement point computation} is accomplished by the diagnosis system.
Because, the \emph{overall} (user) time to diagnose a system is given by the overall measurement conduction time \emph{plus} the overall system computation time for the proposal of the measurement points. The following problem makes this explicit:
\begin{problem}[Efficient (Opt)SD]\label{prob:efficient_SD}
\textbf{Given:} A DPI $\tuple{\mo,\mb,\Tp,\Tn}$.
\textbf{Find:} A solution to (Opt)SD in minimal time. 
\end{problem}

Approaching a solution to Problem~\ref{prob:efficient_SD} involves the minimization of what we call the system \emph{reaction time}, i.e., the computation time required by the system after being informed about the outcome of one 
measuring action
until the provision of the next measurement point. The reaction time is composed of the time required for \emph{(i)}~the \emph{system knowledge update} (based on the information acquired through a new measurement), \emph{(ii)}~the (leading) \emph{diagnosis computation}, as well as for \emph{(iii)}~the \emph{measurement point selection}. These main factors affecting the sequential diagnosis time are summarized in Fig.~\ref{fig:categorization_of_time_factors_in_SD}. 

Whereas the efficiency-optimization of measurement point selection algorithms\footnote{Measurement selection algorithms \cite{dekleer1987} are given a set of leading diagnoses, the current system knowledge (DPI), as well as the available probabilistic information and a selection heuristic, and should output ``the best'' next measurement according to the given heuristic.}
can be largely or even fully decoupled from diagnosis computation and knowledge update algorithms, the latter two might need to be studied in common, depending on whether the diagnosis computation is stateless \cite{Shchekotykhin2012,Shchekotykhin2014} or stateful \cite{dekleer1987,Siddiqi2011,Rodler2015phd}.
More specifically, the elaborateness of the knowledge update process might range from almost trivial (addition of a new measurement to the DPI; update of probabilities) in the stateless case, to very sophisticated in the stateful case, where, e.g., the state of a persistent data structure used throughout sequential diagnosis must be additionally updated. 

We present in this work DynamicHS, a stateful diagnosis computation algorithm that addresses Problem~\ref{prob:efficient_SD} by \emph{focusing on the efficiency of the diagnosis computation and knowledge update processes} (cf.\ Fig.~\ref{fig:categorization_of_time_factors_in_SD}).
In fact, we will provide empirical evidence (in Sec.~\ref{sec:eval}) that the novel algorithm is more time-efficient for sequential diagnosis than 
Reiter's HS-Tree, which is a state-of-the-art diagnosis computation approach, e.g., in the knowledge base and ontology debugging domains \cite{friedrich2005gdm,kalyanpur2006thesis,schlobach2007debugging,Horridge2011a,meilicke2011thesis,Shchekotykhin2012,Rodler2015phd,fu2016graph,baader2018weakening}. In particular, DynamicHS aims at achieving an efficiency gain over HS-Tree while \emph{computing the same solution} to Problem~\ref{prob:SD} as HS-Tree, while \emph{preserving all the (desirable) properties} of HS-Tree, and while \emph{maintaining full compatibility with other approaches}
that tackle Problem~\ref{prob:efficient_SD} from the perspective of making the measurement point selection more efficient (such as \cite{rodler17dx_queries}), or that attack Problem~\ref{prob:SD} to achieve a reduction of measurement conduction time (such as \cite{Shchekotykhin2012,Rodler2013,rodler2019aie}).


\begin{figure}[t]
	\includegraphics[width=\columnwidth]{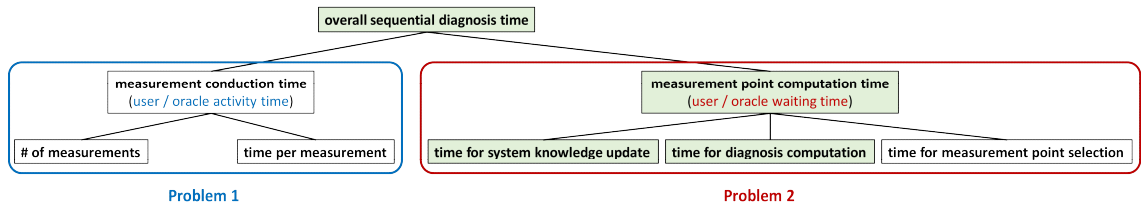}
	\caption{Categorization of the factors that influence the overall sequential diagnosis time. The blue / red frame refers to the OptSD / Efficient OptSD problem. The factors shaded green are those addressed by the DynamicHS algorithm suggested in this work. For the figure to reflect a more general viewpoint (including also resources other than time), one may replace each occurrence of ``time'' in the figure by ``cost''.}
	\label{fig:categorization_of_time_factors_in_SD}
\end{figure}

\section{DynamicHS Algorithm}
\label{sec:dynamicHS}
\subsection{Outline of DynamicHS}
\label{sec:dynHS_outline}
\subsubsection{Motivation and Principle}
Starting from Reiter's HS-Tree, the main observation that builds the ground for the development of DynamicHS is that, throughout a sequential diagnosis session, the initially given DPI is subject to gradual change.
That is, new measurements are successively added to it, and usually significant portions of a built hitting set tree remain unaffected after the transition from one DPI to the next. Thus, discarding the existing tree and (re)constructing a similar tree in the next iteration potentially involves a significant number of redundant (and expensive) operations. With DynamicHS, we account for that by proposing to replace a discard-and-rebuild paradigm by an adapt-and-reuse one---that is, the built hitting set tree is persistently maintained and accordingly adapted each time a new measurement comes in. 
These tree updates, which mainly include tree pruning and node relabeling steps, are required to guarantee the correctness of the diagnosis computation in the next iteration, i.e., 
for the \emph{newest} DPI. 
The underlying rationale of DynamicHS is to avoid unnecessary repeated actions and to cleverly leverage the reused search data structure to trade expensive reasoning against cheaper operations, such as set comparisons or less expensive (but equally informative) reasoning.
%
%
\subsubsection{Inputs and Outputs}
DynamicHS (depicted by Alg.~\ref{algo:dynamic_hs} on page~\pageref{algo:dynamic_hs}) is a procedure that computes a set of minimal diagnoses for a DPI 
$\langle\mo,\mb,\Tp \cup \Tp',\Tn \cup \Tn'\rangle$ 
in best-first order (as per some given goodness criterion, which we assume to be a probability measure $\pr$ in this work).
DynamicHS accepts the following arguments: 
\emph{(1)}~an initial DPI $\dpi_0 = \tuple{\mo,\mb,\Tp,\Tn}$,
\emph{(2)}~the already accumulated positive and negative measurements $\Tp'$ and $\Tn'$, respectively,
\emph{(3)}~a probability measure $\pr$ which is exploited to compute diagnoses in descending order based on their probabilities, 
\emph{(4)}~a stipulated number $\ld\geq 2$ of diagnoses to be returned,\footnote{The reason why at least two diagnoses should be computed is that \emph{(i)}~two or more minimal diagnoses are needed for the computation of a discriminating measurement point \cite{Rodler2015phd} (cf.\ Sec.~\ref{sec:computational_issues_in_SD}), and \emph{(ii)}~this way the stop criterion ``only a single minimal diagnosis remaining'' (cf.\ Alg.~\ref{algo:sequential_diagnosis}) guarantees that the sequential diagnosis process ends only if diagnostic certainty has been achieved (correct diagnosis among \emph{all} minimal diagnoses found).} 
\emph{(5)}~the set of those diagnoses returned by the previous run of DynamicHS that are consistent ($\mD_{\checkmark}$) and those that are inconsistent ($\mD_{\times}$) with the last added measurement, 
and
\emph{(6)}~a tuple of variables $\state$, which altogether describe DynamicHS's current state. 

\begin{algorithm}[!tbp]
	\scriptsize
	\caption{Sequential Diagnosis}\label{algo:sequential_diagnosis}
	\begin{algorithmic}[1]
		\Require DPI $\dpi_0 := \langle\mo,\mb,\Tp,\Tn\rangle$, 
		probability measure $\pr$ (to compute 
		diagnoses probabilities),
		number $\ld$ ($\geq 2$) of minimal diagnoses to be computed per iteration, 
		heuristic $\qqm$ for measurement selection, boolean $\mathit{dynamic}$ that governs which diagnosis computation algorithm is used (DynamicHS if true, HS-Tree otherwise)  
		\Ensure $\setof{\md}$ where $\md$ is the final diagnosis after solving SD (Problem~\ref{prob:SD})
		\vspace{6pt}
		
		\State $\Tp'\gets \emptyset, \Tn'\gets \emptyset$ \label{algoline:inter_onto_debug:var_init_start} \Comment{performed measurements}
		\State $\mD_{\checkmark} \gets \emptyset, \mD_{\times} \gets \emptyset$ \label{algoline:inter_onto_debug:var_init_mD_checkmark+mD_times} \Comment{variables describing state...}
		\State $\state \gets \tuple{[\,[]\,],[\,],\emptyset,\emptyset}$ \label{algoline:inter_onto_debug:var_init_end} \Comment{...of DynamicHS tree}
		\While{\true}  \label{algoline:inter_onto_debug:while}
		\If{$\mathit{dynamic}$}
		\State $\tuple{\mD,\state} \gets \Call{DynamicHS}{\dpi_0,\Tp',\Tn',\pr, \ld, \mD_{\checkmark}, \mD_{\times}, \state}$
		\label{algoline:inter_onto_debug:call_DynHS}
		\Else
		\State $\mD \gets \Call{HS-Tree}{\dpi_0,\Tp',\Tn',\pr, \ld}$
		\label{algoline:inter_onto_debug:call_HS-Tree}
		\EndIf
		\IfThen {$|\mD|=1$}
		{\Return $\mD$}
		\label{algoline:inter_onto_debug:if_goal_reached}
		\State $mp \gets \Call{computeBestMeasPoint}{\mD,\dpi_0,\Tp',\Tn', \pr, \qqm}$   \label{algoline:inter_onto_debug:calc_query} 
		\State $\mathit{outcome} \gets \Call{performMeas}{mp}$   \Comment{oracle inquiry (user interaction)} \label{algoline:inter_onto_debug:perform_meas}
		\State $\tuple{\Tp', \Tn'} \gets \Call{addMeas}{mp,outcome,\Tp', \Tn'}$ 
		\label{algoline:inter_onto_debug:add_meas}
		\If{$\mathit{dynamic}$}
		\State $\tuple{\mD_{\checkmark}, \mD_{\times}} \gets \Call{assignDiagsOkNok}{\mD,\dpi_0,\Tp', \Tn'}$
		\label{algoline:inter_onto_debug:partition_D_into_Dcheckmark_and_Dtimes}
		\EndIf
		\EndWhile
	\end{algorithmic}
	\normalsize
\end{algorithm}

\subsubsection{Embedding in Sequential Diagnosis Process}
\label{sec:embedding_in_seq_diag_process}
Alg.~\ref{algo:sequential_diagnosis} sketches a generic sequential diagnosis algorithm and shows how it accommodates DynamicHS (line~\ref{algoline:inter_onto_debug:call_DynHS}) or, alternatively, Reiter's HS-Tree (line~\ref{algoline:inter_onto_debug:call_HS-Tree}), as methods for iterative diagnosis computation. 
The algorithm 
reiterates a while-loop (line~\ref{algoline:inter_onto_debug:while}) 
until the solution space of minimal diagnoses includes only a single element.\footnote{Of course, less rigorous stopping criteria are possible, e.g., one might appoint a probability threshold and stop once the probability of one computed diagnosis exceeds this threshold \cite{dekleer1987}.} 
Since both DynamicHS and HS-Tree are complete (see later) 
and always attempt to compute 
at least two
diagnoses ($\ld \geq 2$), this stop criterion is met
 iff a diagnoses set $\mD$ with $|\mD| = 1$ is output (line~\ref{algoline:inter_onto_debug:if_goal_reached}). On the other hand, as long as $|\mD| > 1$, the algorithm 
seeks to acquire additional information
to rule out further elements in $\mD$.
To this end, the best next measurement point $mp$ is computed (\textsc{computeBestMeasPoint}, line~\ref{algoline:inter_onto_debug:calc_query}), using the current system information---$\dpi_0$, $\mD$, and acquired measurements $\Tp'$, $\Tn'$---as well as the given probabilistic information $\pr$ and some selection heuristic $\qqm$ (which defines what ``best'' means, cf.\ \cite{rodler17dx_activelearning}).
%
The conduction of the measurement at $mp$ (\textsc{performMeas}, line~\ref{algoline:inter_onto_debug:perform_meas}) is usually accomplished by a qualified 
user 
(\emph{oracle}) that interacts with the sequential diagnosis system, e.g., an electrical engineer for a defective circuit, or a domain expert in case of a faulty knowledge base.
The measurement point $mp$ along with its result $\mathit{outcome}$ are used to formulate a logical sentence $m$ that is either added to $\Tp'$ if $m$ constitutes a positive measurement, and to $\Tn'$ otherwise (\textsc{addMeas}, line~\ref{algoline:inter_onto_debug:add_meas}).\footnote{Depending on the diagnosed system, the notion of a ``measurement point'' might be quite differently interpreted. For instance, in case of a physical system \cite{dekleer1987}, it might be literally a point $mp$ in the system where to make a measurement with
some physical measurement instrument (e.g., a voltmeter). 
In case of a knowledge-based system \cite{Shchekotykhin2012}, on the other hand, it might be, e.g., a question $mp$ in the form of a logical sentence asked to an expert about the domain described by a problematic knowledge base; e.g., one could ask $mp := \forall X\, (\mathit{bird(X)} \rightarrow \mathit{canFly(X)})$ (``can all birds fly?''). Likewise, the ``outcome'' of a measurement might be
the measured value $v$ at the measurement point $mp$
in the former case, and a truth value answering the asked question $mp$ in the latter. 
The new ``measurement'' (logical sentence), formulated from the measurement point and the respective outcome, could be $m_{phys}:=mp=v$ in the former, and $m_{\mathit{KB}}:=mp$ in the latter case. At this, $m_{phys}$ would be added to the positive measurements $\Tp'$ (since $m_{phys}$ is now a fact), whereas $m_{\mathit{KB}}$ would be added to the positive (negative) measurements $\Tp'$ ($\Tn'$) if the answer to the asked question was positive (negative), i.e., $m_{\mathit{KB}}$ must (not) be true in the correct knowledge base (cf.\ \cite{Rodler2015phd}).}
%
%
Finally, if DynamicHS is adopted, 
the set of diagnoses $\mD$ is partitioned 
into those consistent ($\mD_{\checkmark}$) and those inconsistent ($\mD_{\times}$) with the newly added measurement $m$ (\textsc{assignDiagsOkNok}, line~\ref{algoline:inter_onto_debug:partition_D_into_Dcheckmark_and_Dtimes}).

\subsubsection{Properties}
DynamicHS is \emph{stateful} in that one and the same tree data structure is used throughout the entire sequential diagnostic process. Apart from that, it preserves all (desirable) properties of Reiter's HS-Tree, i.e., the \emph{soundness} (only minimal diagnoses for the current DPI are found) and \emph{completeness} (all minimal diagnoses for the current DPI can be found) of the diagnosis search in each iteration, as well as the generality (\emph{logics- and reasoner-independence}) and the feature to \emph{calculate diagnoses in most-preferred-first order}.
\subsubsection{Specialized Incorporated Techniques}
To keep the extent---and thus the impact on the computational complexity---of the regularly performed tree update actions at a minimum, DynamicHS incorporates a specialized \emph{lazy update policy}, where only such updates are executed whose omission would (in general) compromise the soundness or completeness of the diagnosis search. 
Second, as regards the tree pruning, DynamicHS features an \emph{efficient two-tiered redundancy testing technique} for evaluating tree branches with regard to whether they are necessary (still needed) or redundant (ready to be discarded). The principle behind this technique is to first execute a more performant sound but incomplete test, 
and second, only if necessary, a sound and complete test. 
%
Moreover, preliminarily unneeded \emph{``duplicate'' tree branches are stored in a space-saving manner} and only used on demand to reconstruct a proper replacement branch if the ``original'' associated with the duplicate has become redundant.  
Finally, DynamicHS embraces various \emph{strategies to
avoid the overall amount of logical reasoning} 
involved in its computations,
since the calls of a logical inference engine normally account for most of the computation time of a diagnosis system (cf., e.g., \cite{pill2011eval_hitting_set_algos}).


\subsection{Description of DynamicHS}
\label{sec:description_of_DynHS}
In this section we describe the DynamicHS algorithm,
given by Alg.~\ref{algo:dynamic_hs}, in more detail. It inherits many of its aspects from Reiter's HS-Tree, depicted by Alg.~\ref{algo:hs}. 
Hence, we first recapitulate HS-Tree and then focus on the differences to and idiosyncrasies of DynamicHS.

\subsubsection{Reiter's HS-Tree}
\label{sec:Reiters_HS-Tree}
We briefly repeat the functioning of Reiter's HS-Tree (Alg.~\ref{algo:hs}), which computes minimal diagnoses for a DPI $\dpi=\tuple{\mo,\mb,\Tp\cup\Tp',\Tn\cup\Tn'}$ in a sound, complete\footnote{Unlike Reiter \cite{Reiter87}, we assume that only \emph{minimal} conflicts are used as node labels. Thus, the issue pointed out by \cite{greiner1989correction} does not arise.} and best-first way. 

Starting from a priority queue of unlabeled nodes $\Queue$, initially
comprising only an unlabeled root node, $\emptyset$, 
the algorithm continues to remove and label the first ranked node from $\Queue$ (\textsc{getAndDele-teFirst})
until all nodes are labeled ($\Queue = [\,]$) or $\ld$ minimal diagnoses have been computed.
The possible node labels are minimal conflicts (for internal tree nodes) and 
$\mathit{valid}$
as well as 
$\mathit{closed}$
(for leaf nodes). 
All minimal conflicts that have already been computed and used as node labels are stored in the (initially empty) set $\mC_{calc}$. 
Each edge in the constructed tree has a label. For ease of notation (in Alg.~\ref{algo:hs}), 
the set of edge labels along the branch from the root node of the tree to a node $\nd$ is associated with $\nd$, i.e., $\nd$ stores this set of labels. E.g., the node at location $\textcircled{\tiny 12}$ in iteration~1 of Fig.~\ref{fig:hs_example} on page~\pageref{fig:hs_example} is referred to as $\setof{1,2,5}$.
Once the tree has been completed ($\Queue = [\,]$), i.e., all nodes are labeled, the minimal diagnoses 
for $\dpi$ are given exactly by the set of all nodes labeled $\mathit{valid}$; formally: $\sol(\dpi) = \setof{\nd\mid \nd \text{ is labeled }\mathit{valid}}$.

\begin{algorithm}[!tb]
	\scriptsize
	\caption{HS-Tree} \label{algo:hs}
	\begin{algorithmic}[1]
		\Require 
		tuple $\tuple{\dpi, \Tp', \Tn',  \pr, \ld}$ comprising
		\begin{itemize}[noitemsep]
			\item a DPI $\dpi = \langle\mo,\mb,\Tp,\Tn\rangle$
			\item the acquired sets of positive ($\Tp'$) and negative ($\Tn'$) measurements so far 
			\item a function $\pr$ assigning a fault probability to each element in $\mo$
			\item the number $\ld$ of leading minimal diagnoses to be computed 
		\end{itemize}
		\Ensure 
		$\mD$, the set of the $\ld$ (if existent) most probable (as per $\pr$) minimal diagnoses wrt.\ $\langle\mo,\mb,\Tp\cup\Tp',\Tn\cup\Tn'\rangle$
		\vspace{10pt}
		\Procedure{hsTree}{$\dpi, \Tp', \Tn',  \pr, \ld$}
		\State $\mD_{calc}, \mC_{calc} \gets \emptyset$
		\State $\Queue \gets [\emptyset]$							
		\While{$\Queue \neq [\,] \land \left(\;|\mD_{calc}| < \ld\;\right)$}	
		\label{algoline:hs:while}
		\State $\mathsf{node} \gets \Call{getAndDeleteFirst}{\Queue}$	\label{algoline:hs:getfirst}
		\State $\tuple{L,\mathbf{C}} \gets \Call{label}{\dpi, \mathsf{node}, \mathbf{C}_{calc}, \mD_{calc}, \Queue}$\label{algoline:hs:label}
		\State $\mathbf{C}_{calc} \gets \mathbf{C}$    				\label{algoline:hs:update_Ccalc}
		\If{$L = \mathit{valid}$}\label{algoline:hs:L=valid}
		\State $\mD_{calc} \gets \mD_{calc} \cup \setof{\mathsf{node}}$		\label{algoline:hs:update_Dcalc}
		\ElsIf{$L = \mathit{closed}$}\label{algoline:hs:do_nothing}  \Comment{do nothing $\to\;$ $\mathsf{node}$ discarded}
		\Else 	\Comment{$L$ must be a minimal conflict set}
		\For{$e \in L$}						
		\State $\Queue \gets \Call{insertSorted}{ \mathsf{node} \cup \setof{e}, \Queue, \pr}$\label{algoline:hs:generate_nodes}
		\EndFor
		\EndIf
		\EndWhile
		\State \Return $\mD_{calc}$
		\EndProcedure
		\vspace{10pt}
		\Procedure{\textsc{label}}{$\langle\mo,\mb,\Tp,\Tn\rangle,\mathsf{node},\mathbf{C}_{calc},\mD_{calc}, \Queue$}    \label{algoline:hs:procedure_label}
		\For{$\mathsf{nd} \in \mD_{calc}$}		\Comment{(L1)}									\label{algoline:hs:non_min_crit_start}
		\If{$\mathsf{node} \supseteq \mathsf{nd}$}  \Comment{non-minimality}
		\State \Return $\tuple{\mathit{closed},\mathbf{C}_{calc}}$														\label{algoline:hs:non_min_crit_end}
		\EndIf
		\EndFor
		\For{$\mathsf{nd} \in \Queue$}	\Comment{(L2)}																									\label{algoline:hs:duplicate_crit_start}
		\If{$\mathsf{node} = \mathsf{nd}$}  \Comment{remove duplicates}
		\State \Return $\tuple{\mathit{closed},\mathbf{C}_{calc}}$														\label{algoline:hs:duplicate_crit_end}
		\EndIf
		\EndFor
		\For{$\mc \in \mathbf{C}_{calc}$}		\Comment{(L3)}									\label{algoline:hs:reuse_crit_start}
		\If{$\mc \cap \mathsf{node} = \emptyset$}\label{algoline:hs:test_cs_not_hit}  \Comment{reuse conflict $\mc$}
		\State \Return $\tuple{\mc,\mathbf{C}_{calc}}$																\label{algoline:hs:reuse_crit_end}
		\EndIf
		\EndFor
		\State $L\gets \Call{findMinConflict}{\langle\mo\setminus\mathsf{node},\mb,\Tp,\Tn\rangle_\RQ}$	\label{algoline:hs:qx}  \Comment{(L4)}
		\If{$L$ = \text{'no conflict'}}   \Comment{$\mathsf{node}$ is a diagnosis}
		\State \Return $\tuple{\mathit{valid},\mathbf{C}_{calc}}$														\label{algoline:hs:return_valid}
		\Else										\Comment{$L$ is \emph{new} minimal conflict set ($\notin \mathbf{C}_{calc}$)}
		\State $\mathbf{C}_{calc} \gets \mathbf{C}_{calc} \cup \setof{L}$							\label{algoline:hs:add_conflict}  
		\State \Return $\tuple{L,\mathbf{C}_{calc}}$\label{algoline:hs:return_computed_cs}
		\EndIf
		\EndProcedure
	\end{algorithmic}
	\normalsize
\end{algorithm}

To label a node $\nd$, the algorithm calls a labeling function (\textsc{label}) 
which executes the following tests in the given order and returns as soon as a label for $\nd$ has been determined:
\begin{enumerate}[noitemsep,leftmargin=*,label=(L\arabic*)]
	\item \label{enum:hstree:label:non-min} \emph{(non-minimality):} Check if $\nd$ is non-minimal (i.e., whether there is a node $\node$ with label 
	$\mathit{valid}$
	where $\nd \supseteq \node$). If so, $\nd$ is labeled by 
	$\mathit{closed}$.
	\item \label{enum:hstree:label:duplicate} \emph{(duplicate):} Check if $\nd$ is duplicate (i.e., whether $\nd = \node$ for some other $\node$ in $\Queue$). If so, $\nd$ is labeled by 
	$\mathit{closed}$.
	\item \label{enum:hstree:label:conflict_reuse} \emph{(reuse label):} Scan $\mC_{calc}$ for some $\mc$ such that $\nd \cap \mc = \emptyset$. If so, $\nd$ is labeled by $\mc$. 
	\item \label{enum:hstree:label:getMinConflict} \emph{(compute label):} Invoke \textsc{findMinConflict}, a \emph{(sound and complete) minimal conflicts searcher}, e.g., QuickXPlain \cite{junker04,rodler2020qx}, to get a minimal conflict for $\langle\mo\setminus\nd,\mb,\Tp\cup\Tp',\Tn\cup\Tn'\rangle$. If a minimal conflict $\mc$ is output, $\nd$ is labeled by $\mc$.
	Otherwise, if 'no conflict' is returned, then $\nd$ is labeled $\mathit{valid}$.
\end{enumerate}
All nodes labeled by 
$\mathit{closed}$
or 
$\mathit{valid}$
have no successors and are leaf nodes.
For each node $\nd$ labeled by a minimal conflict $L$, 
one outgoing edge is constructed for each element $e\in L$, where this edge is labeled by $e$ and pointing to a newly created unlabeled node $\nd \cup \setof{e}$. 
Each new node is added to $\Queue$ such that $\Queue$'s sorting is preserved (\textsc{insertSorted}).
$\Queue$ might be, e.g., \emph{(i)}~a FIFO queue, entailing that HS-Tree computes diagnoses in minimum-cardinality-first order (\emph{breadth-first search}), or \emph{(ii)}~sorted in descending order by $\pr$, where most probable diagnoses are generated first (\emph{uniform-cost search}; for details see \cite[Sec.~4.6]{Rodler2015phd}). 

Importantly, $\pr$ must in any case be specified in a way the probability for each system component (element of $\mo$) is below $0.5$.\footnote{This condition \cite{Rodler2015phd} is fairly benign as it can be established from any probability model $\pr$ 
by simply choosing an arbitrary fixed $c\in(0,0.5)$ and by setting $\pr_{\mathit{adj}}(\tax) := c \cdot\pr(\tax)$ for all $\tax \in \mo$. Observe that this adjustment does not affect the relative probabilities or the fault probability order of components in that $\pr_{\mathit{adj}}(\tax)/\pr_{\mathit{adj}}(\tax') = k$ whenever $\pr(\tax)/\pr(\tax') = k$.}
In other words, each system component must be more likely to be nominal than to be faulty. This additional assumption is 
necessary for the soundness of HS-Tree, i.e., to guarantee that \emph{only} minimal diagnoses are returned. The crucial point is that, for a queue $\Queue$ ordered by $\pr$, this requirement guarantees that node $\mathsf{n} \in \Queue$ will be processed prior to node $\mathsf{n}' \in \Queue$ whenever $\mathsf{n} \subset \mathsf{n}'$. For more information consider \cite[p.~73 et seq.]{Rodler2015phd}.

Finally, note the \emph{statelessness} of Reiter's HS-Tree, reflected by $\Queue$ initially including \emph{only an unlabeled root node}, and $\mC_{calc}$ being initially \emph{empty}. That is, an HS-Tree is built from scratch in each iteration, every time for different measurement sets $\Tp',\Tn'$.

\subsubsection{Dynamicity of DPI in Sequential Diagnosis}
In the course of the sequential diagnosis process (Alg.~\ref{algo:sequential_diagnosis}), where additional system knowledge is gathered in terms of measurements, 
the DPI is subject to gradual change---it is dynamic. 
At this, each addition of a new discriminating measurement\footnote{A \emph{discriminating measurement} is a measurement at a discriminating measurement point (cf. Sec.~\ref{sec:basics}). That is, adding such a measurement to (the positive or negative measurements of) the DPI causes the invalidation of at least one diagnosis.} to the DPI 
also effectuates a transition of the sets of (minimal) diagnoses and (minimal) conflicts. Whereas this fact is of no concern to a stateless diagnosis computation strategy, it has to be carefully taken into account when engineering a stateful approach. 

\subsubsection{Towards Stateful Hitting Set Computation}
To understand the necessary design decisions to devise a sound and complete stateful hitting set algorithm in the light of the said DPI dynamicity, we next discuss a few more specifics of the conflicts and diagnoses evolution throughout
sequential diagnosis:\footnote{A more formal treatment and proofs for all statements can be found in \cite[Sec.\ 12.4]{Rodler2015phd}.}
\begin{property}\label{property:impact_of_DPI_transition}
Let $\dpi_j = \tuple{\mo,\mb,\Tp,\Tn}$ be a DPI and let $T$ be 
Reiter's HS-Tree for $\dpi_j$ (executed until) producing the 
leading diagnoses $\mD$ where $|\mD|\geq 2$. 
Further, let the measurement $m$ describe the outcome of measuring at a discriminating measurement point computed from $\mD$,
and let $\dpi_{j+1}$ be the DPI resulting from $\dpi_j$ through the addition of $m$ to either $\Tp$ or $\Tn$. Then:
\begin{enumerate}[leftmargin=*]
	\item \label{property:impact_of_DPI_transition:enum:T_must_be_updated} $T$ is not a correct HS-Tree for $\dpi_{j+1}$, i.e., (at least) some node labeled by $\mathit{valid}$ in $T$ is incorrectly labeled.\vspace{2pt} \\
	\emph{That is, to reuse $T$ for $\dpi_{j+1}$, $T$ must be appropriately updated.}
	\item \label{property:impact_of_DPI_transition:enum:diags_can_only_grow} Each $\md \in \sol(\dpi_{j+1})$ is either equal to or a superset of some $\md' \in \sol(\dpi_{j})$.\vspace{2pt} \\ 
	\emph{That is, minimal diagnoses can grow or remain unchanged, but cannot shrink. Consequently, to reuse $T$ for sound and complete minimal diagnosis computation for $\dpi_{j+1}$, existing nodes must never be reduced---either a node is left as is, deleted as a whole, or (prepared to be) extended.}
	\item \label{property:impact_of_DPI_transition:enum:conflicts_can_only_shrink} For all $\mc \in \conf(\dpi_{j})$ there is a $\mc' \in \conf(\dpi_{j+1})$ such that $\mc' \subseteq \mc$.\vspace{2pt} \\
	\emph{That is, existing minimal conflicts can only shrink or remain unaffected, but cannot grow. Hence, priorly computed minimal conflicts (for an old DPI) might not be minimal for the current DPI. In other words, conflict node labels of $T$ can, but do not need to, be correct for $\dpi_{j+1}$.}
	\item \label{property:impact_of_DPI_transition:enum:some_conflict_shrinks_or_new_conflict_arises} (a) There is some $\mc \in \conf(\dpi_{j})$ for which there is some $\mc' \in \conf(\dpi_{j+1})$ with $\mc' \subset \mc$, and/or \\(b) there is some $\mc_{\mathit{new}} \in \conf(\dpi_{j+1})$ where $\mc_{\mathit{new}} \not\subseteq \mc''$ and $\mc_{\mathit{new}} \not\supseteq \mc''$ for all $\mc'' \in \conf(\dpi_{j})$.\vspace{2pt} \\
	\emph{That is, at least one minimal conflict is reduced in size, and/or at least one entirely new minimal conflict (which is not in any subset-relationship with existing ones) arises. Some existing node in $T$ which represents a minimal diagnosis for $\dpi_{j}$ \emph{(a)} can be deleted since it would not be present when using $\mc'$ as node label in $T$ wherever $\mc$ is used, or \emph{(b)} must be extended to constitute a diagnosis for $\dpi_{j+1}$, since it does not hit (has an empty intersection with) the conflict $\mc_{\mathit{new}}$.}
\end{enumerate}
\end{property}
\subsubsection{Major Modifications to Reiter's HS-Tree}
Based on Property~\ref{property:impact_of_DPI_transition}, the following principal amendments to Reiter's HS-Tree are necessary to make it a properly-working stateful diagnosis computation method. We exemplify these modifications by referring to Figs.~\ref{fig:dynhs_example} and \ref{fig:hs_example} (on pages~\pageref{fig:dynhs_example} and \pageref{fig:hs_example}), which are discussed in detail in Example~\ref{ex:algo_description} and which describe sequential diagnosis executions (as per Alg.~\ref{algo:sequential_diagnosis}) of DynamicHS and HS-Tree, respectively, for our example DPI given in Tab.~\ref{tab:example_DPI}. 
\begin{enumerate}[label=\textbf{(Mod\arabic*)}, wide, labelwidth=!, labelindent=0pt]
	\item \label{enum:mod:tree_update} A tree update is executed at the beginning of each DynamicHS execution, where the hitting set tree produced for a previously relevant DPI is adapted to a tree that allows to compute minimal diagnoses for the current DPI in a sound, complete and best-first manner.\vspace{3pt}\\
	\emph{Justification:} The necessity of this update is attested by Property~\ref{property:impact_of_DPI_transition}.\ref{property:impact_of_DPI_transition:enum:T_must_be_updated}. The actions taken in the course of the update are motivated by the remaining bullet points of Property~\ref{property:impact_of_DPI_transition}.\vspace{3pt}\\
	\emph{Example:} Between iterations~1 and 2 in Fig.~\ref{fig:dynhs_example}, DynamicHS runs a tree update, which adapts the tree constructed in iteration~1 for $\dpi_0$ to one that can be reused to tackle $\dpi_1$ in iteration~2. This update effectively involves the removal of the redundant nodes (minimal diagnoses for $\dpi_0$, indicated by a $\checkmark$) numbered $\textcircled{\scriptsize 6}$ and $\textcircled{\scriptsize 8}$, the reduction of the conflicts numbered $\textcircled{\scriptsize 2}$ and $\textcircled{\scriptsize 3}$ to their proper subsets $\tuple{2,4}$ and $\tuple{1,5}$ which are now (for $\dpi_1$) minimal conflicts, and the reinsertion of node $\textcircled{\tiny 10}$ into the queue $\Queue$ (because, after the deletion of the two mentioned redundant nodes, there is no longer a ``witness'' diagnosis testifying the non-minimality of node $\textcircled{\tiny 10}$).

	\item \label{enum:mod:nonmin_diags_and_duplicates_stored} Non-minimal diagnoses (test \ref{enum:hstree:label:non-min} in HS-Tree) and duplicate nodes (test \ref{enum:hstree:label:duplicate}) are stored in collections $\mD_{\supset}$ and $\Queue_{dup}$, respectively, instead of being closed and discarded.\vspace{3pt}\\ 
	\emph{Justification:} Property~\ref{property:impact_of_DPI_transition}.\ref{property:impact_of_DPI_transition:enum:diags_can_only_grow} suggests to store non-minimal diagnoses, as they might constitute (sub-branches of) minimal diagnoses in the next iteration. 
	Property~\ref{property:impact_of_DPI_transition}.\ref{property:impact_of_DPI_transition:enum:some_conflict_shrinks_or_new_conflict_arises}(a) suggests to record all duplicates for completeness of the diagnosis search. Because, some active node $\mathsf{nd}$ representing this duplicate in the current tree could become obsolete due to the shrinkage of some conflict, and the duplicate might be non-obsolete and eligible to turn active and replace $\mathsf{nd}$ in the tree.\vspace{3pt}\\
	\emph{Example:} 
	Node $\textcircled{\scriptsize 4}$ in iteration~1 of Fig.~\ref{fig:dynhs_example} 
	is added to $\Queue_{dup}$ (DynamicHS), whereas the corresponding node 
	$\textcircled{\scriptsize 7}$
	in iteration~1 of Fig.~\ref{fig:hs_example} 
	is not further considered (HS-Tree). Similarly, node $\textcircled{\tiny 10}$ is stored in $\mD_{\supset}$ in case of DynamicHS (Fig.~\ref{fig:dynhs_example}), whereas the same node is simply closed and eliminated in HS-Tree (Fig.~\ref{fig:hs_example}).

	\item \label{enum:mod:nodes_are_lists} Each node $\mathsf{nd}$ is no longer identified as the \emph{set} of edge labels from the root to $\mathsf{nd}$, but as an \emph{ordered list} of these edge labels. In addition, an ordered list of the conflicts used to label internal nodes along the branch from the root to $\mathsf{nd}$ is stored in terms of $\mathsf{nd.cs}$.\vspace{3pt}\\
	\emph{Justification:} This modification is required due to Property~\ref{property:impact_of_DPI_transition}.\ref{property:impact_of_DPI_transition:enum:some_conflict_shrinks_or_new_conflict_arises}. The necessity of storing both the edge labels and the internal node labels as lists 
	has its reason in the replacement of obsolete tree branches by stored duplicates. In fact, any duplicate used to replace a node must correspond to the same \emph{set} of edge labels as the replaced node. However, in the branch of the obsolete node, some node-labeling conflict has been reduced to make the node redundant, whereas for a suitable duplicate replacing the node, no redundancy-causing changes to conflicts along its branch have occurred. By storing only sets of edge labels, we could not differentiate between the redundant and the non-redundant nodes.\vspace{3pt}\\
	\emph{Example:} For the node $\mathsf{nd}$ at location $\textcircled{\scriptsize 9}$ in iteration~1 of Fig.~\ref{fig:dynhs_example}, we have $\mathsf{nd} = [2,5]$ and $\mathsf{nd.cs} = [\tuple{1,2},\tuple{1,3,5}]$. 
	%
	%
	\item \label{enum:mod:conflict_minimality_check} Before reusing a conflict $\mc$ to label a node (labeling test \ref{enum:hstree:label:conflict_reuse} in HS-Tree), a minimality check for $\mc$ is performed. If a proper subset $X$ of $\mc$ is identified as a conflict for the current DPI in the course of this check, $X$ is used to prune obsolete tree branches, to replace node-labeling conflicts that are supersets of $X$ by $X$, and to update $\mC_{calc}$ in that $X$ is added and all of its supersets are deleted.\vspace{3pt}\\
	\emph{Justification:} By Property~\ref{property:impact_of_DPI_transition}.\ref{property:impact_of_DPI_transition:enum:conflicts_can_only_shrink}, stored conflicts in $\mC_{calc}$ and those appearing as labels in the existing tree (elements of the lists $\mathsf{nd.cs}$ for the different nodes $\mathsf{nd}$) might not be minimal for the current DPI (since they might have been computed for a prior DPI). This minimality check helps both to prune the tree (reduction of the number of nodes) and to make sure that any extension to the tree uses only minimal conflicts wrt.\ the current DPI as internal node labels (avoidance of the introduction of unnecessary new edges).\vspace{3pt}\\ 
	\emph{Example:} Let us assume that a new node should be labeled in iteration~2 of DynamicHS (Fig.~\ref{fig:dynhs_example}) and $\mc:=\tuple{3,4,5}$ is a conflict that satisfies the reuse-test \ref{enum:hstree:label:conflict_reuse}. Then the performed minimality check would return $X:=\tuple{4,5}$ 
	as a subset of $\mc$ (since $\mc$ is a minimal conflict for $\dpi_0$, but a non-minimal one for $\dpi_1$, cf.\ Tab.~\ref{tab:diag_conf_evolution_example}).
	
	\item \label{enum:mod:storing_the_state} The current state of DynamicHS (in terms of the so-far produced hitting set tree) is stored over all its iterations executed throughout the sequential diagnosis process (Alg.~\ref{algo:sequential_diagnosis}) by means of the tuple $\mathsf{state}$.\vspace{3pt}\\
	\emph{Justification:} Statefulness of DynamicHS.\vspace{3pt}\\
	\emph{Example:} Consider iteration~2 for both HS-Tree (Fig.~\ref{fig:hs_example}) and DynamicHS (Fig.~\ref{fig:dynhs_example}). In the latter case, for the new DPI $\dpi_1$, the tree built in iteration~1 is reused, whereas in the former a new one is built. One implication of this is, e.g., that DynamicHS does not need to (fully) recompute the conflicts $\tuple{1,2}$, $\tuple{2,4}$ and $\tuple{1,5}$.

\end{enumerate}
\begin{algorithm}[!htbp]
	\scriptsize
	\caption{DynamicHS} \label{algo:dynamic_hs}
	\begin{algorithmic}[1]
		\Require 
		\textcolor{white}{.}
		tuple $\tuple{\dpi, \Tp', \Tn',  \pr, \ld, \mD_{\checkmark}, \mD_{\times},  \state}$ comprising
		\begin{itemize}[noitemsep]
			\item a DPI $\dpi = \langle\mo,\mb,\Tp,\Tn\rangle$
			\item the acquired sets of positive ($\Tp'$) and negative ($\Tn'$) measurements so far 
			\item a function $\pr$ assigning a fault probability to each element in $\mo$
			\item the number $\ld$ of leading minimal diagnoses to be computed 
			\item the set $\mD_{\checkmark}$ of all elements of the set $\mD_{calc}$ (returned by the previous \textsc{dynamicHS} run) which are minimal diagnoses wrt.\ $\langle\mo,\mb,\Tp\cup\Tp',\Tn\cup\Tn'\rangle$
			\item the set $\mD_{\times}$ of all elements of the set $\mD_{calc}$ (returned by the previous \textsc{dynamicHS} run) which are no diagnoses wrt.\ $\langle\mo,\mb,\Tp\cup\Tp',\Tn\cup\Tn'\rangle$
			\item $\state = \tuple{\Queue, \Queue_{dup}, \mD_{\supset}, \mC_{\mathit{calc}}}$ where
			\begin{itemize}[noitemsep]
				\item $\Queue$ is the current queue of unlabeled nodes,
				\item $\Queue_{dup}$ is the current queue of duplicate nodes,
				\item $\mD_{\supset}$ is the current set of computed non-minimal diagnoses,
				\item $\mC_{calc}$ is the current set of computed minimal conflict sets.
			\end{itemize}
		\end{itemize}

		\Ensure 
		tuple $\tuple{\mD,\state}$ where
		\begin{itemize}[noitemsep]
			\item $\mD$ is the set of the $\ld$ (if existent) most probable (as per $\pr$) minimal diagnoses wrt.\
			$\langle\mo,\mb,\Tp\cup\Tp',\Tn\cup\Tn'\rangle$
			\item $\state$ is as described above
		\end{itemize}  
		
		\vspace{3pt}
		\Procedure{dynamicHS}{$\dpi, \Tp', \Tn',  \pr, \ld, \mD_{\checkmark}, \mD_{\times}, \state $}
		\State $\mD_{calc} \gets \emptyset$\label{algoline:dyn:Dcalc_gets_emptyset}
		\State $\state \gets \Call{updateTree}{\dpi, \Tp', \Tn',  \pr, \mD_{\checkmark}, \mD_{\times}, \state}$\label{algoline:dyn:update_tree}     
		\While{$\Queue \neq [\,] \land \left(\;|\mD_{calc}| < \ld\;\right)$}	
		\label{algoline:dyn:while}																								\State $\mathsf{node} \gets \Call{getAndDeleteFirst}{\Queue}$\label{algoline:dyn:get_first}			  \Comment{$\mathsf{node}$ is processed}		
		\If{$\mathsf{node} \in \mD_{\checkmark}$}\label{algoline:dyn:node_in_Dcheckmark}  \Comment{$\mD_{\checkmark}$ includes only min...}
		\State $L \gets \mathit{valid}$\label{algoline:dyn:set_L_to_valid}      \Comment{...diags wrt.\ current DPI}         
		\Else
		\State $\tuple{L,\state} \gets \Call{dLabel}{\dpi, \Tp', \Tn', \pr, \mathsf{node}, \mD_{calc}, \state}$\label{algoline:dyn:dlabel}
		\EndIf
		\If{$L = \mathit{valid}$}\label{algoline:dyn:if_L_valid}  
		\State $\mD_{calc} \gets \mD_{calc} \cup \setof{\mathsf{node}}$\label{algoline:dyn:add_to_Dcalc}      \Comment{$\mathsf{node}$ is a min diag wrt.\ current DPI}
		\ElsIf{$L = \mathit{nonmin}$}								
		\State $\mD_{\supset} \gets \mD_{\supset} \cup \setof{\mathsf{node}}$ \label{algoline:dyn:add_to_Dsupset}  \Comment{$\mathsf{node}$ is a non-min diag wrt.\ current DPI}
		\Else 	
		\For{$e \in L$}\label{algoline:dyn:for_e_in_L}            \Comment{$L$ is a min conflict  wrt.\ current DPI}
		\State $\mathsf{node}_e \gets \Call{append}{\mathsf{node},e}$ \label{algoline:dyn:add_ax_to_node}       \Comment{$\mathsf{node}_e$ is generated}   
		\State $\mathsf{node}_{e}.\mathsf{cs} \gets \Call{append}{\mathsf{node.cs},L}$ \label{algoline:dyn:add_cs_to_node.cs}
		\If{$\mathsf{node}_e \in \Queue \lor \mathsf{node}_e \in \mD_{\supset}$}   \label{algoline:dyn:check_node_already_in_Q}                      \Comment{$\mathsf{node}_e$ is (\emph{set-equal}) duplicate of some node in $\Queue$ or $\mD_{\supset}$}
		\State $\Queue_{dup} \gets \Call{insertSorted}{ \mathsf{node}_e, \Queue_{dup}, \mathit{card}, <}$ 
		\label{algoline:dyn:add_to_Qdup}
		\Else
		\State $\Queue \gets \Call{insertSorted}{ \mathsf{node}_e, \Queue, \pr, >}$\label{algoline:dyn:add_to_Queue} 
		\EndIf
		\EndFor
		\EndIf
		\EndWhile
		\State \Return $\tuple{\mD_{calc}, \state}$ \label{algoline:dyn:return}
		\EndProcedure
		\vspace{6pt}
		\Procedure{\textsc{dLabel}}{$\langle\mo,\mb,\Tp,\Tn\rangle,\Tp', \Tn', \pr, \mathsf{node}, \mD_{calc}, \state$} 
		\For{$\mathsf{nd} \in \mD_{calc}$}\label{algoline:dlabel:non-min_crit_start}
		\If{$\mathsf{node} \supset \mathsf{nd}$}    \Comment{$\mathsf{node}$ is a non-min diag}
		\State \Return $\tuple{\mathit{nonmin},\state}$
		\EndIf
		\EndFor\label{algoline:dlabel:non-min_crit_end}
		\For{$\mc \in \mathbf{C}_{calc}$}\label{algoline:dlabel:reuse_start} \Comment{$\mC_{calc}$ includes conflicts wrt.\ current DPI}
		\If{$\mc \cap \mathsf{node} = \emptyset$}\label{algoline:dlabel:if_C_cap_node=emptyset}    \Comment{reuse (a subset of) $\mc$ to label $\mathsf{node}$} 
		\State $X \gets \Call{findMinConflict}{\langle\mc,\mb,\Tp\cup\Tp',\Tn\cup\Tn'\rangle}$\label{algoline:dlabel:qx_1} 
		\If{$X = \mc$} \label{algoline:dlabel:if_X=C}
		\State \Return $\tuple{\mc,\state}$\label{algoline:dlabel:return_C}
		\Else      \Comment{$X \subset \mc$} \label{algoline:dlabel:else}
		\State $\tuple{\state,\mD_{calc}} \gets \Call{prune}{X,\{\state,\mD_{calc}\}}$ \label{algoline:dlabel:prune}
		\State \Return $\tuple{X,\state}$ \label{algoline:dlabel:return_X}
		\EndIf
		\EndIf
		\EndFor\label{algoline:dlabel:reuse_end}
		\State $L\gets \Call{findMinConflict}{\langle\mo\setminus\mathsf{node},\mb,\Tp\cup\Tp',\Tn\cup\Tn'\rangle}$\label{algoline:dlabel:qx_2} 
		\If{$L$ = \text{'no conflict'}}						\Comment{$\mathsf{node}$ is a diag}
		\State \Return $\tuple{\mathit{valid},\state}$\label{algoline:dlabel:return_valid}
		\Else						\Comment{$L$ is a \emph{new} min conflict ($\notin \mathbf{C}_{calc}$)}
		\State $\mathbf{C}_{calc} \gets \mathbf{C}_{calc} \cup \setof{L}$\label{algoline:dlabel:add_new_cs}
		\State \Return $\tuple{L,\state}$\label{algoline:dlabel:return_new_cs}
		\EndIf
		\EndProcedure
		\algstore{myalg}
	\end{algorithmic}
	\end{algorithm}

		\begin{algorithm}[!htbp]
		\scriptsize
		\caption*{\textbf{Algorithm \ref{algo:dynamic_hs}} DynamicHS (continued)}
		\begin{algorithmic}[1]
		\algrestore{myalg}
		\Procedure{\textsc{updateTree}}{$\dpi, \Tp', \Tn',  \pr, \mD_{\checkmark}, \mD_{\times}, \state$}
		\For{$\mathsf{nd} \in \mD_{\times}$} \label{algoline:update:process_Dtimes_start}
		\Comment{search for redundant nodes among invalidated diags}
		\If{$\Call{redundant}{\mathsf{nd},\dpi}$}  \label{algoline:update:call_redundant_function}
		\State $\tuple{\state,\mD_{\checkmark},\mD_{\times}} \gets \Call{prune}{X,\{\state,\mD_{\checkmark},\mD_{\times}\}}$ \label{algoline:update:prune}
		\EndIf
		\EndFor
		\For{$\mathsf{nd} \in \mD_{\times}$}\label{algoline:update:reinsert_D_of_Dx_to_Q}
		\Comment{add all (non-pruned) nodes in $\mD_{\times}$ to $\Queue$}
		\State $\Queue \gets \Call{insertSorted}{ \mathsf{nd}, \Queue, \pr, >}$\label{algoline:update:insert_sorted_0}
		\State $\mD_{\times} \gets \mD_{\times} \setminus \setof{\mathsf{nd}}$ \label{algoline:update:delete_from_Dtimes}
		\EndFor \label{algoline:update:process_Dtimes_end}
		\For{$\mathsf{nd} \in \mD_{\supset}$}\label{algoline:update:process_Dsupset_start} \Comment{add all (non-pruned) nodes in $\mD_{\supset}$ to $\Queue$, which...}
		\State $\mathit{nonmin} \gets \false$ \Comment{...are no longer supersets of any diag in $\mD_{\checkmark}$}
		\For{$\mathsf{nd}' \in \mD_{\checkmark}$}
		\If{$\mathsf{nd} \supset \mathsf{nd}'$}    
		\State $\mathit{nonmin} \gets \true$
		\State \textbf{break} 
		\EndIf
		\EndFor
		\If{$\mathit{nonmin} = \false$}
		\State $\Queue \gets \Call{insertSorted}{ \mathsf{nd}, \Queue, \pr, >}$\label{algoline:update:insert_sorted_0.5}
		\State $\mD_{\supset} \gets \mD_{\supset} \setminus \setof{\mathsf{nd}}$\label{algoline:update:delete_from_Dsupset}
		\EndIf
		\EndFor \label{algoline:update:process_Dsupset_end}
		\For{$\md \in \mD_{\checkmark}$}\label{algoline:update:process_Dcheckmark_start}  \Comment{add known min diags in $\mD_{\checkmark}$ to $\Queue$ to find diags...}
		\State $\Queue \gets \Call{insertSorted}{ \md, \Queue, \pr, >}$\label{algoline:update:insert_sorted_1}
		\Comment{...in best-first order (as per $\pr$)}
		\EndFor \label{algoline:update:process_Dcheckmark_end}
		\State \Return $\state$
		\EndProcedure	
	\end{algorithmic}
	\normalsize
\end{algorithm}

\subsubsection{DynamicHS: Algorithm Walkthrough}
\label{sec:algo_walkthrough}
We now explicate the workings of DynamicHS, depicted by Alg.~\ref{algo:dynamic_hs}.

Like HS-Tree, DynamicHS is basically processing a priority queue $\Queue$ of nodes (while-loop; lines~\ref{algoline:dyn:while} and \ref{algoline:dyn:get_first}). That is, in each iteration of the while-loop, the top-ranked node $\mathsf{node}$ is removed from $\Queue$ to be labeled (\textsc{getAndDeleteFirst}).    
Before calling the labeling function (\textsc{dLabel}), however, the algorithm checks if $\mathsf{node}$ is among the already computed minimal diagnoses $\mD_{\checkmark}$ from the previous iteration which are consistent with all measurements in $\Tp' \cup \Tn'$ (line~\ref{algoline:dyn:node_in_Dcheckmark}). If so, the node is directly labeled $\mathit{valid}$ (line~\ref{algoline:dyn:set_L_to_valid}). Otherwise the \textsc{dLabel} function is invoked to compute a label for $\mathsf{node}$ (line~\ref{algoline:dyn:dlabel}).\vspace{5pt}

\textsc{dLabel}: First, the non-minimality check is performed, just as done in HS-Tree, see \ref{enum:hstree:label:non-min} above (lines~\ref{algoline:dlabel:non-min_crit_start}--\ref{algoline:dlabel:non-min_crit_end}).
If the check is negative, a conflict-reuse check is carried out next (lines~\ref{algoline:dlabel:reuse_start}--\ref{algoline:dlabel:reuse_end}). Note that the duplicate check \ref{enum:hstree:label:duplicate}, which is done at this stage in HS-Tree, is obsolete since no duplicate nodes 
can ever be elements of $\Queue$. This is evaded by directly making the duplicate check after new nodes have been constructed and are to be added to $\Queue$ (see lines \ref{algoline:dyn:check_node_already_in_Q}
 and \ref{algoline:dyn:add_to_Qdup}).
The conflict-reuse check starts equally as in HS-Tree. However, if a suitable conflict $\mc$ for reuse is found in the set of so-far computed conflicts $\mC_{calc}$ (line~\ref{algoline:dlabel:if_C_cap_node=emptyset}), then \textsc{findMinConflict} (cf.\ \ref{enum:hstree:label:getMinConflict} above) is employed to check the minimality of the conflict wrt.\ the \emph{current} DPI (line~\ref{algoline:dlabel:qx_1}). If a proper subset $X$ of $\mc$ is found which is a conflict wrt.\ the current DPI (line~\ref{algoline:dlabel:else}), 
then $X$ is used to prune the current hitting set tree (line~\ref{algoline:dlabel:prune}) using the \textsc{prune} function (see below). Finally, depending on the minimality check, either the conflict $\mc$ from $\mC_{calc}$, if minimal, or the computed subset $X$ of $\mc$ is used to label $\mathsf{node}$ (lines~\ref{algoline:dlabel:return_C} and \ref{algoline:dlabel:return_X}).
In case no conflict is eligible for being reused, \textsc{findMinConflict} is called (line~\ref{algoline:dlabel:qx_2}) to test if a minimal conflict (wrt.\ the current DPI) to label $\mathsf{node}$ exists at all, or if $\mathsf{node}$ already corresponds to a (minimal) diagnosis wrt.\ the current DPI (again, this part, i.e., lines~\ref{algoline:dlabel:qx_2}--\ref{algoline:dlabel:return_new_cs}, is equal to HS-Tree, cf.\ \ref{enum:hstree:label:getMinConflict} above). As a last point, note that \textsc{dLabel} gets and returns the tuple $\mathsf{state}$ as an argument. This tuple includes collections of nodes storing the current state of the hitting set tree maintained by DynamicHS. The reason for passing $\mathsf{state}$ to \textsc{dLabel} is that the pruning actions potentially performed in the course of the reuse-check might modify $\mathsf{state}$ (while all other actions executed during \textsc{dLabel} can never alter $\mathsf{state}$).\vspace{5pt}

The output of \textsc{dLabel} is then processed by DynamicHS (lines~\ref{algoline:dyn:if_L_valid}--\ref{algoline:dyn:return}) in order to assign $\mathsf{node}$ to its appropriate node collection (note that the labels of processed nodes are implicitly stored by assigning the nodes to corresponding collections). More specifically,
$\mathsf{node}$ is assigned to $\mD_{calc}$ if the returned label is $\mathit{valid}$ (line~\ref{algoline:dyn:add_to_Dcalc}), and to $\mD_{\supset}$ if the label is $\mathit{nonmin}$ (line~\ref{algoline:dyn:add_to_Dsupset}). If, on the other hand, the label is a minimal conflict $L$, then a child node $\mathsf{node}_e$ is created for each element $e \in L$ and assigned to either $\Queue_{dup}$ (line~\ref{algoline:dyn:add_to_Qdup}) if there is a node in $\Queue$ or in $\mD_{\supset}$ that is \emph{set}-equal to $\mathsf{node}_e$, or to $\Queue$ otherwise (line~\ref{algoline:dyn:add_to_Queue}). At this, $\mathsf{node}_e$ is constructed from $\mathsf{node}$ via the \textsc{append} function (lines \ref{algoline:dyn:add_ax_to_node} and \ref{algoline:dyn:add_cs_to_node.cs}), which appends the element $e$ to the list $\mathsf{node}$, and the conflict $L$ to the list $\mathsf{node.cs}$ (cf.\ \ref{enum:mod:nodes_are_lists} above). 
Note that both $\Queue$ and $\Queue_{dup}$ are priority queues. $\Queue$ is ordered by descending node probability as per the given measure $\pr$, and $\Queue_{dup}$ by ascending node cardinality, i.e., nodes with fewer edge labels along their branch from the root node are prioritized. The reason for this will become fully evident when we discuss the \textsc{prune} function (see below).

The described processing of nodes from $\Queue$ is successively continued until one of the stop criteria applies (while-loop, line~\ref{algoline:dyn:while}). That is, the algorithm returns the set of computed minimal diagnoses $\mD_{calc}$ along with the current tree state $\mathsf{state}$ when either all nodes have been processed ($\Queue = [\,]$), i.e., the tree has been built to its entirety, or when the stipulated number $\ld$ of leading diagnoses have been found ($|\mD_{calc}| = \ld$).\vspace{5pt}

\textsc{updateTree}: The idea here is to adapt the current tree in a way it constitutes a basis for finding all and only minimal diagnoses in highest-probability-first order for the \emph{current} DPI. 
As discussed above, the minimal diagnoses can only remain the same or be augmented in size (Property~\ref{property:impact_of_DPI_transition}.\ref{property:impact_of_DPI_transition:enum:diags_can_only_grow}), whereas for existing conflicts it is the case that they can only remain the same or shrink in size (Property~\ref{property:impact_of_DPI_transition}.\ref{property:impact_of_DPI_transition:enum:conflicts_can_only_shrink}), or new conflicts can arise (Property~\ref{property:impact_of_DPI_transition}.\ref{property:impact_of_DPI_transition:enum:some_conflict_shrinks_or_new_conflict_arises}). 
The principle followed by \textsc{updateTree} is to search for non-minimal conflicts to be updated, and tree branches to be pruned, among the nodes that corresponded to minimal diagnoses for the previous DPI, but have been invalidated by the latest added measurement (the elements of $\mD_{\times}$, cf.\ line~\ref{algoline:inter_onto_debug:partition_D_into_Dcheckmark_and_Dtimes} in Alg.~\ref{algo:sequential_diagnosis}).

\label{def:redundancy} Regarding the pruning of tree branches, we call a node $\mathsf{nd}$ \emph{redundant} (wrt.\ a DPI $\dpi$) iff there is some index $j$ and a minimal conflict $X$ wrt.\ $\dpi$ such that the conflict $\mathsf{nd.cs}[j] \supset X$ and the element $\mathsf{nd}[j] \in \mathsf{nd.cs}[j] \setminus X$.\footnote{
By $\mathsf{nd}[j]$ and $\mathsf{nd.cs}[j]$, respectively, we refer to the $j$-th element of the ordered lists $\mathsf{nd}$ and $\mathsf{nd.cs}$ (cf.\ \ref{enum:mod:nodes_are_lists} above), i.e., to the $j$-th edge label and to the $j$-th conflict along node $\mathsf{nd}$'s branch starting from the root.} Moreover, we call $X$ a \emph{witness of redundancy for $\mathsf{nd}$} (wrt.\ $\dpi$).
In simple words, $\mathsf{nd}$ is redundant iff the branch from the root node to $\mathsf{nd}$ would not exist given that the (current) minimal conflict $X$ had been used instead of the (old, formerly minimal, but by now) non-minimal conflict $\mathsf{nd.cs}[j]$.

If such a redundant node is detected among the elements of $\mD_{\times}$ (function \textsc{redundant}, drawing on call(s) to \textsc{findMinConflict}; explained in detail later), then the \textsc{prune} function (see later) is called given the witness of redundancy of the redundant node as an argument (lines \ref{algoline:update:process_Dtimes_start}--\ref{algoline:update:prune}). After each node in $\mD_{\times}$ has been processed,\footnote{Note that $\mD_{\times}$ itself might change during the execution of the tree pruning (line~\ref{algoline:update:prune}) in that redundant nodes might be deleted from it and corresponding replacement nodes (constructed) from the stored duplicates might be added to it. Thus, the semantics of the for-loop in line~\ref{algoline:update:process_Dtimes_start} is that it stops iff each node in $\mD_{\times}$ \emph{at the present time} has been processed.} the remaining nodes in $\mD_{\times}$ (those that are non-redundant and thus have not been pruned) are re-added to $\Queue$ in prioritized order according to $\pr$ (lines \ref{algoline:update:reinsert_D_of_Dx_to_Q}--\ref{algoline:update:delete_from_Dtimes}), as accomplished by the function \textsc{insertSorted}. 
Likewise, all non-pruned nodes in $\mD_{\supset}$ (note that pruning always considers all collections $\Queue_{dup}$, $\Queue$, $\mD_{\checkmark}$, $\mD_{\times}$ and $\mD_{\supset}$) which are no longer supersets of any \emph{known} minimal diagnosis, are added to $\Queue$ again (lines \ref{algoline:update:process_Dsupset_start}--\ref{algoline:update:delete_from_Dsupset}).
Finally, the known minimal diagnoses that have been returned by the previous execution of DynamicHS and are consistent with the latest added measurement (the elements of $\mD_{\checkmark}$), are put back to the ordered queue $\Queue$. The justification for this step is given by the fact that, for the current DPI, there might be ``new'' minimal diagnoses that are more probable than the ones known from the previous iteration. Omitting this step therefore would (generally) compromise the best-first property of the hitting set computation.\vspace{5pt} 

\textsc{prune}: Using its first argument $X$, the function runs through the node collections $\Queue_{dup}$, $\Queue$, $\mD_{\supset}$ 
and $\mD_{calc}$ 
when called in line~\ref{algoline:dlabel:prune}, and through $\Queue_{dup}$, $\Queue$, $\mD_{\supset}$, $\mD_{\times}$ and  $\mD_{\checkmark}$ when called in line~\ref{algoline:update:prune} (cf.\ second argument), and 
\begin{itemize}[noitemsep,leftmargin=*]
	\item \emph{(relabeling of old conflicts)} replaces all labels $\mathsf{nd.cs}[i]$ which are proper supersets of $X$ by $X$ for all nodes $\mathsf{nd}$ and for all $i = 1,\dots,|\mathsf{nd}|$, and
	\item \emph{(deletion of redundant nodes)} deletes each redundant node $\mathsf{nd}$ for which $X$ is a witness of redundancy, and 
	\item \emph{(potential replacement of deleted nodes)} for each of the deleted nodes $\mathsf{nd}$, if available, uses a suitable (non-redundant) node $\mathsf{nd}'$ (constructed) from the elements of $\Queue_{dup}$ to replace $\mathsf{nd}$ by $\mathsf{nd}'$.
\end{itemize}
A node $\mathsf{nd}'$ qualifies as a \emph{replacement node for $\mathsf{nd}$} iff $\mathsf{nd}'$ is non-redundant and has the same \emph{set} of edge labels along its branch from the root node as $\mathsf{nd}$, 
i.e.,  
iff $\mathsf{nd}$ is set-equal (not equal in terms of the list of edge labels) to $\mathsf{nd}'$. This node replacement is necessary from the point of view of completeness (cf. \cite{greiner1989correction}).
%
Importantly, $\Queue_{dup}$ must be pruned before the other collections of nodes ($\Queue$, $\mD_{\supset}$, $\mD_{\times}$, $\mD_{\checkmark}$, $\mD_{calc}$) are pruned, to guarantee that all nodes in $\Queue_{dup}$ represent possible \emph{non-redundant} replacement nodes when it comes to pruning nodes from these collections. 

Additionally, the argument $X$ is used to update the conflicts stored for reuse (set $\mC_{calc}$). The principle is to run through $\mC_{calc}$ once, while deleting all proper supersets of $X$, and to finally add $X$ to $\mC_{calc}$.

\begin{figure*}
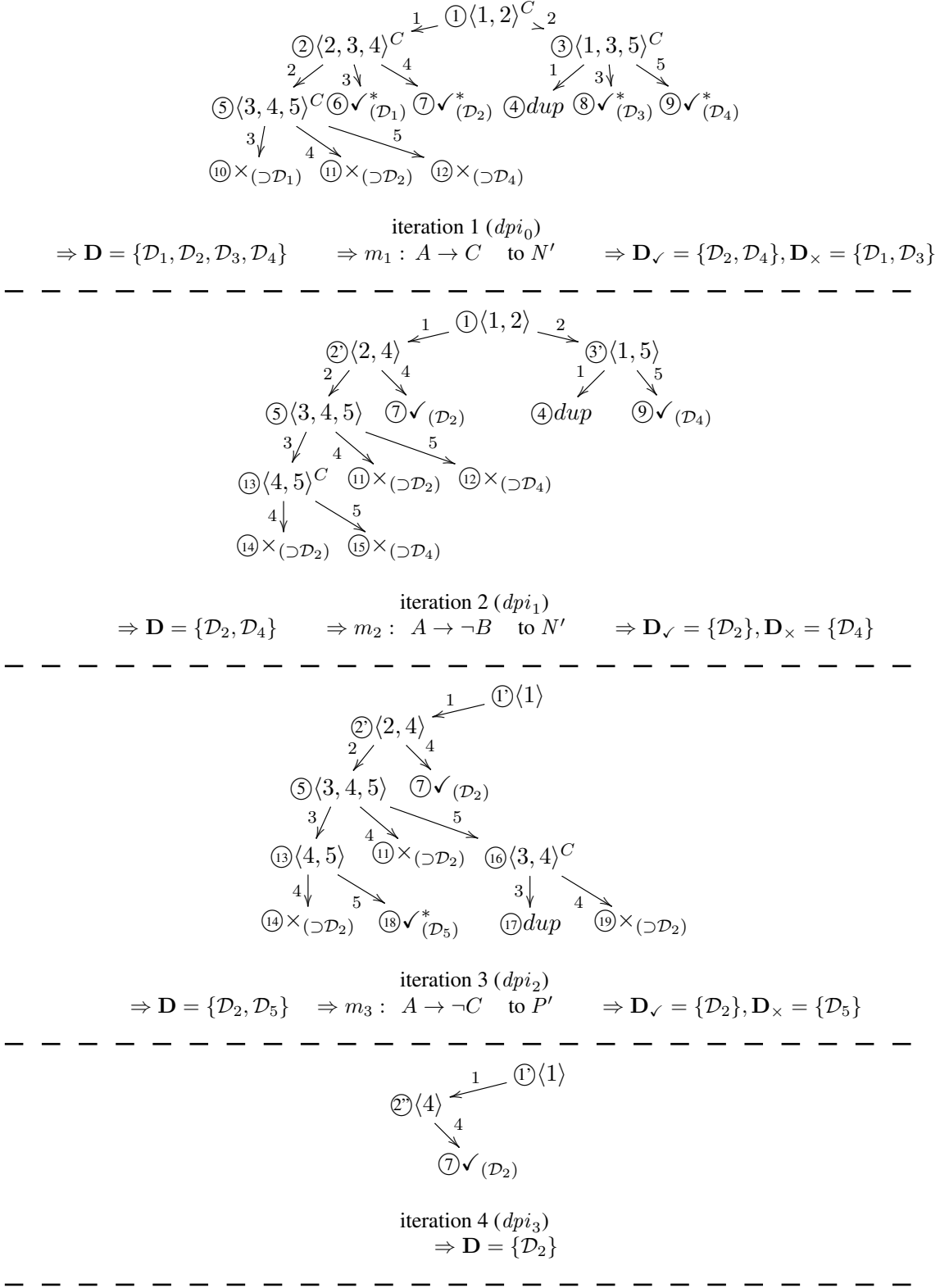

	\centering
		\[\xygraph{
			!{<0cm,0cm>;<1cm,0cm>:<0cm,-1cm>::}
			!{(0.1,0) }*+{ \textcircled{\scriptsize 1}\langle1,2\rangle^{C} }="n1"
			!{(-2.2,.5) }*+{ \textcircled{\scriptsize 2}\langle2,3,4\rangle^{C} }="n2"
			"n1":"n2"_{1}
			!{(2,.5) }*+{ \textcircled{\scriptsize 3}\langle1,3,5\rangle^{C} }="n3"
			"n1":"n3"^{2}
			!{(-3.5,1.5) }*+{ \textcircled{\scriptsize 5}\langle3,4,5\rangle^{C} }="n4"
			"n2":"n4"_(0.6){2}
			!{(-1.9,1.5) }*+{ \textcircled{\scriptsize 6}\checkmark_{(\md_1)}^* }="n5"
			"n2":"n5"_{3}
			!{(-0.5,1.5) }*+{ \textcircled{\scriptsize 7}\checkmark_{(\md_2)}^* }="n6"
			"n2":"n6"^{4}
			!{(0.8,1.5) }*+{ \textcircled{\scriptsize 4} dup }="n7"
			"n3":"n7"_(0.6){1}
			!{(2.1,1.5) }*+{ \textcircled{\scriptsize 8}\checkmark_{(\md_3)}^* }="n8"
			"n3":"n8"_{3}
			!{(3.5,1.5) }*+{ \textcircled{\scriptsize 9}\checkmark_{(\md_4)}^* }="n9"
			"n3":"n9"^{5}
			!{(-3.7,2.6) }*+{ \textcircled{\tiny 10} \XSup{\md_1} }="n10"
			"n4":"n10"_{3}
			!{(-1.9,2.6) }*+{ \textcircled{\tiny 11} \XSup{\md_2} }="n11"
			"n4":"n11"_{4}
			!{(-0.1,2.6) }*+{ \textcircled{\tiny 12} \XSup{\md_4} }="n12"
			"n4":"n12"^(0.6){5}
		}\]
		\begin{center}
			\small 
			\hspace{-10pt}
			iteration 1 ($\dpi_0$)\\
			$\qquad\Rightarrow \mD = \{\md_1,\md_2,\md_3,\md_4\}$ $\qquad\Rightarrow m_1:\, A\to C\phantom{\lnot}\;$ to $\Tn'$
			$\qquad\Rightarrow \mD_{\checkmark} = \{\md_2,\md_4\}, \mD_{\times} = \{\md_1,\md_3\}$
		\end{center}
		\vspace{-5pt}
		
		\hdashrule[0.5ex]{\columnwidth}{1pt}{3mm}
		\vspace{-38pt} 
	
		\[\xygraph{
			!{<0cm,0cm>;<1cm,0cm>:<0cm,-1cm>::}
			!{(-2.1,0) }*+{ \textcircled{\scriptsize 1}\langle 1,2\rangle }="n13"
			!{(-4.2,0.5) }*+{ \textcircled{\scriptsize 2'}\langle 2,4\rangle }="n14"
			"n13":"n14"_{1}
			!{(0,0.5) }*+{ \textcircled{\scriptsize 3'}\langle 1,5\rangle }="n15"
			"n13":"n15"^{2}
			!{(-5,1.5) }*+{ \textcircled{\scriptsize 5}\langle 3,4,5\rangle }="n16"
			"n14":"n16"_{2}
			!{(-3.2,1.5) }*+{ \textcircled{\scriptsize 7}\checkmark_{(\md_2)} }="n17"
			"n14":"n17"^{4}
			!{(-1,1.5) }*+{ \textcircled{\scriptsize 4} dup }="n18"
			"n15":"n18"_{1}
			!{(0.8,1.5) }*+{ \textcircled{\scriptsize 9} \checkmark_{(\md_4)} }="n19"
			"n15":"n19"^{5}
			!{(-5.5,2.6) }*+{ \textcircled{\tiny 13} \langle 4,5\rangle^C }="n19"
			"n16":"n19"_{3}
			!{(-3.7,2.6) }*+{ \textcircled{\tiny 11} \XSup{\md_2} }="n20"
			"n16":"n20"_(0.4){4}
			!{(-1.9,2.6) }*+{ \textcircled{\tiny 12} \XSup{\md_4} }="n21"
			"n16":"n21"^(0.6){5}
			!{(-5.5,3.7) }*+{ \textcircled{\tiny 14} \XSup{\md_2} }="n22"
			"n19":"n22"_{4}
			!{(-3.7,3.7) }*+{ \textcircled{\tiny 15} \XSup{\md_4} }="n23"
			"n19":"n23"^(0.6){5}
		}\]
		\begin{center}
			\small 
			iteration 2 ($\dpi_1$)\\
			$\qquad\Rightarrow \mD = \{\md_2,\md_4\}$ $\qquad\Rightarrow m_2:\;\, A\to\lnot B\phantom{\lnot}\;$ to $\Tn'$
			$\qquad\Rightarrow \mD_{\checkmark} = \{\md_2\}, \mD_{\times} = \{\md_4\}$
		\end{center}
		\vspace{-5pt}
		
		\hdashrule[0.5ex]{\columnwidth}{1pt}{3mm}
		\vspace{-38pt} 

		\[\xygraph{
			!{<0cm,0cm>;<1cm,0cm>:<0cm,-1cm>::}
			!{(-2.1,0) }*+{ \textcircled{\scriptsize 1'}\langle1\rangle }="n24"
			!{(-4.2,.5) }*+{ \textcircled{\scriptsize 2'}\langle2,4\rangle }="n25"
			"n24":"n25"_{1}
			!{(-5,1.5) }*+{ \textcircled{\scriptsize 5}\langle3,4,5\rangle }="n26"
			"n25":"n26"_{2}
			!{(-3.2,1.5) }*+{ \textcircled{\scriptsize 7}\checkmark_{(\md_2)} }="n27"
			"n25":"n27"^{4}
			!{(-5.5,2.6) }*+{ \textcircled{\tiny 13}\langle4,5\rangle }="n28"
			"n26":"n28"_{3}
			!{(-3.7,2.6) }*+{ \textcircled{\tiny 11}\XSup{\md_2} }="n29"
			"n26":"n29"_{4}
			!{(-1.9,2.6) }*+{ \textcircled{\tiny 16}\langle3,4\rangle^C }="n30"
			"n26":"n30"^(0.6){5}
			!{(-5.5,3.7) }*+{ \textcircled{\tiny 14}\XSup{\md_2} }="n31"
			"n28":"n31"_{4}
			!{(-3.7,3.7) }*+{ \textcircled{\tiny 18}\checkmark_{(\md_5)}^* }="n32"
			"n28":"n32"_{5}
			!{(-1.9,3.7) }*+{ \textcircled{\tiny 17} dup }="n33"
			"n30":"n33"_{3}
			!{(-0.1,3.7) }*+{ \textcircled{\tiny 19}\XSup{\md_2} }="n34"
			"n30":"n34"_{4}
		}\]
		\begin{center}
			\small 
		iteration 3 ($\dpi_2$)\\
		$\qquad\Rightarrow \mD = \{\md_2,\md_5\}$ $\quad\Rightarrow m_3:\;\, A\to\lnot C\phantom{\lnot}\;$ to $\Tp'$
		$\qquad\Rightarrow \mD_{\checkmark} = \{\md_2\}, \mD_{\times} = \{\md_5\}$
		\end{center}
		\vspace{-5pt}
		
		\hdashrule[0.5ex]{\columnwidth}{1pt}{3mm}
		\vspace{-38pt}
		
	
		\[\xygraph{
			!{<0cm,0cm>;<1cm,0cm>:<0cm,-1cm>::}
			!{(-2,0) }*+{ \textcircled{\scriptsize 1'}\langle1\rangle }="n35"
			!{(-4,.5) }*+{ \textcircled{\scriptsize 2''}\langle4\rangle }="n36"
			"n35":"n36"_{1}
			!{(-3,1.5) }*+{ \textcircled{\scriptsize 7}\checkmark_{(\md_2)} }="n37"
			"n36":"n37"^{4}
			!{(-4.3,1.5) }*+{ \hphantom{3pt}}="n38"
		}\]
		\begin{center}
			\small 
			iteration 4 ($\dpi_3$)\\
			$\qquad\Rightarrow \mD = \{\md_2\}$
		\end{center}
		\vspace{-5pt} 
		
		\hdashrule[0.5ex]{\columnwidth}{1pt}{3mm} \vspace{-18pt}
	
	\caption{DynamicHS executed on example DPI given in Tab.~\ref{tab:example_DPI}.}
	\label{fig:dynhs_example}
\end{figure*}




\begin{figure}[!htbp]
	\centering
		\[\xygraph{
			!{<0cm,0cm>;<1cm,0cm>:<0cm,-1cm>::}
			!{(-2.1,0) }*+{ \textcircled{\scriptsize 1}\langle1,2\rangle^{C} }="n1"
			!{(-4.2,.5) }*+{ \textcircled{\scriptsize 2}\langle2,3,4\rangle^{C} }="n2"
			"n1":"n2"_{1}
			!{(0,.5) }*+{ \textcircled{\scriptsize 3}\langle1,3,5\rangle^{C} }="n3"
			"n1":"n3"^{2}
			!{(-5.5,1.5) }*+{ \textcircled{\scriptsize 4}\langle3,4,5\rangle^{C} }="n4"
			"n2":"n4"_(0.6){2}
			!{(-3.9,1.5) }*+{ \textcircled{\scriptsize 5}\checkmark_{(\md_1)}^* }="n5"
			"n2":"n5"_{3}
			!{(-2.5,1.5) }*+{ \textcircled{\scriptsize 6}\checkmark_{(\md_2)}^* }="n6"
			"n2":"n6"^{4}
			!{(-1.2,1.5) }*+{ \textcircled{\scriptsize 7} \times }="n7"
			"n3":"n7"_(0.6){1}
			!{(0.1,1.5) }*+{ \textcircled{\scriptsize 8}\checkmark_{(\md_3)}^* }="n8"
			"n3":"n8"_{3}
			!{(1.5,1.5) }*+{ \textcircled{\scriptsize 9}\checkmark_{(\md_4)}^* }="n9"
			"n3":"n9"^{5}
			!{(-5.7,2.6) }*+{ \textcircled{\tiny 10} \XSup{\md_1} }="n10"
			"n4":"n10"_{3}
			!{(-3.9,2.6) }*+{ \textcircled{\tiny 11} \XSup{\md_2} }="n11"
			"n4":"n11"_{4}
			!{(-2.1,2.6) }*+{ \textcircled{\tiny 12} \XSup{\md_4} }="n12"
			"n4":"n12"^(0.6){5}
		}\]
		\begin{center}
			\small 
			iteration 1 ($\dpi_0$)\\
			$\qquad\Rightarrow \mD = \{\md_1,\md_2,\md_3,\md_4\}$ $\qquad\Rightarrow m_1:\;\, A\to C\phantom{\lnot}\;$ to $\Tn'$
		\end{center}
		\vspace{-5pt}
		
		\hdashrule[0.5ex]{\columnwidth}{1pt}{3mm} 
		\vspace{-38pt}
	
		\[\xygraph{
			!{<0cm,0cm>;<1cm,0cm>:<0cm,-1cm>::}
			!{(-2.1,0) }*+{ \textcircled{\tiny 13}\langle1,2\rangle^{C} }="n1"
			!{(-4.2,.5) }*+{ \textcircled{\tiny 14}\langle2,4\rangle^{C} }="n2"
			"n1":"n2"_{1}
			!{(0,.5) }*+{ \textcircled{\tiny 15}\langle1,5\rangle^{C} }="n3"
			"n1":"n3"^{2}
			!{(-5.5,1.5) }*+{ \textcircled{\tiny 16}\langle4,5\rangle^{C} }="n4"
			"n2":"n4"_(0.6){2}
			!{(-3,1.5) }*+{ \textcircled{\tiny 17}\checkmark_{(\md_2)}^* }="n6"
			"n2":"n6"^{4}
			!{(-0.7,1.5) }*+{ \textcircled{\tiny 18} \times }="n7"
			"n3":"n7"_(0.6){1}
			!{(1.5,1.5) }*+{ \textcircled{\tiny 19}\checkmark_{(\md_4)}^* }="n9"
			"n3":"n9"^{5}
			!{(-5.7,2.6) }*+{ \textcircled{\tiny 20} \XSup{\md_2} }="n11"
			"n4":"n11"_{4}
			!{(-3.5,2.6) }*+{ \textcircled{\tiny 21} \XSup{\md_4} }="n12"
			"n4":"n12"^(0.6){5}
		}\]
		\begin{center}
			\small 
			iteration 2 ($\dpi_1$)\\
			$\qquad\Rightarrow \mD = \{\md_2,\md_4\}$ $\qquad\Rightarrow m_2:\;\, A\to\lnot B\phantom{\lnot}\;$ to $\Tn'$
		\end{center}
		\vspace{-5pt}
		
		\hdashrule[0.5ex]{\columnwidth}{1pt}{3mm}
		\vspace{-38pt} 
	
		\[\xygraph{
			!{<0cm,0cm>;<1cm,0cm>:<0cm,-1cm>::}
			!{(-4.1,0) }*+{ \textcircled{\tiny 22}\langle1\rangle^{C} }="n1"
			!{(-6.2,.5) }*+{ \textcircled{\tiny 23}\langle2,4\rangle^{C} }="n2"
			"n1":"n2"_{1}
			!{(-7.5,1.5) }*+{ \textcircled{\tiny 24}\langle4,5\rangle^{C} }="n4"
			"n2":"n4"_(0.6){2}
			!{(-5,1.5) }*+{ \textcircled{\tiny 25}\checkmark_{(\md_2)}^* }="n6"
			"n2":"n6"^{4}
			!{(-7.7,2.6) }*+{ \textcircled{\tiny 26} \XSup{\md_2} }="n11"
			"n4":"n11"_{4}
			!{(-5.5,2.6) }*+{ \textcircled{\tiny 27} \langle3,4\rangle^{C} }="n12"
			"n4":"n12"^(0.6){5}
			!{(-6.3,3.7) }*+{ \textcircled{\tiny 28} \checkmark_{(\md_5)}^* }="n13"
			"n12":"n13"_{3}
			!{(-4.1,3.7) }*+{ \textcircled{\tiny 29} \XSup{\md_2} }="n14"
			"n12":"n14"^{4}
		}\]
		\begin{center}
			\small 
			iteration 3 ($\dpi_2$)\\
			$\qquad\Rightarrow \mD = \{\md_2,\md_5\}$ $\quad\Rightarrow m_3:\;\, A\to\lnot C\phantom{\lnot}\;$ to $\Tp'$
		\end{center}
		\vspace{-5pt}
		
		\hdashrule[0.5ex]{\columnwidth}{1pt}{3mm}
		\vspace{-38pt} 
	
		\[\xygraph{
			!{<0cm,0cm>;<1cm,0cm>:<0cm,-1cm>::}
			!{(-2,0) }*+{ \textcircled{\tiny 30}\langle1\rangle^C }="n35"
			!{(-4,.5) }*+{ \textcircled{\tiny 31}\langle4\rangle^C }="n36"
			"n35":"n36"_{1}
			!{(-3,1.5) }*+{ \textcircled{\tiny 32}\checkmark_{(\md_2)}^* }="n37"
			"n36":"n37"^{4}
			!{(-4.3,1.5) }*+{ \hphantom{3pt}}="n38"
		}\]
		\begin{center}
			\small 
			iteration 4 ($\dpi_3$)\\
			$\qquad\Rightarrow \mD = \{\md_2\}$
		\end{center}
		\vspace{-5pt} 
		
		\hdashrule[0.5ex]{\columnwidth}{1pt}{3mm}
		\vspace{-18pt}

	\caption{HS-Tree executed on example DPI given in Tab.~\ref{tab:example_DPI}.}
	\label{fig:hs_example}
\end{figure}

\begin{example}\label{ex:algo_description}
Consider our example DPI $\dpi_0$ given in Tab.~\ref{tab:example_DPI}. The goal is the localization of the faulty axioms, i.e., of those elements of $\mo$ 
that prevent the satisfaction of all given measurements (in this case, there is only one negative measurement, i.e., $\lnot A$ must not be entailed by the correct knowledge base). We now illustrate the workings of both DynamicHS (Fig.~\ref{fig:dynhs_example}) and HS-Tree (Fig.~\ref{fig:hs_example}) in that we play through a complete sequential diagnosis session 
on this particular example, under the assumption that $\dt = [\tax_1,\tax_4]$ is the actual diagnosis.\vspace{5pt} 

\noindent\emph{Inputs (Sequential Diagnosis):} Concerning the inputs to Alg.~\ref{algo:sequential_diagnosis}, we assume that $\ld := 5$ (i.e., in each iteration, if existent, five leading diagnoses are computed), $\mathsf{heur}$ is a heuristic that prefers measurements the more, the more leading diagnoses they eliminate in the worst case (cf.\ the split-in-half heuristic in \cite{Shchekotykhin2012}), and $\pr$ is chosen in a way all elements of $\mo$ have an equal fault probability 
which leads to a breadth-first construction of the hitting set tree(s).\vspace{5pt} 

\noindent\emph{Inputs (HS-Tree and DynamicHS):} 
The inputs to Alg.~\ref{algo:sequential_diagnosis} are passed to DynamicHS in line~\ref{algoline:inter_onto_debug:call_DynHS}.
In addition to these, 
DynamicHS (see Alg.~\ref{algo:dynamic_hs}) accepts the arguments $\Tp'$, $\Tn'$ (current sets of positive and negative measurements gathered throughout the execution of Alg.~\ref{algo:sequential_diagnosis} so far; both initially empty), $\mD_{\checkmark}$ and $\mD_{\times}$ (the minimal diagnoses found in the previous iteration which are consistent and inconsistent, respectively, with the last added measurement; both initially empty), and $\mathsf{state}$ (stores the state of the current DynamicHS-Tree within and between all its iterations) which consists of the collections $\Queue$ (list of open nodes; initially the list containing an empty node $[\,]$), $\Queue_{dup}$ (list of duplicate nodes; initially empty), $\mD_{\supset}$ (set of non-minimal diagnoses; initially empty), as well as $\mC_{calc}$ (set of already computed conflicts; initially empty). These variable initializations take place in lines~\ref{algoline:inter_onto_debug:var_init_start}--\ref{algoline:inter_onto_debug:var_init_end} of Alg.~\ref{algo:sequential_diagnosis}. Unlike DynamicHS, HS-Tree takes only the arguments $\Tp'$ and $\Tn'$ beyond the general sequential diagnosis inputs $\ld$, $\pr$ and $\mathsf{heur}$. That is, the queue $\Queue$, the computed diagnoses $\mD_{calc}$, and the computed conflicts $\mC_{calc}$ are instance variables in case of HS-Tree due to its statelessness.\vspace{5pt} 

\noindent\emph{Notation in Figures:} Axioms $\tax_i$ in $\mo$ are referred to by $i$ (in node and edge labels) for simplicity. 
Circled numbers indicate the chronological order in which nodes are labeled, i.e., node $\textcircled{\scriptsize i}$ is labeled at point in time $i$ (starting from $1$). We use $\tuple{\dots}$ to denote the conflicts that label the internal hitting set tree nodes, and tag these conflicts by $^C$ if they are freshly computed by a \textsc{findMinConflict} call (line~\ref{algoline:dlabel:qx_2} in \textsc{dLabel}), and leave them without a tag if they are the product of a redundancy check and a subsequent relabeling (lines~\ref{algoline:update:call_redundant_function}--\ref{algoline:update:prune} in \textsc{updateTree}).\footnote{The rationale behind this distinction between the individual conflicts used as node labels is that their computation time tends to differ significantly (as we will discuss in more detail in Sec.~\ref{sec:eval_advanced_tehniques}). For that reason, we will later also denote the operations used to compute the conflicts tagged by $^C$ as ``hard'', and those that yield conflicts without any tag as ``easy'' (see Sec.~\ref{sec:avoidance_of_exp_reasoning}).}\footnote{Note that there are no reused conflicts (found in lines~\ref{algoline:dlabel:reuse_start}--\ref{algoline:dlabel:reuse_end} in \textsc{dLabel}) in this particular example. Therefore, we do not introduce a distinct conflict tag for this case.} For the leaf nodes, we use the following labels:
$\checkmark_{(\md_i)}$ to denote a minimal diagnosis, named $\md_i$, stored in $\mD_{calc}$,
$\times$ to denote a duplicate in HS-Tree (see \ref{enum:hstree:label:duplicate} criterion above),
$\times_{(\supset \md_i)}$ to denote a non-minimal diagnosis (stored in $\mD_{\supset}$ by DynamicHS), where $\md_i$ is some minimal diagnosis that proves the non-minimality, and
$\mathit{dup}$ to denote a duplicate in DynamicHS (stored in $\Queue_{dup}$). 
Branches representing minimal diagnoses are additionally tagged by a $^*$ if logical reasoning 
(\textsc{findMinConflict} call, line~\ref{algoline:dlabel:qx_2} in DynamicHS, and line~\ref{algoline:hs:qx} in HS-Tree)
is necessary to prove it is a diagnosis, and untagged otherwise (i.e., the branch is known to be a diagnosis from a previous iteration, and stored in $\mD_{\checkmark}$; only pertinent to DynamicHS). Below each graph showing the respective hitting set tree, the figures show the current iteration $i$ of Alg.~\ref{algo:sequential_diagnosis}, the leading diagnoses $\mD$ output (Alg.~\ref{algo:sequential_diagnosis}, line~\ref{algoline:inter_onto_debug:call_DynHS} for DynamicHS and line~\ref{algoline:inter_onto_debug:call_HS-Tree} for HS-Tree, respectively), the measurement $m_i$ added to one of the measurement sets ($\Tp'$ or $\Tn'$) of the the DPI (Alg.~\ref{algo:sequential_diagnosis}, lines~\ref{algoline:inter_onto_debug:calc_query}--\ref{algoline:inter_onto_debug:add_meas}), as well as the sets $\mD_{\checkmark}$ and $\mD_{\times}$ (Alg.~\ref{algo:sequential_diagnosis}, line~\ref{algoline:inter_onto_debug:partition_D_into_Dcheckmark_and_Dtimes}) in case of DynamicHS.\vspace{5pt} 
     

\noindent\emph{Iteration~1:} In their first iteration, HS-Tree and DynamicHS essentially do the same, i.e., they build the same tree (compare 
Figs.~\ref{fig:dynhs_example} and \ref{fig:hs_example}). The only difference is that DynamicHS stores the duplicate branches (labeled by $\mathit{dup}$) and the ones corresponding to non-minimal diagnoses (labeled by $\times_{(\supset \md_i)}$),
whereas HS-Tree simply discards these branches (those labeled by $\times$ or $\times_{(\supset \md_i)}$). Note that duplicates are stored by DynamicHS at \emph{generation} time (line~\ref{algoline:dyn:add_to_Qdup}), hence the duplicate ($\mathit{dup}$) found has number \textcircled{\scriptsize 4} (and not \textcircled{\scriptsize 7}, as the respective branch for HS-Tree). The leading diagnoses computed by both algorithms are $\{\md_1,\md_2,\md_3,\md_4\} = \{[1,3],[1,4],[2,3],[2,5]\}$.
The soundness and completeness of this result can also be read from Tab.~\ref{tab:diag_conf_evolution_example} which illustrates the evolution of the minimal diagnoses ($\sol$) and minimal conflicts ($\conf$) for the varying DPI (successively extended $\Tp'$ and $\Tn'$ sets) in this example. We emphasize that the leading diagnoses returned by both algorithms \emph{must} be equal in any iteration (given the same parameters $\ld$ and $\pr$) since both methods are sound, complete and best-first minimal diagnosis computers. As a consequence, when using the same measurement computation technique (and heuristic $\mathsf{heur}$), both algorithms also \emph{must} give rise to the same proposed next measurement $m_i$ in each iteration. \vspace{5pt} 

\begin{table*}
	\footnotesize
	\centering
	\caption{\small Evolution of minimal diagnoses and minimal conflicts after successive extension of the example DPI $\dpi_0$ (Tab.~\ref{tab:example_DPI}) by positive ($\Tp'$) or negative ($\Tn'$) measurements $m_i$ shown in Figures~\ref{fig:dynhs_example} and \ref{fig:hs_example}, respectively. Newly arisen minimal conflicts (which are neither subsets nor supersets of prior minimal conflicts) are underlined.}
	\label{tab:diag_conf_evolution_example}
	\begin{tabular}{c|c|c|c|c}
		iteration $j$ & $\Tp'$ & $\Tn'$ & $\sol(\dpi_{j-1})$ & $\conf(\dpi_{j-1})$ \\
		\midrule
		1         & --   & --   &  $[1,3],[1,4],[2,3],[2,5]$      & $\tuple{1,2},\tuple{2,3,4},\tuple{1,3,5},\tuple{3,4,5}$           \\
		2  		  & --   &  $A\to C$  &  $[1,4],[2,5]$         &   $\tuple{1,2},\tuple{2,4},\tuple{1,5},\tuple{4,5}$        \\
		3         & --   &  $A\to C, \; A\to \lnot B$  &   $[1,4],[1,2,3,5]$        &   $\tuple{1},\tuple{2,4},\underline{\tuple{3,4}},\tuple{4,5}$        \\
		4         &  $A\to \lnot C$  &  $A\to C, \; A\to \lnot B$  &  $[1,4]$         &  $\tuple{1},\tuple{4}$        
	\end{tabular}
\end{table*}

\noindent\emph{First Measurement:} Accordingly, both algorithms lead to the first measurement $m_1: A \to C$, 
which corresponds to the question ``must it be true that $A$ implies $C$?''. This measurement
turns out to be negative, e.g., by consulting a knowledgeable expert for the domain described by $\mo$, and is therefore added to $\Tn'$. This effectuates a transition from the initial DPI $\dpi_0$ to a new DPI $\dpi_1$ (which includes the additional element $A \to C$ in $\Tn'$), and thus a change of the relevant minimal diagnoses and conflicts (see Tab.~\ref{tab:diag_conf_evolution_example}). \vspace{5pt}  

\noindent\emph{Tree Update:} From the second iteration (where $\dpi_1$ is considered) on, the behaviors of DynamicHS and HS-Tree start to differ. Whereas HS-Tree simply constructs a new hitting set tree from scratch for $\dpi_1$, DynamicHS runs a tree update (function \textsc{updateTree}) to make the existing tree built for $\dpi_0$ reusable for $\dpi_1$. In the course of this tree update, two witnesses of redundancy (the new minimal conflicts $\tuple{2,4}$ and $\tuple{1,5}$) are found while analyzing the (conflicts along the) branches of the two invalidated diagnoses $[1,3]$ and $[2,3]$ (\textcircled{\scriptsize 6} and \textcircled{\scriptsize 8}). 
For instance, $\mathsf{nd}=[1,3]$ is redundant since the (\emph{former}, for $\dpi_0$, \emph{minimal}) conflict $\mathsf{nd.cs}[2] = \tuple{2,3,4}$ is a proper superset of the \emph{now minimal} conflict $X = \tuple{2,4}$ \emph{and} $\mathsf{nd}$'s outgoing edge of $\mathsf{nd.cs}[2]$ is $\mathsf{nd}[2] = 3$ which is an element of $\mathsf{nd.cs}[2]\setminus X = \setof{3}$.  
Since there are no appropriate duplicates that allow the construction of a replacement node for any of the two redundant branches $[1,3]$ and $[2,3]$, both of these branches are removed from the tree. Further, the by now
non-minimal conflicts at 
\textcircled{\scriptsize 2} and \textcircled{\scriptsize 3} are replaced, each by the respective witness of redundancy that is a subset of it (e.g., $\tuple{2,3,4}$ is replaced by $\tuple{2,4}$). This relabeling is signified by the prime (') in the new node numbers \textcircled{\scriptsize 2'} and \textcircled{\scriptsize 3'}, respectively. 

Other than that, only a single further change is induced by \textsc{updateTree}. Namely, the branch $[1,2,3]$, a non-minimal diagnosis for $\dpi_0$, is returned to the queue of unlabeled nodes $\Queue$ because there is no longer a diagnosis in the tree witnessing its non-minimality (both such witnesses $[1,3]$ and $[2,3]$ have been discarded). Note that, first, $[1,2,3]$ is indeed no longer a diagnosis, i.e., hitting set of all minimal conflicts for $\dpi_1$ (cf.\ Tab.~\ref{tab:diag_conf_evolution_example}) and, second, there is still a non-minimality witness for all other branches (\textcircled{\tiny 12} and \textcircled{\tiny 13}) representing non-minimal diagnoses for $\dpi_0$, which is why they remain labeled by $\times_{(\supset \md_i)}$. \vspace{5pt} 

\noindent\emph{Iteration~2:} Observe the crucial differences between HS-Tree and DynamicHS in the second iteration (compare Figs.~\ref{fig:dynhs_example} and \ref{fig:hs_example}). 

First, while HS-Tree has to compute all the conflicts labeling internal nodes by potentially expensive \textsc{findMinConflict} calls (see the $^C$ tags), DynamicHS performs a (generally) cheaper reduction of the existing conflicts in the course of the pruning actions we discussed above. However, note that not all conflicts are necessarily always kept up-to-date after a DPI-transition. This is part of the \emph{lazy updating policy} pursued by DynamicHS, detailed in Sec.~\ref{sec:lazy_updating_policy}. For instance, node \textcircled{\scriptsize 5} is still labeled by the now non-minimal conflict $\tuple{3,4,5}$ at the time \textsc{updateTree} terminates. Hence, the subtree comprising nodes \textcircled{\tiny 13}, \textcircled{\tiny 14} and \textcircled{\tiny 15} is not present in case HS-Tree is used. 
Importantly, this lazy updating strategy has no negative impact on the soundness or completeness of DynamicHS (see Sec.~\ref{sec:lazy_updating_policy} and Theorem~\ref{thm:dynHS_correctness}).
%
%

Second, the verification of the minimal diagnoses ($\md_2$, $\md_4$) found in iteration~2 requires logical reasoning in HS-Tree (see $^*$ tags of $\checkmark$ nodes) whereas it comes for free in DynamicHS due to the storage and reuse of $\mD_{\checkmark}$ (the minimal diagnoses returned by the previous iteration which are consistent with the lastly added measurement).\vspace{5pt}

\noindent\emph{Remaining Execution:} After the second measurement $m_2$ is added to $\Tn'$, causing a DPI-transition once again, DynamicHS reduces the conflict that labels the root node. This leads to the pruning of the complete right subtree. The left subtree is then further expanded in iteration~3 (see generated nodes \textcircled{\tiny 16}, \textcircled{\tiny 17}, \textcircled{\tiny 18} and \textcircled{\tiny 19}) until the two leading diagnoses $\md_2 = [1,4]$ and $\md_5 = [1,2,3,5]$ are located and the queue $\Queue$ of unlabeled nodes becomes empty (which proves that no further minimal diagnoses exist). Eventually, the addition of the third measurement $m_3$ to $\Tp'$ brings sufficient information to isolate the actual diagnosis. This is reflected by a pruning of all branches except for the one representing the actual diagnosis $[1,4]$.\vspace{5pt}

\noindent\emph{Performance Comparison:}
As Figs.~\ref{fig:dynhs_example} and \ref{fig:hs_example} show, DynamicHS generates 19 nodes and requires 6 conflict computations, as opposed to 32 nodes and 14 computations in case of HS-Tree. These reductions in terms of tree rebuilding and conflict computation costs represent two main factors responsible for runtime improvements of DynamicHS over HS-Tree.
\qed	
\end{example}

\subsection{Advanced Techniques in DynamicHS}
\label{sec:advanced_techniques_in_DynHS}
DynamicHS embraces several sophisticated techniques specialized in improving its (time or space) performance,
which we discuss next. 

\subsubsection{Efficient Redundancy Checking}
\label{sec:efficient_redundancy_checking}
We now detail the workings of the function \textsc{redundant} (called in line~\ref{algoline:update:call_redundant_function} of Alg.~\ref{algo:dynamic_hs}):

The definition of node redundancy given in Sec.~\ref{sec:algo_walkthrough} directly suggests a method for checking whether or not a node is redundant, which we call \emph{complete redundancy check (CRC)}.
It runs through all conflicts $\mathsf{nd.cs}[i]$ used as labels along the branch to node $\mathsf{nd}$ (i.e., $i \in \setof{1,\dots,|\mathsf{nd.cs}|}$) and calls 
\textsc{findMinConflict} with arguments $(\tuple{\mathsf{nd.cs}[i] \setminus \setof{\mathsf{nd}[i]}, \mb, \Tp\cup\Tp', \Tn\cup\Tn'})$ to test if there is a witness of redundancy (see Sec.~\ref{sec:algo_walkthrough}) for $\mathsf{nd}$. A witness of redundancy exists for $\mathsf{nd}$ iff, for some $i$, this call to \textsc{findMinConflict} returns a conflict $X$. Because, this conflict then must be a subset of $\mathsf{nd.cs}[i] \setminus \setof{\mathsf{nd}[i]}$, meaning that $\mathsf{nd}$ is redundant. 
Hence, if such an $X$ is found, then CRC is successful (returns $\true$) and the witness $X$ is used as an argument passed to the subsequent call of the \textsc{prune} method (Alg.~\ref{algo:dynamic_hs}, line~\ref{algoline:update:prune}). Otherwise, i.e., if all calls of \textsc{findMinConflict} made by the CRC return 'no conflict', it is proven that the node is not redundant and CRC returns $\false$.

The CRC enables sound (if CRC true, then node redundant) and complete (if node redundant, then CRC true) redundancy checking. However, a drawback of the CRC is that it requires $|\mathsf{nd}|$ calls to the (expensive) method \textsc{findMinConflict} in the worst case, where $|\mathsf{nd}|$ is in $O(|\conf(\dpi)|)$ since a node cannot hit any more than each minimal conflict for the current DPI $\dpi$.
As a remedy to that, we devised a more efficient, sound but incomplete, so-called \emph{quick redundancy check (QRC)}, which is executed previous to the CRC and requires only a single call of \textsc{findMinConflict}. The concept is that a positive QRC makes the more expensive CRC obsolete; and, in case of a negative outcome, CRC must be executed, but the overhead amounts to only a single \textsc{findMinConflict} call. 

To check the redundancy of $\mathsf{nd}$, QRC executes \textsc{findMinConflict} with arguments $(\langle U_{\mathsf{nd.cs}} \setminus \mathsf{nd}, \mb, \Tp\cup\Tp', \Tn\cup\Tn'\rangle)$.\footnote{For a collection of sets $Z$, we denote by $U_{Z}$ the union of all sets in $Z$.} If 'no conflict' is returned, the QRC terminates negatively, which prompts the execution of the CRC. Otherwise, if a conflict $X$ is returned, QRC checks whether $X$ is a proper subset of some conflict in $\mathsf{nd.cs}$, i.e., whether $X \subset \mc$ for $\mc = \mathsf{nd.cs}[k]$ for some $k$.
In case of a positive subset-check, the QRC returns positively and it follows that $\mathsf{nd}$ is redundant, regardless of the particular $k$. 
The reason is that the argument $U_{\mathsf{nd.cs}} \setminus \mathsf{nd}$ passed to \textsc{findMinConflict} does not include any element of $\mathsf{nd}$, and hence the output conflict cannot include such elements either.
Thus, if $X \subset \mathsf{nd.cs}[k]$ holds, then $X \subset \mathsf{nd.cs}[k]\setminus\mathsf{nd}$, and therefore $X \subset \mathsf{nd.cs}[k]\setminus\{\mathsf{nd}[i]\}$ for all $i$, and in particular for $i = k$.
So, $X \subset \mathsf{nd.cs}[k]\setminus\{\mathsf{nd}[k]\}$, which is equivalent to the definition of redundancy (see page~\pageref{def:redundancy}).

To see why the QRC is incomplete, i.e., that $\mathsf{nd}$ \emph{can} be redundant even if the outcome of the QRC is negative, consider the following example: 

\begin{example}
	Let $\mathsf{nd} = [1,2]$ and $\mathsf{nd.cs} = [\tuple{1,2},\tuple{2,3}]$. Assume that $X := \tuple{2}$ is a new minimal conflict and that $\setof{3}$ is not a conflict. Clearly, this implies that $\mathsf{nd}$ is redundant because $\mathsf{nd.cs}[1]\setminus X = \setof{1}$ and $\mathsf{nd}[1] = 1$. However, $U_{\mathsf{nd.cs}}\setminus \mathsf{nd} = \setof{3}$, which is not a conflict, which is why \textsc{findMinConflict} given the DPI $(\langle U_{\mathsf{nd.cs}} \setminus \mathsf{nd}, \mb, \Tp\cup\Tp', \Tn\cup\Tn'\rangle)$ as argument returns 'no conflict'. Hence, the QRC returns negatively although $\mathsf{nd}$ is in fact redundant.\qed  	
\end{example}
The crucial aspect which makes this incompleteness possible is the potential overlapping of conflicts. Exactly this overlapping effectuates in the above example that more than one element (actually even all elements) of the outdated non-minimal conflict $\tuple{1,2}$ are eliminated from 
$U_{\mathsf{nd.cs}} = \setof{1,2,3}$ 
by deleting $\mathsf{nd} = [1,2]$. As a consequence, the new reduced conflict $\tuple{2}$ is not contained any longer in the set tested by \textsc{findMinConflict}.

In fact, we can conclude that the QRC is sound \emph{and complete} in the special cases where all minimal conflicts are pairwise disjoint or, more generally, where $\mathsf{nd}$ does not include any element that occurs in multiple conflicts in $\mathsf{nd.cs}$. 

\subsubsection{Lazy Updating Policy}
\label{sec:lazy_updating_policy}
Updating DynamicHS's hitting set tree after the detection of some witness of redundancy $X$ 
involves going through all nodes of the tree and checking their redundancy wrt.\ $X$ (cf.\ Sec.~\ref{sec:algo_walkthrough}). To avoid these costs as much as possible, DynamicHS aims at minimizing the number of performed updates under preservation of its correctness. 
This can be seen from line~\ref{algoline:update:process_Dtimes_start}, where only the set $\mD_{\times}$ including the most ``suspicious'' nodes (i.e., the diagnoses invalidated by the latest added measurement) is checked for redundancy.
In general, this means that 
we allow some differences between the tree used by DynamicHS to compute diagnoses and the one that would be obtained when building Reiter's HS-Tree for the current DPI from scratch.
More specifically, we allow the presence of non-minimal conflicts that are used as node labels as well as---conditioned by these non-minimal conflicts---the presence of unnecessary nodes (while, of course, seeking to minimize the number of such occurrences; cf.\ \textsc{prune} function).
Still, every time a conflict is used to label a (newly processed) node, the algorithm guarantees that it is a minimal conflict for the current DPI (cf.\ the conflict-minimality test in the course of the conflict reuse check in \textsc{dLabel}, lines~\ref{algoline:dlabel:qx_1}--\ref{algoline:dlabel:return_X}).

This \emph{lazy updating policy} takes effect, e.g., in iteration~2 of the example execution of DynamicHS shown in Fig.~\ref{fig:dynhs_example} (see Example~\ref{ex:algo_description}). Here the, at this point, already non-minimal conflict $\mc_{\mathit{\lnot min}} := \tuple{3,4,5}$ still appears as a node label, while $\mc_{\mathit{min}}:=\tuple{4,5}$ is now a minimal conflict for the current DPI. 
%

Notwithstanding the correctness proof of DynamicHS given in Sec.~\ref{sec:correctness_proof}, we next argue briefly why the presence of such non-minimal conflicts $\mc_{\mathit{\lnot min}}$ does neither counteract the soundness nor the completeness of DynamicHS. To this end, we first point out that (*) only nodes can be labeled $\mathit{valid}$ by DynamicHS which are 
diagnoses for the current DPI (see line~\ref{algoline:dlabel:qx_2} in \textsc{dLabel}; and line~\ref{algoline:dyn:node_in_Dcheckmark} along with the definition of $\mD_{\checkmark}$):
\begin{itemize}
	\item Assume the \emph{completeness} is compromised. Then 
	the processing of some minimal diagnosis must be prevented from reaching line~\ref{algoline:dlabel:qx_2} in \textsc{dLabel} (note that it is unavoidable that processed nodes are recognized as diagnoses in line~\ref{algoline:dyn:node_in_Dcheckmark}). Since the presence of all still relevant (i.e., non-redundant) nodes is not harmed by the presence of redundant nodes, which just constitute \emph{additional} branches, each node corresponding to a minimal diagnosis $\md$ will sooner or later be processed by \textsc{dLabel}. As $\md$ does not reach line~\ref{algoline:dlabel:qx_2} and since a minimal diagnosis does hit all (minimal) conflicts, a labeling of $\md$ by $\mathit{nonmin}$ (line~\ref{algoline:dlabel:non-min_crit_end}) is the only possible case.
	That is, 
	there must be a node labeled $\mathit{valid}$ (and thus stored in $\mD_{calc}$) which is a proper subset of $\md$. This is a contradiction to (*).
	\item Assume the \emph{soundness} is compromised. Then some node $\mathsf{nd}$ is labeled $\mathit{valid}$ which is not a minimal diagnosis. Due to (*), it must be the case that $\mathsf{nd}$ is a diagnosis, but a non-minimal one. First, a non-minimal diagnosis can never be identified in line~\ref{algoline:dyn:node_in_Dcheckmark} because of Property~\ref{property:impact_of_DPI_transition}.\ref{property:impact_of_DPI_transition:enum:diags_can_only_grow}. Second, since the queue $\Queue$ is always sorted such that node $\mathsf{n}$ is ranked prior to node $\mathsf{n}'$ whenever $\mathsf{n}\subset\mathsf{n}'$ (cf.\ 
	Sec.~\ref{sec:Reiters_HS-Tree}),
	the non-minimal diagnosis $\mathsf{nd}$ can only be labeled $\mathit{valid}$ if some minimal diagnosis $\md$ ($\subset \mathsf{nd}$)
	---processed prior to $\mathsf{nd}$---  
	is not found to be a diagnosis (and thus not labeled $\mathit{valid}$ and not stored in $\mD_{calc}$). Hence, the completeness must be compromised. This is a contradiction to the argumentation in the above bullet.
\end{itemize}

\subsubsection{Avoidance of Expensive Reasoning}
\label{sec:avoidance_of_exp_reasoning}
As explained in Sec.~\ref{sec:basics}, DynamicHS aims at reducing the reaction \emph{time} of a sequential diagnosis system. One crucial time-consuming and recurring operation in hitting-set-based diagnosis computation algorithms
is the reasoning in terms of logical consistency checks. To optimize computation time, DynamicHS is therefore equipped with strategies that minimize the amount of and time spent for reasoning by exploiting its statefulness in terms of the hitting set tree maintained throughout its iterations. We discuss the concept behind these strategies next. \vspace{5pt}

\noindent\emph{Ways of Reducing the Cost for Reasoning.} In DynamicHS, the logical inference engine is called (solely) by the \textsc{findMinConflict} function, which is involved in the determination of node labels in the tree expansion phase (\textsc{dLabel} function) and in the evaluation of node redundancy in the tree update phase (\textsc{redundant} function). 
As discussed in Sec.~\ref{sec:conflicts}, a single execution of \textsc{findMinConflict} given the DPI $\langle\mo,\mb,\Tp,\Tn\rangle$ generally requires multiple reasoner calls
and their number depends critically on the size of the universe $\mo$ 
from which a minimal conflict should be computed.
Note, what we, for simplicity, refer to as a \emph{reasoner call} actually
corresponds to a check if some $\mc \subseteq \mo$ is a 
conflict for a DPI $\langle\mo,\mb,\Tp,\Tn\rangle$. By the definition of a conflict (see Sec.~\ref{sec:conflicts}), this means checking whether some $x \in \Tn \cup \{\bot\}$ exists such that $\mc \cup \mb \cup \Tp \models x$. Consequently, a reasoner call corresponds to a maximum of $|\Tn|+1$ logical consistency checks.
E.g., if QuickXPlain \cite{junker04,rodler2020qx} is used to implement the \textsc{findMinConflict} function, as in our evaluations (cf.\ Sec.~\ref{sec:eval}), 
then the worst-case number of consistency checks executed by a single call of \textsc{findMinConflict} on 
the DPI $\langle\mo,\mb,\Tp,\Tn\rangle$ 
is in $O(|\mo|(|\Tn|+1))$ \cite{marques2013minimal}. 
%
%
The hardness of consistency checking tends to increase 
with the size of the knowledge base on which the check is performed (cf., e.g., \cite{gonccalves2012performance}).\footnote{Further evidence that larger knowledge bases lead to worse reasoning performance is given in \cite{kang2012predicting,karlsson2014does}.} In other words, the smaller the size of $\mo\cup\mb\cup\Tp$ is, the more efficient consistency checking will tend to be in the course of \textsc{findMinConflict} operating on the DPI $\langle \mo,\mb,\Tp,\Tn\rangle$. 
%

In summary, 
the lower the cardinality of the first entry $\mo$ (number of system components) of the tuple $\langle \mo,\mb,\Tp,\Tn\rangle$ provided as an input to \textsc{findMinConflict} is, the lower the hardness and the number of executed consistency checks will tend to be, and thus the faster \textsc{findMinConflict} will tend to execute. 
Hence, there are basically three different ways of scaling down the necessary reasoning throughout DynamicHS: 
\begin{enumerate}[label=\textit{(\roman*)},noitemsep]
	\item \label{enum:reduce_reasoning:minimize_num_of_CCs} Reducing the number and hardness of consistency checks made while \textsc{findMinConflict} executes,
	\item \label{enum:reduce_reasoning:minimize_num_of_findMinConf_calls} reducing the number of \textsc{findMinConflict} calls, or
	\item \label{enum:reduce_reasoning:avoid_findMinConf_calls} entirely avoiding \textsc{findMinConflict} calls (and replacing them by 
	equivalents that do not involve reasoning).
\end{enumerate}
In its various stages, DynamicHS embraces all these three approaches, as we explain next.\vspace{5pt} 

\noindent\emph{In the tree expansion phase}, any \textsc{findMinConflict} call in the course of the conflict reuse check (lines~\ref{algoline:dlabel:reuse_start}--\ref{algoline:dlabel:reuse_end}) starts from an \emph{already computed} conflict---and \emph{not} from the entire set of system components $\mo$---trying to verify its minimality or, alternatively, extracting a subset which constitutes a minimal conflict (for the current DPI). That is, \textsc{findMinConflict} is given a set of at most $|\mc_{\max}|$ elements as an input,\footnote{When we say that \textsc{findMinConflict} gets a set $S$ as an input, we always mean by $S$ the \emph{first} argument of the 4-tuple (DPI) passed to the function as an argument.} where $\mc_{\max}$ is the conflict of maximal size 
(for the original\footnote{Recall from Property~\ref{property:impact_of_DPI_transition}.\ref{property:impact_of_DPI_transition:enum:conflicts_can_only_shrink} that minimal conflicts can only become smaller throughout a sequential diagnosis session, i.e., there cannot be any minimal conflict whose size exceeds $|\mc_{\max}|$.\label{footnote:C_max_explanation}} DPI, i.e., the one given as an input to Alg.~\ref{algo:sequential_diagnosis}).
Note that, in many practical applications involving systems of non-negligible size, $|\mc_{\max}|$ is significantly (if not orders of magnitude) smaller than the number of components of the diagnosed system (cf., e.g., \cite{Horridge2012b,Shchekotykhin2012,shchekotykhin2015mergexplain}). Thus, DynamicHS applies strategy \ref{enum:reduce_reasoning:minimize_num_of_CCs} in this stage.\vspace{5pt} 

\noindent\emph{In the tree update phase}, and particularly during redundancy detection (line~\ref{algoline:update:call_redundant_function}), the quick redundancy check (QRC) is employed 
to potentially replace the complete redundancy check (CRC) by making only a single \textsc{findMinConflict} call instead of multiple ones (cf.\ Sec.~\ref{sec:efficient_redundancy_checking}). Moreover, any call of \textsc{findMinConflict} made in the course of the redundancy detection in general involves a significantly reduced input set, as compared to the overall number of components $|\mo|$ of the diagnosed system. 
The cardinality of this input set is bounded by the cardinality of the union of all minimal conflicts (for the original\footnote{Cf.\ Footnote~\ref{footnote:C_max_explanation}} DPI). 
So, during redundancy checking, both strategies \ref{enum:reduce_reasoning:minimize_num_of_CCs} and \ref{enum:reduce_reasoning:minimize_num_of_findMinConf_calls} are pursued.\vspace{5pt}

\noindent\emph{In the tree pruning phase} (\textsc{prune} function), which constitutes a part of the tree update phase, no reasoning is required at all (i.e., no 
\textsc{findMinConflict} calls).
%
This is accomplished by leveraging the stored hitting set tree as well as adequate instructions operating on sets and lists (cf.\ Example~\ref{ex:advanced_techniques} below).
At this point, note that a stateless algorithm, in contrast, \emph{has to} draw on logical reasoning to reconstruct (the still relevant) parts of the tree, and thus to achieve essentially the same as DynamicHS's pruning actions. 
Importantly, operations relying on a logical inference engine 
can be expected to have a (much)
higher time complexity than the list concatenations, set-equality or subset checks performed by DynamicHS in lieu of these operations. For instance, reasoning with propositional logic is already NP-complete \cite{cook1971complexity}, not to mention more expressive languages such as Description logics \cite{DLHandbook}, whereas set- and list-operations are (mostly linear) polynomial time operations. So, as far as the tree pruning is concerned, DynamicHS can be viewed as trading cheaper (reasoner-free) operations for expensive reasoner calls. It thus makes use of strategy~\ref{enum:reduce_reasoning:avoid_findMinConf_calls}.\vspace{5pt}

\noindent\emph{Different Types of Reasoning Operations.} Given the preceding discussion, we can at the core distinguish the following 
three categories of 
\textsc{findMinConflict} function calls
based on their input DPI $\langle X,\mb,\Tp,\Tn\rangle$ (where $\langle \mo,\mb,\Tp,\Tn\rangle$ is the DPI relevant to the current iteration of DynamicHS):\footnote{It is important to note that the used \emph{intuitive} terminology ``hard'', ``medium'' and ``easy'' is to be understood \emph{by tendency}, but does not allow \emph{general} conclusions about the relative or absolute hardness of the respective \textsc{findMinConflict} calls. That is, e.g., an ``easy'' call might not be fast or easy at all. Or a ``medium'' call might be faster than an ``easy'' one, e.g., because the latter operates on a set of logical sentences that is particularly hard to reason with (cf.\ \cite{gonccalves2012performance}).
However, as we verified in our experimental evaluation (see Sec.~\ref{sec:eval}), the used terminology does largely reflect the actual relative computation times of the \textsc{findMinConflict} calls in our considered dataset. More precisely, on average ``hard'' calls turned out to be \emph{always} (and at least four times and up to more than 100 times) more time-intensive than ``medium'' and ``easy'' calls; and, ``easy'' calls terminated faster than ``medium'' ones in 77\,\% of the studied cases.} 

\begin{itemize}[noitemsep,leftmargin=*]
	\item \emph{''hard''}: Size of $X$ in the order of number of system components, i.e., $|X| \approx |\mo|$, \emph{and} a conflict is returned. (\emph{multiple ``hard'' reasoner calls})
	\item \emph{''medium''}: Size of $X$ in the order of number of system components, i.e., $|X| \approx |\mo|$, \emph{and} 'no conflict' is returned. (\emph{single ``hard'' reasoner call})
	\item \emph{''easy''}: Size of $X$ low compared to the number of system components, i.e., $|X| \ll |\mo|$. (\emph{few ``easy'' reasoner calls})
\end{itemize}
``Hard'' \textsc{findMinConflict} calls are those executions of line~\ref{algoline:dlabel:qx_2} (in \textsc{dLabel}) that compute a fresh conflict, and ``medium'' ones those which lead to the finding of a diagnosis (output 'no conflict').
In contrast, ``easy'' \textsc{findMinConflict} invocations are those geared towards redundancy checking (line~\ref{algoline:update:call_redundant_function}, \textsc{updateTree}) and minimality testing for reused conflicts (line~\ref{algoline:dlabel:qx_1}, \textsc{dLabel}).
In terms of this characterization, compared against 
HS-Tree, DynamicHS tries to substantially reduce the ``hard'' (and ``medium'') \textsc{findMinConflict} operations at the cost of performing an as small as possible number of ``easy'' ones.

\renewcommand{\arraystretch}{1.1}
\begin{table}[]
	\footnotesize
	\centering
	\caption{\small Stats wrt.\ the number of different kinds of reasoning operations (\textsc{findMinConflicts} calls) throughout the execution of DynamicHS (DHS) and HS-Tree (HST), respectively, on the example DPI from Tab.~\ref{tab:example_DPI}. ``U$i$'' in the first column refers to the tree update performed by DynamicHS subsequent to iteration $i$. Note, HS-Tree does not (need to) perform any tree updates.}
	\label{tab:stats_reasoning_example}
	\begin{tabular}{c|cc|cc|cc}
		& \multicolumn{2}{c|}{\# ``hard''} & \multicolumn{2}{c|}{\# ``medium''} & \multicolumn{2}{c}{\# ``easy''} \\
		iteration & DHS                & HST               & DHS                & HST               & DHS         & HST         \\
		\midrule
		1         & 4                  & 4                 & 4                  & 4                 & 0           & 0           \\
		U1  	  & 0                  & --                & 0                  & --                & 2           & --          \\
		2         & 1                  & 4                 & 0                  & 2                 & 0           & 0           \\
		U2        & 0                  & --                & 0                  & --                & 1           & --          \\
		3         & 1                  & 4                 & 1                  & 2                 & 0           & 0           \\
		U3        & 0                  & --                & 0                  & --                & 1           & --          \\
		4         & 0                  & 2                 & 0                  & 1                 & 0           & 0           \\
		\midrule
		total     & 6                  & 14                & 5                  & 9                 & 4           & 0          
	\end{tabular}
\end{table}

\begin{example}\label{ex:hard_med_easy_consistency_checks_in_example_DPI}
	Reconsider our example DPI in Tab.~\ref{tab:example_DPI} and the evolution of the hitting set computation throughout a sequential diagnosis session for DynamicHS and HS-Tree discussed in Example~\ref{ex:algo_description}. 
	Tab.~\ref{tab:stats_reasoning_example} shows the number of ``hard'', ``medium'' and ``easy'' \textsc{findMinConflict} calls throughout the sequential session executed by both algorithms. The ``hard'' and ``medium'' calls are denoted by $^C$ and $^*$, respectively, in the hitting set trees depicted by Figs.~\ref{fig:dynhs_example} and \ref{fig:hs_example} (``easy'' calls are not indicated as tree updates are not displayed in the figures). We can see that DynamicHS trades a significant reduction of ``hard'' (57\,\%) and ``medium'' (44\,\%) reasoner operations for some ``easy'' ones.\qed  	
\end{example}

\subsubsection{Space-Saving Duplicate Storage and On-Demand Reconstruction}
\label{sec:space-saving_dup_storage}
Basically, there are several options how to 
organize the storage of
duplicate nodes.
These options range from storing all of them \emph{explicitly} to storing only a minimal set of (stubs of) duplicate nodes that \emph{implicitly} allow all duplicates to be reconstructed on demand.
Since DynamicHS performs the duplicate check at node generation time (cf.\ line~\ref{algoline:dyn:check_node_already_in_Q}), it uses the more natural way of handling duplicate storage given by the latter strategy. This means directly adding detected duplicate nodes---generated tree branches whose \emph{set} of edge labels equals the \emph{set} of edge labels of an active branch (node in $\Queue$ or $\mD_{\supset}$)---to the collection $\Queue_{dup}$ without further extending them as the hitting set tree grows. Hence, each node stored in $\Queue_{dup}$ is potentially only a partial duplicate node and might need to be combined with some other (partial) duplicate node 
or some active node in the hitting set tree 
to explicitly generate (or: reconstruct) a duplicate that is only implicitly stored. For instance, assume two nodes $\mathsf{n}_1, \mathsf{n}_2$ that have been detected as duplicates and added to $\Queue_{dup}$, where $\mathsf{n}_1 = [3,2], \mathsf{n}_1.\mathsf{cs} = [\tuple{1,2,3},\tuple{2,4}]$ and $\mathsf{n}_2 = [2,3,1], \mathsf{n}_2.\mathsf{cs} = [\tuple{1,2,3},\tuple{3,4},\tuple{1,4}]$ (cf.\ nodes with numbers \textcircled{\scriptsize 5} and \textcircled{\scriptsize 9} in Fig.~\ref{fig:tree_pruning_example_HStree}, discussed in more detail in Example~\ref{ex:advanced_techniques}). Then, an implicit duplicate constructible from $\mathsf{n}_1$ and $\mathsf{n}_2$ is $\mathsf{n}_{1,2} = [3,2,1], \mathsf{n}_{1,2}.\mathsf{cs} = [\tuple{1,2,3},\tuple{2,4},\tuple{1,4}]$, where the last node label (conflict $\tuple{1,4}$) and edge (labeled by $1$) of $\mathsf{n}_2$ have been appended to $\mathsf{n}_1$. The rationale behind this node combination is as follows: $\mathsf{n}_1$ was recognized as duplicate first, while the first part (i.e., the first two node and edge labels) of $\mathsf{n}_2$ was still in the queue $\Queue$ of open nodes. This first part of $\mathsf{n}_2$ was then extended by the node label $\tuple{1,4}$ and the edge label $1$, but was subsequently itself spotted as a duplicate. Node $\mathsf{n}_1$, however, given it had still been in $\Queue$, would have undergone the same extension. This extension is so to say ``made good for'' by combining $\mathsf{n}_1$ with $\mathsf{n}_2$ to (re)construct $\mathsf{n}_{1,2}$. 

In general, to reconstruct a duplicate node $\mathsf{n}_{i,j}$ from a combination of two nodes $\mathsf{n}_i, \mathsf{n}_j$, 
the following criteria have to be met:
\begin{enumerate}[label=\textit{(RD\arabic*)}, wide, labelwidth=!, labelindent=0pt, noitemsep]
	\item \label{enum:dup:RD1} The first $|\mathsf{n}_i|$ elements of $\mathsf{n}_j$ interpreted as a set are equal to the elements of $\mathsf{n}_i$ interpeted as a set; there are no conditions on the conflict labels $\mathsf{n}_i.\mathsf{cs}$ and $\mathsf{n}_j.\mathsf{cs}$ of the combined nodes.
	\item \label{enum:dup:RD2} The reconstructed node $\mathsf{n}_{i,j}$ is built by setting\footnote{Notation: Given a list $\mathsf{n}$,
	$\mathsf{n}[k..l]$ refers to the sublist including all elements from the $k$-th (included) until the $l$-th (included).} $\mathsf{n}_{i,j}[1..|\mathsf{n}_i|] = \mathsf{n}_i$ and $\mathsf{n}_{i,j}[|\mathsf{n}_i|+1..|\mathsf{n}_{j}|] = \mathsf{n}_j[|\mathsf{n}_i|+1..|\mathsf{n}_{j}|]$ as well as $\mathsf{n}_{i,j}.\mathsf{cs}[1..|\mathsf{n}_i|] = \mathsf{n}_i.\mathsf{cs}$ and $\mathsf{n}_{i,j}.\mathsf{cs}[|\mathsf{n}_i|+1..|\mathsf{n}_{j}|] = \mathsf{n}_j.\mathsf{cs}[|\mathsf{n}_i|+1..|\mathsf{n}_{j}|]$, i.e., the first part of $\mathsf{n}_{i,j}$ and $\mathsf{n}_{i,j}.\mathsf{cs}$, respectively, is equal to $\mathsf{n}_{i}$ and $\mathsf{n}_{i}.\mathsf{cs}$, to which the last part of $\mathsf{n}_{j}$ and $\mathsf{n}_{j}.\mathsf{cs}$ is appended.
	\item \label{enum:dup:RD3} Either 
	\begin{enumerate*}[label=\textit{(\alph*)}]
	\item \label{enum:dup:RD3:a}
	$\mathsf{n}_i, \mathsf{n}_j$ are each (reconstructed\footnote{Note the recursive character of this definition. That is, all combinations of explicit and already reconstructed nodes are possible. E.g., a reconstructed node $\mathsf{n}_6$ can be the result of combining two explicit nodes $\mathsf{n}_1$ and $\mathsf{n}_2$ to reconstruct a node $\mathsf{n}_3$, which in turn is combined with some explicit node $\mathsf{n}_4$, which in turn is combined with a reconstructed node $\mathsf{n}_5$.} or explicit) nodes from $\Queue_{dup}$, or 
	\item \label{enum:dup:RD3:b}
	node $\mathsf{n}_i$ is from the node combination closure $\Queue^*_{dup}$ of $\Queue_{dup}$ which is the union of $\Queue_{dup}$ with the set of all nodes reconstructible\footnote{Formally, $\Queue^*_{dup}$ can be defined as the fixpoint $S^*$ of the sequence of sets $S_0, S_1,\dots$ resulting from the iterative application of the $\mathit{Comb}$ function starting from $S_0 := \Queue_{dup}$ where $S_{i+1} = \mathit{Comb}(S_i)$ and $\mathit{Comb}$ is defined as $\mathit{Comb}(S) = S \cup \{\mathsf{n}_{i,j}\,|\,\mathsf{n}_{i,j} \text{ is the result as per \ref{enum:dup:RD2} of combining two nodes } \mathsf{n}_i,\mathsf{n}_j \in S \text{ which meet \ref{enum:dup:RD1}}\}$.} 
	through 
	\ref{enum:dup:RD3:a},
	and node $\mathsf{n}_j$ is from a node collection including active nodes, i.e., from one of $\Queue$, $\mD_{\supset}$, $\mD_{\checkmark}$, $\mD_{\times}$, or $\mD_{calc}$.
	\end{enumerate*}
\end{enumerate}

In the example above, criterion 
\ref{enum:dup:RD3}\ref{enum:dup:RD3:a}
is met, where $\mathsf{n}_1$ and $\mathsf{n}_2$ correspond to $\mathsf{n}_i$ and $\mathsf{n}_j$, respectively, which are both (explicit) elements of $\Queue_{dup}$; criterion 
\ref{enum:dup:RD1}
is satisfied as well because the first $|\mathsf{n}_i| = |\mathsf{n}_1| = 2$ elements of $\mathsf{n}_j = \mathsf{n}_2$ correspond to the set $\{2,3\}$, which is equal to the set of elements of $\mathsf{n}_i = \mathsf{n}_1$; the validity of criterion 
\ref{enum:dup:RD2} 
can be easily verified by comparing $\mathsf{n}_{1,2}$ with $\mathsf{n}_1$ and $\mathsf{n}_2$.
Let us consider some important remarks:
\begin{enumerate}[noitemsep]
	\item \emph{Node reconstruction 
	is sound and complete:} 
	The set of all nodes constructible by means of
	\ref{enum:dup:RD1}, \ref{enum:dup:RD2} and \ref{enum:dup:RD3}\ref{enum:dup:RD3:b}
	is exactly the set of all duplicates of the currently active nodes in $\Queue\cup\mD_{\supset}\cup\mD_{\checkmark}\cup\mD_{\times}\cup\mD_{calc}$.
	\item \label{rem:dup:relationship_between_reconstructed_and_combined_nodes}\emph{Relationship between reconstructed node and combined source nodes:} 
	\ref{enum:dup:RD1}
	implies that $|\mathsf{n}_i| \leq |\mathsf{n}_j|$. By 
	\ref{enum:dup:RD2}
	node reconstruction means that the node $\mathsf{n}_i$ of lower (or equal) length replaces the first part of (or the complete) node $\mathsf{n}_j$; we can thus call $\mathsf{n}_i$ the \emph{modifying node} and $\mathsf{n}_j$ the \emph{modified node}. Moreover, the reconstructed node has the same length as and is \emph{set}-equal (wrt.\ \emph{edge} labels) to node $\mathsf{n}_j$. 
	As a consequence of this, node reconstructions can never lead to nodes that are new in terms of their sets of edge labels. Hence, as far as \emph{sets} of edge labels of nodes are concerned, $\Queue_{dup}$ is representative of $\Queue^*_{dup}$.
	\item \emph{Reconstruction of nodes only on demand:} Neither $\Queue^*_{dup}$ 
	(as per \ref{enum:dup:RD3}\ref{enum:dup:RD3:a})
	nor the set of all duplicates of active nodes 
	(as per \ref{enum:dup:RD3}\ref{enum:dup:RD3:b})
	is ever exlicitly generated by DynamicHS. Instead, only a minimal number of node reconstructions necessary for the proper-functioning (completeness) of DynamicHS are performed. More specifically, node reconstructions can only take place in case one node has been pruned and a 
	replacement node for it is sought (cf.\ \textsc{prune} function, Sec.~\ref{sec:algo_walkthrough}). 
	And, for each pruned node, either just one replacement node is reconstructed, or none at all if no suitable replacement node exists. 
	\item \emph{Principle of node reconstruction in the course of tree pruning:} Assume the \textsc{prune} function is called given the minimal conflict $X$ and finds some redundant node $\mathsf{nd}$, i.e., $X$ is a witness of redundancy for $\mathsf{nd}$ (cf.\ Sec.~\ref{sec:algo_walkthrough}). Since there might be multiple edge and conflict labels in $\mathsf{nd}$ and $\mathsf{nd.cs}$ 
	due to which $\mathsf{nd}$ is redundant given $X$, let $k$ be the maximal index such that $X \subset \mathsf{nd.cs}[k]$ and $\mathsf{nd}[k] \in \mathsf{nd.cs}[k] \setminus X$ (redundancy criterion, cf.\ page~\pageref{def:redundancy}).
	After deleting $\mathsf{nd}$, a replacement node for it is sought. 
	A replacement node $\mathsf{nd}'$ of $\mathsf{nd}$ needs to be \emph{(i)}~non-redundant (as per the current knowledge, i.e., $X$ must not be a witness of redundancy for $\mathsf{nd}'$) and \emph{(ii)}~set-equal to $\mathsf{nd}$ (cf.\ Sec.~\ref{sec:algo_walkthrough}). 
	
	Due to (ii) and
	Remark~\ref{rem:dup:relationship_between_reconstructed_and_combined_nodes} above, 
	the redundant node $\mathsf{nd}$ can be interpreted as $\mathsf{n}_j$ and the sought replacement node as $\mathsf{n}_{i,j}$. 
	With that said, the task of finding a replacement node is equivalent to finding a non-redundant (as per $X$) node $\mathsf{n}_i$ in $\Queue^*_{dup}$, see 
	\ref{enum:dup:RD3},
	such that $|\mathsf{n}_i| \geq k$ (i.e., at least the redundant part of $\mathsf{n}_j = \mathsf{nd}$ is replaced by $\mathsf{n}_i$), see \ref{enum:dup:RD2}, and the set of elements of $\mathsf{n}_i$ is equal to the set of the first $|\mathsf{n}_i|$ elements of $\mathsf{n}_j = \mathsf{nd}$, see 
	\ref{enum:dup:RD1}.
	Since $\Queue_{dup}$ is representative of $\Queue^*_{dup}$ in terms of the \emph{sets} of edge labels of nodes (see Remark~\ref{rem:dup:relationship_between_reconstructed_and_combined_nodes}) and because  $\mathsf{n}_i$ must only be suitable in terms of \emph{set}-equality, it is sufficient to search for $\mathsf{n}_i$ in $\Queue_{dup}$ as opposed to $\Queue^*_{dup}$. 
	
	Due to (i) and since $\mathsf{n}_i$ is a node from $\Queue_{dup}$, it must be provided that each node from $\Queue_{dup}$ that qualifies as $\mathsf{n}_i$ in the search for a replacement node is non-redundant. This imposes two requirements, as pointed out in Sec.~\ref{sec:algo_walkthrough}: $\Queue_{dup}$ must be pruned previous to all other node collections (to account for case 
	\ref{enum:dup:RD3}\ref{enum:dup:RD3:b}),
	and nodes of $\Queue_{dup}$ must be pruned in ascending order of their length (to account for case 
	\ref{enum:dup:RD3}\ref{enum:dup:RD3:a}\footnote{Recall from Remark~\ref{rem:dup:relationship_between_reconstructed_and_combined_nodes} that $|\mathsf{n}_i| \leq |\mathsf{n}_j|$. If $|\mathsf{n}_i| < |\mathsf{n}_j|$, then a processing of nodes in order of ascending node cardinality guarantees that 
	all still available (non-pruned) nodes $\mathsf{n}_i$ are already verified non-redundant when some $\mathsf{n}_j$ might need to be replaced. If, on the other hand, $|\mathsf{n}_i| = |\mathsf{n}_j|$, then there are two cases: $\mathsf{n}_i$ is processed prior to $\mathsf{n}_j$, or the opposite holds. In the former case, $\mathsf{n}_i$ (unless pruned) must already be verified non-redundant 
	when $\mathsf{n}_j$ is considered. In the latter case, $\mathsf{n}_i$ is not available as a replacement node at the time $\mathsf{n}_j$ is addressed, but $\mathsf{n}_i$ itself will be processed later. Thus, if $\mathsf{n}_j$ is pruned and $\mathsf{n}_i$ non-redundant, then the latter will remain in $\Queue_{dup}$ which 
	means that $\mathsf{n}_i$ has essentially replaced $\mathsf{n}_j$.}). 
\end{enumerate}

\setlength{\fboxsep}{1pt}
\begin{figure}[!tbp]
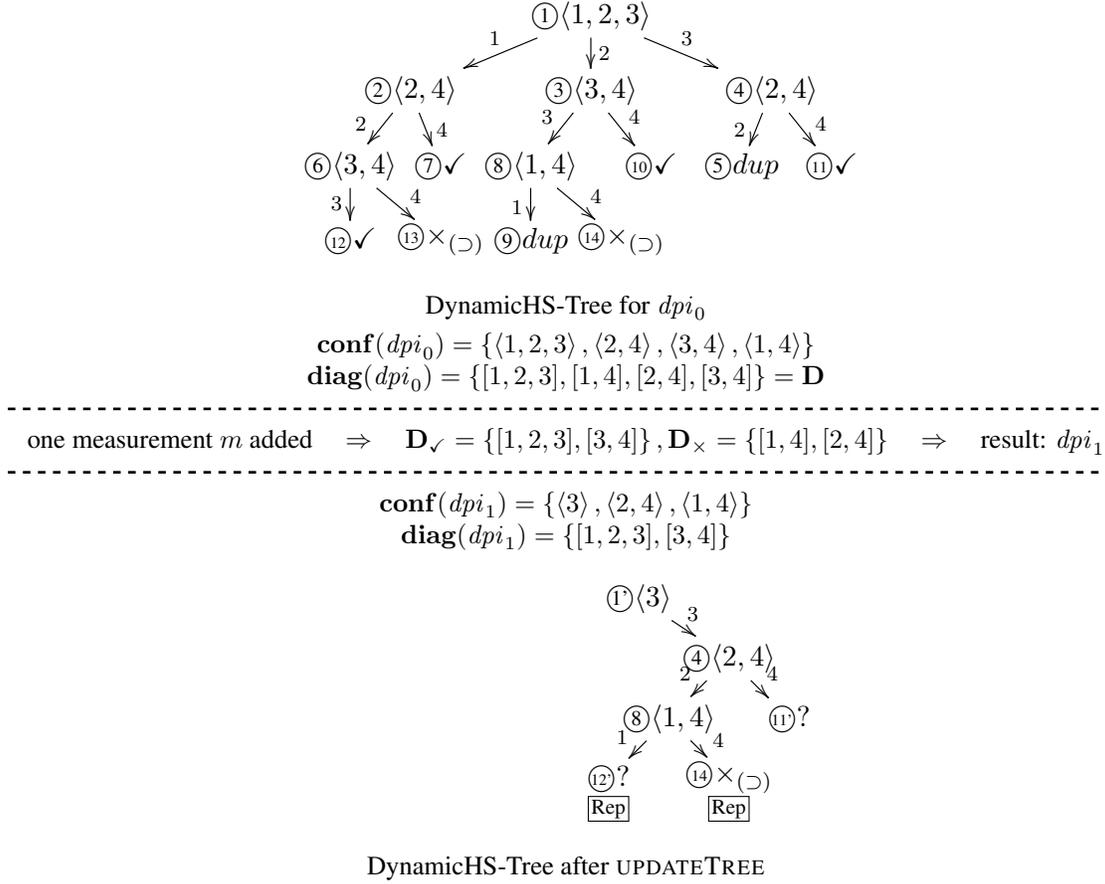

	\centering
	\[\xygraph{
		!{<0cm,0cm>;<0.8cm,0cm>:<0cm,-1cm>::}
		!{(-8,0) }*+{ \phantom{....} }="n1"
		!{(-3,0) }*+{ \textcircled{\scriptsize 1}\langle1,2,3\rangle }="n1"
		!{(-6,1) }*+{ \textcircled{\scriptsize 2}\langle2,4\rangle }="n2"
		"n1":"n2"_{1}
		!{(-3,1) }*+{ \textcircled{\scriptsize 3}\langle3,4\rangle }="n3"
		"n1":"n3"^{2}
		!{(0,1) }*+{ \textcircled{\scriptsize 4}\langle2,4\rangle }="n03"
		"n1":"n03"^{3}
		!{(-7,2) }*+{ \textcircled{\scriptsize 6}\langle3,4\rangle }="n4"
		"n2":"n4"_(0.6){2}
		!{(-5.5,2) }*+{ \textcircled{\scriptsize 7}\checkmark }="n04"
		"n2":"n04"^(0.6){4}
		!{(-4,2) }*+{ \textcircled{\scriptsize 8}\langle1,4\rangle }="n5"
		"n3":"n5"_{3}
		!{(-2,2) }*+{ \textcircled{\tiny 10}\checkmark }="n6"
		"n3":"n6"^{4}
		!{(-0.5,2) }*+{ \textcircled{\scriptsize 5} dup }="n7"
		"n03":"n7"_(0.6){2}
		!{(1,2) }*+{ \textcircled{\tiny 11}\checkmark }="n8"
		"n03":"n8"^(0.6){4}
		!{(-7,3) }*+{ \textcircled{\tiny 12} \checkmark }="n10"
		"n4":"n10"_{3}
		!{(-5.5,3) }*+{ \textcircled{\tiny 13} \XSup{} }="n11"
		"n4":"n11"^(0.6){4}
		!{(-4,3) }*+{ \textcircled{\scriptsize 9} dup }="n12"
		"n5":"n12"_(0.55){1}
		!{(-2.5,3) }*+{ \textcircled{\tiny 14} \XSup{} }="n13"
		"n5":"n13"^(0.6){4}
	}\]
	\begin{center}
		\small 
		DynamicHS-Tree for $\dpi_0$ \\ \vspace{3pt}
		$ \conf(\dpi_0) = \setof{\tuple{1,2,3},\tuple{2,4},\tuple{3,4},\tuple{1,4}}$ \\
		$ \sol(\dpi_0) = \setof{[1,2,3],[1,4],[2,4],[3,4]} = \mD$ \\
		\hdashrule[0.5ex]{0.99\columnwidth}{1pt}{1mm}  \\
		one measurement $m$ added 
		$\quad\Rightarrow\quad$ $\mD_{\checkmark} = \setof{[1,2,3],[3,4]}, \mD_{\times} = \setof{[1,4],[2,4]}$
		$\quad\Rightarrow\quad$ result: $\dpi_1$ \\
		\hdashrule[0.5ex]{0.99\columnwidth}{1pt}{1mm} 
		\\
		$ \conf(\dpi_1) = \setof{\tuple{3},\tuple{2,4},\tuple{1,4}}$\\
		$ \sol(\dpi_1) = \setof{[1,2,3],[3,4]}$
	\end{center}
	\[\xygraph{
		!{<0cm,0cm>;<0.8cm,0cm>:<0cm,-0.8cm>::}
		!{(-8,0) }*+{ \phantom{....} }="n1"
		!{(-3,0) }*+{ \textcircled{\scriptsize 1'}\langle3\rangle }="n1"
		!{(-1.5,1) }*+{ \textcircled{\scriptsize 4}\langle2,4\rangle }="n03"
		"n1":"n03"^{3}
		!{(-2.5,2) }*+{ \textcircled{\scriptsize 8}\langle1,4\rangle }="n5"
		"n03":"n5"_{2}
		!{(-0.5,2) }*+{ \textcircled{\tiny 11'} ? }="n6"
		"n03":"n6"^{4}
		!{(-3.5,3) }*+{ \textcircled{\tiny 12'} ? }="n12"
		"n5":"n12"_(0.55){1}
		!{(-1.5,3) }*+{ 
			\textcircled{\tiny 14} \XSup{} }="n13"
		"n5":"n13"^(0.6){4}
		!{(-1.5,3.5) }*+{ \framebox{\scriptsize Rep}}="nx1"
		!{(-3.5,3.5) }*+{ \framebox{\scriptsize Rep}}="nx2"
	}\]
	\begin{center}
		\small 
		DynamicHS-Tree after \textsc{updateTree} 
	\end{center}
	\caption{Tree pruning and redundancy checking example.}
	\label{fig:tree_pruning_example_HStree}
\end{figure}

\begin{example}
\label{ex:advanced_techniques}
We now showcase the workings of DynamicHS's tree update on a simple example, thereby also illustrating the discussed advanced techniques regarding the \emph{efficient redundancy checking} (Sec.~\ref{sec:efficient_redundancy_checking}), the \emph{avoidance of expensive reasoning} (Sec.~\ref{sec:avoidance_of_exp_reasoning}), as well as the \emph{space-saving duplicate storage and on-demand reconstruction} (Sec.~\ref{sec:space-saving_dup_storage}). Note that we already discussed the \emph{lazy updating policy} (Sec.~\ref{sec:lazy_updating_policy}) in terms of Example~\ref{ex:algo_description}.

Consider Fig.~\ref{fig:tree_pruning_example_HStree} (with a similar notation as used in Figs.~\ref{fig:dynhs_example} and \ref{fig:hs_example}) which depicts the hitting set tree produced by DynamicHS (iteration~1) for some DPI $\dpi_0$ in breadth-first order (see the node numbers \textcircled{\scriptsize t} signalizing that the respective node was generated at point in time $t$).
The sets of minimal conflicts and minimal diagnoses for $\dpi_0$ are given by $\conf(\dpi_0)$ and $\sol(\dpi_0)$ in the figure. 
We assume that DynamicHS uses the parameter $\ld := 5$, i.e., five minimal diagnoses (if existent) should be computed. Since there are only four minimal diagnoses (see $\sol(\dpi_0)$), DynamicHS executes until the queue is empty ($\Queue = [\,]$, see line~\ref{algoline:dyn:while} in Alg.~\ref{algo:dynamic_hs}), i.e., until the hitting set tree is complete. The resulting set of leading diagnoses $\mD$ corresponds to $\sol(\dpi_0)$ (nodes labeled by $\checkmark$ in Fig.~\ref{fig:tree_pruning_example_HStree}). Further, we assume that a (discriminating) measurement $m$ is added to $\dpi_0$, which leads to the new minimal conflict $\tuple{3}$ for the resulting DPI $\dpi_1$. As can be seen through a comparison of $\sol(\dpi_0)$ with $\sol(\dpi_1)$ in Fig,~\ref{fig:tree_pruning_example_HStree}, the leading diagnoses eliminated by the measurement $m$ are the nodes numbered \textcircled{\scriptsize 7} (corresponding to the node $[1,4]$) and \textcircled{\tiny 10} (node $[2,4]$). These two nodes are included in the set $\mD_{\times}$ given as an input argument to the second call of DynamicHS (iteration~2) in line~\ref{algoline:inter_onto_debug:call_DynHS} of Alg.~\ref{algo:sequential_diagnosis}. 

Now, when \textsc{updateTree} is invoked in iteration~2, the first step is the examination of the elements in $\mD_{\times}$ regarding their redundancy status (see description of the \textsc{updateTree} function in Sec.~\ref{sec:algo_walkthrough}). For node $\mathsf{nd}=[1,4]$, we have $\mathsf{nd.cs} = [\tuple{1,2,3},\tuple{2,4}]$. The QRC (see Sec.~\ref{sec:efficient_redundancy_checking}) executed on $\mathsf{nd}$ involves calling \textsc{findMinConflict} with argument\footnote{Note that we just mention the first element $\mo$ of the tuple $\tuple{\mo,\mb,\Tp,\Tn}$ passed to \textsc{findConflict} and write ``$\dots$'' for the remaining ones for simplicity and brevity. The reason is that we did not discuss the specific DPI underlying this example and that $\mo$ (the set from which a minimal conflict is to be computed) is sufficient to understand the discussed points.} $(\tuple{U_{\mathsf{nd.cs}}\setminus\mathsf{nd},\dots})$ which is equal to $(\tuple{\setof{1,2,3,4}\setminus\setof{1,4},\dots}) = (\tuple{\setof{2,3},\dots})$. Hence, the conflict $X=\tuple{3}$ is returned (cf.\ $\conf(\dpi_1)$), which is a subset of $\mathsf{nd.cs}[1]$ and thus must constitute a witness of redundancy of $\mathsf{nd} = [1,4]$.

As a next step, \textsc{prune} is called with argument $X$ (line~\ref{algoline:update:prune} in Alg.~\ref{algo:dynamic_hs}). At first, \textsc{prune} considers $\Queue_{dup} = [[3,2],[2,3,1]]$ (nodes numbered \textcircled{\scriptsize 5} and \textcircled{\scriptsize 9} in Fig.~\ref{fig:tree_pruning_example_HStree}) and cleans it up from 
redundant nodes.
At this, for each node $\mathsf{nd} \in \Queue_{dup}$, the algorithm runs through the conflicts $\mathsf{nd.cs}[i]$ from small to large $i$ and, if $X \subset \mathsf{nd.cs}[i]$,  \emph{(i)}~checks if $\mathsf{nd}[i] \in \mathsf{nd.cs}[i] \setminus X$ (is $X$ a witness of redundancy for $\mathsf{nd}$?) as well as \emph{(ii)}~replaces $\mathsf{nd.cs}[i]$ by $X$ (update of internal node labels). At the end of this traversal, the maximal index $i = k$, 
for which (i) is executed and true, is stored. 
Since, importantly, nodes are processed in ascending order of their size, the first processed node from $\Queue_{dup}$ in this concrete example is $\mathsf{nd} = [3,2]$ with $\mathsf{nd.cs} = [\tuple{1,2,3},\tuple{2,4}]$. For index $i=1$, we have $\tuple{3} \subset \tuple{1,2,3}$, which is why $\mathsf{nd.cs}[1]$ is replaced by $\tuple{3}$ in the course of step (ii). However, since $\mathsf{nd}[1] = 3$ is not an element of $\tuple{1,2,3}\setminus\tuple{3}$, check (i) is negative (no redundancy detected). 
For index $i=2$, it does not even hold that $\tuple{3} \subset \tuple{2,4}$, hence neither (i) nor (ii) are executed. The overall conclusion from this analysis is that $X$ is not a witness of redundancy for $\mathsf{nd}$. Consequently, there is no evidence up to this point that $\mathsf{nd}$ is redundant, which is why $\mathsf{nd}$ remains an element of $\Queue_{dup}$, however, with the modified conflict labels set $\mathsf{nd.cs} = [\tuple{3},\tuple{2,4}]$.

For the second node $[2,3,1]$ in $\Queue_{dup}$, a similar evaluation leads to the insight that this node is redundant and $k = 1$ (because $\tuple{3} \subset \tuple{1,2,3}$ and $2 \in \tuple{1,2,3}\setminus\tuple{3}$). However, instead of only discarding the node, the algorithm seeks
a replacement node that is non-redundant and \emph{set}-equal to $[2,3,1]$. To this end, it iterates through all already processed (and thus provenly non-redundant) nodes in $\Queue_{dup}$ (in this case only the node $[3,2]$) and tries to find some node of size $l$ which is set-equal to 
the first $l$ elements of the redundant node, for some $l \geq k$.
%

Indeed, since the first $2 \geq k = 1$ elements of nodes $[3,2]$ and $[2,3,1]$ are equal (when considered as a set), a replacement node for the latter can be constructed. This (re)constructed node is given by $\mathsf{ndnew} = [3,2,1]$ with $\mathsf{ndnew.cs} = [\tuple{3},\tuple{2,4},\tuple{1,4}]$ (i.e., figuratively spoken, the label $\mathit{dup}$ at node  \textcircled{\scriptsize 5} is replaced by the path including nodes \textcircled{\scriptsize 8} and \textcircled{\scriptsize 9}, cf.\ Fig.~\ref{fig:tree_pruning_example_HStree}).

Since all nodes of $\Queue_{dup}$ have at this point been processed, the \textsc{prune} method shifts its focus to the other collections $\Queue$, $\mD_{\supset}$, $\mD_{\times}$ and $\mD_{\checkmark}$. Essentially, the considerations made for these collections are equal to those explicated for $\Queue_{dup}$, with the only difference that solely elements of (the already cleaned up) $\Queue_{dup}$ are in line for being used
in the construction of replacement nodes
for redundant ones found in these collections. For instance, the node $\mathsf{n} = [1,2,3]$ with the number \textcircled{\tiny 12} is detected to be redundant ($k = 1$) when it comes to pruning $\mD_{\checkmark}$. Since, however, the first $3 \geq k = 1$ elements of $\mathsf{ndnew} = [3,2,1]$ are set-equal to the first $3$ elements of $\mathsf{n}$, the latter is replaced by $\mathsf{ndnew}$. 
After carrying out all pruning actions provoked by the witness of redundancy $X = \tuple{3}$ (detected by analysing the first node $[1,4]\in\mD_{\times}$), the reconstructed replacement node $\mathsf{ndnew}$ is now the only remaining node in the tree that corresponds to the set of edge labels $\setof{1,2,3}$. The fact that this set does constitute a minimal diagnosis wrt.\ $\dpi_1$ (cf.\ $\sol(\dpi_1)$) corroborates that the storage and adequate reconstruction of duplicates is pivotal for the completeness of DynamicHS. 
%

Note that the second node $[2,4]$ that was originally an element of $\mD_{\times}$ has meanwhile already been removed from $\mD_{\times}$ in the course of the pruning actions taken. The reason is that the conflict $X=\tuple{3}$ that was used as a basis for pruning is also a witness of redundancy for $[1,4]$. In fact, $\mD_{\times}=\emptyset$ holds after conflict $X=\tuple{3}$ has been processed. Hence, the for-loop in line~\ref{algoline:update:process_Dtimes_start} of Alg.~\ref{algo:dynamic_hs} terminates and no further pruning operations are conducted.
%

The pruned tree resulting from the execution of \textsc{updateTree} is shown at the bottom of Fig.~\ref{fig:tree_pruning_example_HStree}. Replacement nodes (i.e., those nodes which substitute deleted redundant nodes), are marked by \framebox{\scriptsize Rep}; their node number \textcircled{\scriptsize k} in the pruned tree is the number of the original deleted redundant node. 
Note that the second replacement node $[3,2,4]$ results from (the deleted) node \textcircled{\tiny 14} by a substitution of its first two edges $[2,3]$ by the duplicate $[3,2]$ (node \textcircled{\scriptsize 5}).
In addition, node labels changed during the pruning process are indicated by a prime (') symbol. In this concrete example, e.g., both relabeled nodes \textcircled{\tiny 11} and \textcircled{\tiny 12} originally represented minimal diagnoses for $\dpi_0$ and were returned by the first run of DynamicHS in terms of $\mD_{calc}$. Then, they were added to $\mD_{\checkmark}$ (line~\ref{algoline:inter_onto_debug:partition_D_into_Dcheckmark_and_Dtimes} in Alg.~\ref{algo:sequential_diagnosis}) since they are consistent with the added measurement $m$. Finally, at the end of \textsc{updateTree}, 
node \textcircled{\tiny 11} and the replacement node of node \textcircled{\tiny 12} were reinserted into the queue $\Queue$ of unlabeled nodes (preservation of best-first property; line~\ref{algoline:update:insert_sorted_1} in Alg.~\ref{algo:dynamic_hs}) because they ``survived'' the pruning.

As to the complexity of the tree pruning performed by DynamicHS in this example, 
the overall number of reasoner-invoking function calls amounts to merely a single ``easy'' call of \textsc{findMinConflict} (the executed QRC), whereas the (re)construction of a tree equivalent to DynamicHS's pruned tree,
carried out in iteration~2 when adopting (the stateless) HS-Tree, requires three ``hard'' \textsc{findMinConflict} calls 
(computations of 
conflicts $\tuple{3}$, $\tuple{2,4}$, $\tuple{1,4}$).\qed
\end{example}

\subsection{Correctness of DynamicHS}
\label{sec:correctness_proof}
\begin{theorem}\label{thm:dynHS_correctness}
	Let \textsc{findMinConflict} be a sound and complete method for conflict computation, i.e., given a DPI, it outputs a minimal conflict for this DPI if a minimal conflict exists, and 'no conflict' otherwise. Then, 
	DynamicHS (Alg.~\ref{algo:dynamic_hs}) is a sound, complete and best-first minimal diagnosis computation method. That is, given a DPI, DynamicHS computes all (completeness) and only (soundness) minimal diagnoses for this DPI in descending order (best-first property) of their probability as per a probability measure $\pr$ given as an input.
\end{theorem}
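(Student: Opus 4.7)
The plan is to establish soundness, completeness, and the best-first property simultaneously by induction on the iteration index $i$ of the outer sequential-diagnosis loop (Alg.~\ref{algo:sequential_diagnosis}), maintaining an invariant on the state tuple $\state = \tuple{\Queue,\Queue_{dup},\mD_{\supset},\mC_{calc}}$ that holds immediately after each \textsc{updateTree} call. The invariant asserts: (a) every conflict appearing in some $\mathsf{nd.cs}[j]$ or in $\mC_{calc}$ is a (possibly non-minimal) conflict for the current DPI $\dpi_i$; (b) the set of branches in $\Queue \cup \mD_{\supset} \cup \mD_{\checkmark} \cup \mD_{calc}$, together with the node-combination closure $\Queue^*_{dup}$ of the duplicate store, covers---up to set-equality of edge labels---every branch that a stateless Reiter HS-Tree for $\dpi_i$ built in $\pr$-order would explore; and (c) $\Queue$ is sorted in descending $\pr$-order.

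For the base case $i=1$, the initial state $\tuple{[\,[]\,],[\,],\emptyset,\emptyset}$ makes \textsc{updateTree} a no-op (since $\mD_{\checkmark}=\mD_{\times}=\emptyset$), and DynamicHS reduces to a variant of Reiter's HS-Tree that merely retains---instead of discarding---non-minimal diagnoses and duplicates; correctness in this case follows from Reiter's original proof (with the correction of \cite{greiner1989correction} for minimal conflicts as node labels). For the inductive step, I would first verify that \textsc{updateTree} re-establishes the invariant for $\dpi_{i+1}$: Property~\ref{property:impact_of_DPI_transition}.\ref{property:impact_of_DPI_transition:enum:conflicts_can_only_shrink} ensures (a) because supersets of minimal conflicts are still conflicts; Property~\ref{property:impact_of_DPI_transition}.\ref{property:impact_of_DPI_transition:enum:some_conflict_shrinks_or_new_conflict_arises} combined with the fact that only branches passing through a shrunken/new conflict can have become obsolete---and these must manifest among $\mD_{\times}$---justifies examining only $\mD_{\times}$ for redundancy witnesses; and the pruning-plus-reconstruction machinery of Sec.~\ref{sec:space-saving_dup_storage}, whose own correctness I would establish as a separate lemma from criteria \ref{enum:dup:RD1}--\ref{enum:dup:RD3}, maintains invariant (b). The reinsertions in lines~\ref{algoline:update:insert_sorted_0}, \ref{algoline:update:insert_sorted_0.5} and \ref{algoline:update:insert_sorted_1} then re-establish (c).

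With the invariant in place, \emph{soundness} follows because a processed $\mathsf{node}$ is labeled $\mathit{valid}$ only (i) via $\mathsf{node} \in \mD_{\checkmark}$, in which case---by the definition of $\mD_{\checkmark}$ in line~\ref{algoline:inter_onto_debug:partition_D_into_Dcheckmark_and_Dtimes} of Alg.~\ref{algo:sequential_diagnosis} and Property~\ref{property:impact_of_DPI_transition}.\ref{property:impact_of_DPI_transition:enum:diags_can_only_grow}---it is a diagnosis, and minimal by the $\pr$-ordering of $\Queue$ together with the $\pr(\tax)<0.5$ assumption (which ensures every proper subset is processed earlier); or (ii) via line~\ref{algoline:dlabel:return_valid}, in which case \textsc{findMinConflict} certifies $\mathsf{node}$ is a diagnosis, and the same priority argument forces minimality. \emph{Completeness} follows from invariant (b) plus the observation that every node expansion uses only \emph{current} minimal conflicts: the in-line minimality re-check at lines~\ref{algoline:dlabel:qx_1}--\ref{algoline:dlabel:return_X} guarantees that any reused conflict is either minimal for $\dpi_{i+1}$ or immediately triggers a pruning with its minimal subset. \emph{Best-first} is immediate from (c) and the reinsertion of $\mD_{\checkmark}$ into $\Queue$ after each update.

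The main obstacle I anticipate is the rigorous reconciliation of the \emph{lazy updating policy} with completeness. Concretely, I need a key technical lemma stating that if $\mathsf{nd.cs}[j]$ is non-minimal for $\dpi_{i+1}$ yet the QRC/CRC finds no witness of redundancy for $\mathsf{nd}$ from the currently known conflicts, then no minimal diagnosis for $\dpi_{i+1}$ passing through $\mathsf{nd}$ is ultimately lost: any later expansion of a descendant of $\mathsf{nd}$ either discovers the redundancy on the fly (via the reuse-time minimality test or a fresh \textsc{findMinConflict} call) and prunes the stale branch while installing a set-equal replacement from $\Queue_{dup}^*$, or proceeds through a conflict that remains a superset of some true minimal conflict and therefore cannot exclude any true minimal diagnosis below $\mathsf{nd}$. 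Combined with the sketch-level soundness/completeness argument already outlined in Sec.~\ref{sec:lazy_updating_policy}, establishing this lemma---together with a separate correctness lemma for \textsc{prune}'s on-demand replacement-node reconstruction from $\Queue_{dup}^*$---constitutes the bulk of the remaining technical work.
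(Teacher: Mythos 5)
Your overall architecture is close to the paper's: the paper also proves the theorem by induction over the number of DynamicHS calls in Alg.~\ref{algo:sequential_diagnosis}, and your soundness argument (a node can only become $\mathit{valid}$ via $\mD_{\checkmark}$ or via line~\ref{algoline:dlabel:return_valid}, with minimality forced by the $\pr$-ordering of $\Queue$ and the sub-$0.5$ component probabilities) is essentially the one in the paper. The decisive difference lies in the invariant. The paper works with a much leaner, node-level invariant maintained through the while-loop: for every minimal diagnosis $\md$ of the current DPI, either $\md \in \mD_{calc}$ or some node $\mathsf{n} \subseteq \md$ is in $\Queue$. Completeness and the best-first property then follow by a short contradiction argument from the queue ordering, and the lazy updating policy causes no trouble because stale non-minimal conflict labels only add extra branches while every label actually \emph{returned by} \textsc{dLabel} is a minimal conflict for the current DPI, so any processed node $\mathsf{node} \subset \md$ still generates a child $\subseteq \md$. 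Your invariant (b), by contrast, is a global covering correspondence with ``the'' stateless HS-Tree for $\dpi_i$ built in $\pr$-order; this is both stronger than what the theorem needs and not well-defined as stated, since the branch structure of a stateless HS-Tree depends on which minimal conflicts \textsc{findMinConflict} happens to return, which is not canonical. Maintaining that correspondence under lazy updating (where DynamicHS's labels and node set deliberately diverge from any freshly built tree) is exactly what you would then have to fight.

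This leads to the genuine gap: the two lemmas you defer---that lazy updating cannot lose a minimal diagnosis, and that \textsc{prune}'s on-demand reconstruction from $\Queue_{dup}$ correctly installs set-equal, non-redundant replacements---are not peripheral technicalities but the substance of the theorem; a proof that postpones them has not yet proved the statement. The paper discharges precisely these points via its two observations (\textsc{prune} deletes only provably redundant nodes and, whenever one exists, replaces a deleted node by a set-equal node from the duplicate store; only diagnoses for the current DPI ever enter $\mD_{calc}$) together with the case analysis that re-establishes the node-level invariant after \textsc{updateTree} and after each node is processed. If you recast your argument around that weaker invariant, your anticipated ``key technical lemma'' largely dissolves: you no longer need to show that the stale tree mirrors a rebuilt tree, only that for each minimal diagnosis a subset-node survives pruning (Observation~1) and that expansions always use current minimal conflicts (the minimality re-check in lines~\ref{algoline:dlabel:qx_1}--\ref{algoline:dlabel:return_X} and the fresh computation in line~\ref{algoline:dlabel:qx_2}). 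One further small caveat: your base case appeals to Reiter/Greiner, but since DynamicHS stores duplicates and non-minimal diagnoses rather than closing them and identifies nodes as edge-label \emph{lists}, you would still owe a short argument that these changes do not affect which nodes get labeled $\mathit{valid}$ in the first iteration; the paper instead proves the base case directly with the same invariant machinery.
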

\begin{proof}\emph{(Sketch)}
	A fully detailed proof of this theorem can be found in \cite[Sec.~12.4]{Rodler2015phd}. Here, we explicate the main proof ideas.\vspace{5pt}
	

	\noindent\emph{Proof of Completeness and of the Best-First Property:} 
	We have to show that DynamicHS, given the parameter $\ld := k$, 
	outputs the $k$ best minimal diagnoses (according to $\pr$). First, we make two general observations, and then we prove the completeness by contradiction.

\emph{(Observation 1):} Only such nodes can be deleted by \textsc{prune} which are provably redundant (i.e., irrelevant), and, whenever existent, a suitable replacement node is extracted from $\Queue_{dup}$ (which stores all possible replacement nodes) to replace the deleted node (cf.\ Sec.~\ref{sec:algo_walkthrough}). 

\emph{(Observation 2):} During the execution of DynamicHS given the DPI $\dpi_k$, only diagnoses for $\dpi_k$ can be added to $\mD_{calc}$.

\emph{(Proof by Contradiction):} Assume that DynamicHS returns a set $\mD_{calc}$ with $|\mD_{calc}|=k$ and one of the $k$ best (according to $\pr$) minimal diagnoses is not in $\mD_{calc}$ (i.e., has not been computed by DynamicHS). We denote this non-found diagnosis by $\md'$. Since $\md'$ is one of the $k$ best minimal diagnoses, we have that some of the returned $k$ diagnoses in $\mD_{calc}$ must have a lower probability as per $\pr$ than $\md'$; let us call this diagnosis $\md''$. Below, we will show that, for any minimal diagnosis $\md$ for the relevant DPI  $\langle\mo,\mb,\Tp\cup\Tp',\Tn\cup\Tn'\rangle$
considered by DynamicHS, the following invariant (\emph{INV}) holds during the execution of the main while-loop (between lines~\ref{algoline:dyn:while}--\ref{algoline:dyn:add_to_Queue}) of DynamicHS: \emph{$\md \in \mD_{calc}$ or there is some node $\mathsf{n} \subseteq \md$ in $\Queue$.}
Due to the properties of $\pr$ (cf.\ Sec.~\ref{sec:Reiters_HS-Tree}) and because $\md'$ has a greater probability than $\md''$, each 
subset of $\md'$ has a greater probability than $\md''$. Since $\md' \not\in\mD_{calc}$ by assumption, we infer that $\mathsf{n}\in\Queue$ for some node $\mathsf{n}\subseteq \md'$ when DynamicHS terminates. This is a contradiction to $\md'' \in \mD_{calc}$ due to the sorting of $\Queue$ in descending order of probability and the fact that only elements of $\Queue$ can be added to $\mD_{calc}$ in DynamicHS.

We now demonstrate that the invariant INV holds, by induction over the number $n$ of times DynamicHS has already been called in Alg.~\ref{algo:sequential_diagnosis}.

\emph{(Induction Base):} Assume $n=1$, i.e., DynamicHS is called for the first time in Alg.~\ref{algo:sequential_diagnosis}. This has three implications: \emph{(1)}~At the time the while-loop is entered, the empty node $[]$ (cf.\ line~\ref{algoline:inter_onto_debug:var_init_end} in Alg.~\ref{algo:sequential_diagnosis}), which is a subset of any minimal diagnosis, is in $\Queue$. Hence, INV holds from the outset. 
\emph{(2)}~$\mD_{\checkmark} = \emptyset$ (cf.\ line~\ref{algoline:inter_onto_debug:var_init_mD_checkmark+mD_times} in Alg.~\ref{algo:sequential_diagnosis} and note that \textsc{updateTree} does not modify $\mD_{\checkmark}$). As a consequence, for each node from $\Queue$ that is processed throughout the execution of the while-loop of DynamicHS, the \textsc{dLabel} function is called (because line~\ref{algoline:dyn:set_L_to_valid} cannot be reached).
\emph{(3)}~The \textsc{prune} function (line~\ref{algoline:dlabel:prune}) cannot be called during the execution of the while-loop (as there cannot be any non-minimal conflicts due to the soundness of \textsc{findMinConflict}).  

Now, assume an arbitrary minimal diagnosis $\md$ for the DPI $\dpi_0$ 
relevant in the first call of DynamicHS. We next show that INV remains true for $\md$ after any node $\mathsf{node}$ is processed within DynamicHS's while-loop. There are two possible cases: \emph{(a)}~$\mathsf{node} \not\subseteq \md$ and \emph{(b)}~$\mathsf{node} \subseteq \md$. 

Let us consider case (a) first. 
Since no pruning is possible as argued above, i.e., no nodes can be deleted from any node collection stored by DynamicHS except $\Queue$ (cf.\ line~\ref{algoline:dyn:get_first}), the following holds. The processing (and deletion from $\Queue$) of $\mathsf{node}$, which is in no subset-relationship with $\md$, \emph{(i)}~cannot effectuate an elimination 
of any other node from $\Queue$ (in particular, this holds for all nodes being equal to or a subset of $\md$), and 
\emph{(ii)}~cannot modify $\mD_{calc}$. Hence, since INV held before $\mathsf{node}$ was processed, INV must still hold thereafter in case (a).

Now, assume case (b). Here, we have again two cases: \emph{(b1)}~$\mathsf{node} \subset \md$ and \emph{(b2)}~$\mathsf{node} = \md$. Suppose (b1) first, i.e., let $\mathsf{node} \subset \md$ be processed. 
Because \textsc{dLabel} is called for any processed node by the argumentation above, we have that line~\ref{algoline:dlabel:return_valid} is the only place where nodes can be assigned the label $\mathit{valid}$. 
Hence, if some node is assigned the label $\mathit{valid}$, this means that \textsc{findMinConflict} in line~\ref{algoline:dlabel:qx_2} must have returned 'no conflict' for it, which is why this node is a diagnosis by the completeness of \textsc{findMinConflict}.
Consequently, $\mathsf{node}$ cannot be labeled $\mathit{valid}$ because it is a proper subset of a minimal diagnosis. 
Moreover, $\mathsf{node}$ cannot be labeled $\mathit{nonmin}$ in line~\ref{algoline:dlabel:non-min_crit_end} as there cannot be a subset of $\mathsf{node}$ in $\mD_{calc}$ due to Observation~2 and the fact that $\mathsf{node}$ is a proper subset of a minimal diagnosis.
%
As a result, the \textsc{dLabel} function must return in either of the lines~\ref{algoline:dlabel:return_C}, \ref{algoline:dlabel:return_X} or \ref{algoline:dlabel:return_new_cs}, in each of which cases a minimal conflict set is returned. This conflict $L$ is then used to generate a new node $\mathsf{node}_e$ for each $e \in L$ (lines~\ref{algoline:dyn:for_e_in_L}--\ref{algoline:dyn:add_cs_to_node.cs}), where $|\mathsf{node}_e| = |\mathsf{node}| + 1$ and, for some $e$, $\mathsf{node}_e \subseteq \md$ must hold (if the latter was not the case then $\md$ would not hit the minimal conflict $L$, which is a contradiction to $\md$ being a minimal diagnosis).
For each of these nodes $\mathsf{node}_e$, either a set-equal node is already in $\Queue$ or $\mathsf{node}_e$ is added to $\Queue$ (cf.\ lines~\ref{algoline:dyn:check_node_already_in_Q}--\ref{algoline:dyn:add_to_Queue} and note that no node set-equal to $\mathsf{node}_e$ can be in $\mD_{\supset}$ due to Observation~2 and lines~\ref{algoline:dlabel:non-min_crit_start}--\ref{algoline:dlabel:non-min_crit_end}). 
Hence, in case (b1), INV remains true after $\mathsf{node}$ has been processed.

Finally, assume case (b2). Because \textsc{dLabel} is called for any processed node by the argumentation above, it must be called for $\mathsf{node}$. Since $\mathsf{node}$ however is equal to the minimal diagnosis $\md$, $\textsc{dLabel}$ will return $\mathit{valid}$ (this follows from Observation~2 and the fact that diagnoses are hitting sets of all conflicts). Due to lines~\ref{algoline:dyn:if_L_valid}--\ref{algoline:dyn:add_to_Dcalc}, this means that $\md \in \mD_{calc}$ will hold at the beginning of the next iteration of the while-loop. Consequently, also in case (b2), INV still holds after the processing of $\mathsf{node}$. 
This completes the proof of the Induction Base.

\emph{(Induction Assumption):} Assume INV holds for all 
$n\leq k$. 

\emph{(Induction Step):} Now, let $n=k+1$. That is, we consider the $(k+1)$-th call of DynamicHS in Alg.~\ref{algo:sequential_diagnosis}. We assume again an arbitrary minimal diagnosis $\md$ for the DPI $\dpi_{k}$ 
relevant in this call of DynamicHS.\footnote{Please note that the original DPI considered by the first call of DynamicHS during a sequential diagnosis session is referred to as $\dpi_0$ (cf.\ line~\ref{algoline:inter_onto_debug:call_DynHS} in Alg.~\ref{algo:sequential_diagnosis}). Hence, the DPI relevant to the $k$-th call of DynamicHS is denoted by $\dpi_{k-1}$.} By the fact that each minimal diagnosis for $\dpi_{k}$ is either equal to or a superset of some diagnosis for $\dpi_{k-1}$ (Property~\ref{property:impact_of_DPI_transition}.\ref{property:impact_of_DPI_transition:enum:diags_can_only_grow}), and since for each minimal diagnosis $\md'$ for $\dpi_{k-1}$ either $\md'$ was in $\mD_{calc}$ or some node $\mathsf{n} \subseteq \md'$ was in $\Queue$ when the $k$-th call of DynamicHS returned (Induction Assumption), we infer that some node corresponding to a subset of $\md$ is either in one of $\mD_{\checkmark}$ or $\mD_{\times}$ (cf.\ line~\ref{algoline:inter_onto_debug:partition_D_into_Dcheckmark_and_Dtimes} in Alg.~\ref{algo:sequential_diagnosis} where $\mD_{calc}$ is split into $\mD_{\checkmark}$ and $\mD_{\times}$) or in $\Queue$ at the beginning of the $(k+1)$-th execution of DynamicHS. The first steps in this execution are setting $\mD_{calc} = \emptyset$ and calling the function \textsc{updateTree}. Throughout \textsc{updateTree}, some nodes might be pruned (and potentially replaced by set-equal nodes), and all non-pruned nodes from $\mD_{\times}$ as well as all nodes from $\mD_{\checkmark}$ are finally reinserted into $\Queue$. Moreover, each non-pruned node from $\mD_{\supset}$ for which there is no known diagnosis that is a subset of it is added to $\Queue$ at the end of \textsc{updateTree}. By 
Observation~1
and since $\md$ is a minimal diagnosis and thus relevant, we have that there must be a node $\mathsf{n} \subseteq \md$ in $\Queue$ when the while-loop of the $(k+1)$-th DynamicHS call is entered. That is, INV holds at the beginning of the while-loop. 

That INV remains true for $\md$ after any node $\mathsf{node}$ is processed within the while-loop, is shown analogously (i.e., same case analysis and argumentation) as expounded for the Induction Base,
except for two aspects: $\mD_{\checkmark} \neq \emptyset$ and the \textsc{prune} function (line~\ref{algoline:dlabel:prune}) might be called. 
Consequences of these aspects are:
\emph{(1)}~By Observation~1, during the execution of the while-loop of DynamicHS, the last remaining node in $\Queue$ which is a subset of some minimal diagnosis cannot be pruned without being replaced by a set-equal node. Neither can a minimal diagnosis be removed from $\mD_{calc}$ without being substituted by a set-equal node. Therefore, for every execution of \textsc{prune} (line~\ref{algoline:dlabel:prune}), if INV holds prior to it, INV holds after it finishes. \emph{(2)}~Assume $\mathsf{node} \subset \md$ and $\mathsf{node} \in \mD_{\checkmark}$. Due to Observation~2 and line~\ref{algoline:inter_onto_debug:partition_D_into_Dcheckmark_and_Dtimes} in Alg.~\ref{algo:sequential_diagnosis}, $\mD_{\checkmark}$ includes only diagnoses for the DPI $\dpi_{k-1}$ relevant to the preceding ($k$-th) call of DynamicHS in Alg.~\ref{algo:sequential_diagnosis}, and 
each diagnosis in $\mD_{\checkmark}$ during the $(k+1)$-th call of DynamicHS throughout Alg.~\ref{algo:sequential_diagnosis} is a diagnosis for the DPI $\dpi_{k}$. Hence, the assumptions $\mathsf{node} \in \mD_{\checkmark}$ and $\mathsf{node} \subset \md$ are in contradiction to our assumption that $\md$ is a minimal diagnosis for $\dpi_k$. Equivalently: $\mathsf{node} \subset \md$ implies $\mathsf{node} \notin \mD_{\checkmark}$.

The impact of (1) and (2) on the case analysis (cf.\ Induction Base) is as follows: 
The argumentation for the case where $\mathsf{node} \not\subseteq \md$ is processed is analogous to case (a) for the Induction Base. The proof for the case $\mathsf{node} \subset \md$ is equal to case (b1) for the Induction Base since 
\textsc{dLabel} must be called for $\mathsf{node}$ (due to $\mathsf{node} \notin \mD_{\checkmark}$). Finally, the case $\mathsf{node} = \md$ is treated as demonstrated in case (b2) in the Induction Base because, if $\mathsf{node} \in \mD_{\checkmark}$, then it is simply directly labeled 
$\mathit{valid}$
in line~\ref{algoline:dyn:set_L_to_valid} (no call of \textsc{dLabel})---hence, whether or not the \textsc{dLabel} function is called, $\md \in \mD_{calc}$ will hold after $\mathsf{node}$ having been processed. This completes the proof of the Induction Step, and thus the entire proof.\vspace{5pt}

\noindent\emph{Proof of Soundness:} We have to show that DynamicHS outputs only minimal diagnoses. That is, we need to demonstrate that every element in $\mD_{calc}$ satisfies the diagnosis property and the minimality property.
%
Since each call of DynamicHS in the course of Alg.~\ref{algo:sequential_diagnosis} outputs one set $\mD_{calc}$, we prove the soundness by induction over the number $n$ of times DynamicHS has already been called in Alg.~\ref{algo:sequential_diagnosis}.

\emph{(Induction Base):} Assume $n=1$, i.e., DynamicHS is called for the first time in Alg.~\ref{algo:sequential_diagnosis} and returns $\mD_{calc}$. Let $\mathsf{node} \in \mD_{calc}$. A node is added to $\mD_{calc}$ iff it has been labeled $\mathit{valid}$. There are two ways a node may be labeled $\mathit{valid}$, i.e., \emph{(i)}~in line~\ref{algoline:dyn:set_L_to_valid} and \emph{(ii)}~in line~\ref{algoline:dyn:dlabel}. Note that case (i) is impossible since $n=1$ which means that $\mD_{\checkmark} = \emptyset$ (cf.\ Alg.~\ref{algo:sequential_diagnosis}) and thus line~\ref{algoline:dyn:set_L_to_valid} can never be reached. Therefore, case (ii) applies to $\mathsf{node}$. That is, its label $\mathit{valid}$ is assigned by the \textsc{dLabel} function. Hence, \textsc{dLabel} must return in line~\ref{algoline:dlabel:return_valid}. From this we conclude that the \textsc{findMinConflict} call in line~\ref{algoline:dlabel:qx_2} returns 'no conflict' which implies that $\mathsf{node}$ is a diagnosis (due to the completeness of \textsc{findMinConflict}).
Assume there is a diagnosis $\md$ such that $\md \subset \mathsf{node}$. Since case (ii) is true for $\mathsf{node}$, there cannot be any such diagnosis $\md$ in $\mD_{calc}$ due to lines~\ref{algoline:dlabel:non-min_crit_start}--\ref{algoline:dlabel:non-min_crit_end}, because otherwise 
$\mathsf{node}$ would have been labeled $\mathit{nonmin}$
and line~\ref{algoline:dlabel:return_valid} could not have been reached. However, due to the completeness of DynamicHS, and since $\md$ must be ranked higher as per $\pr$ than $\mathsf{node}$ (cf.\ the definition of $\pr$ in Sec.~\ref{sec:Reiters_HS-Tree}), and since $\md$ is a diagnosis, $\md$ must already be included in $\mD_{calc}$ when $\mathsf{node}$ is added. This is a contradiction. Therefore, for $n=1$ (i.e., for the first call of DynamicHS in Alg.~\ref{algo:sequential_diagnosis}), the output $\mD_{calc}$ contains only minimal diagnoses.

\emph{(Induction Assumption):} Assume $\mD_{calc}$ contains only minimal diagnoses for $n\leq k$. 

\emph{(Induction Step):} Now, let $n=k+1$ and $\mathsf{node} \in \mD_{calc}$. Analogously to the argumentation above, we again have the two possibilities (i) and (ii) of how $\mathsf{node}$ might have attained its label $\mathit{valid}$. Suppose case (i) first. That is, $\mathsf{node} \in \mD_{\checkmark}$. By line~\ref{algoline:inter_onto_debug:partition_D_into_Dcheckmark_and_Dtimes} of Alg.~\ref{algo:sequential_diagnosis} (\textsc{assignDiagsOkNok}, cf.\ Sec.~\ref{sec:embedding_in_seq_diag_process}), $\mD_{\checkmark} \subseteq \mD_{calc}$ where $\mD_{calc}$ is the output of the the previous, i.e., the $k$-th, call of DynamicHS. Due to the Induction Assumption, we have that $\mD_{\checkmark}$ includes only minimal diagnoses for the DPI $\dpi_{k-1}$ considered in the $k$-th iteration, i.e., the DPI $\dpi_{k}$ considered in the $(k+1)$-th iteration without the most recently added measurement. However, \textsc{assignDiagsOkNok} adds to $\mD_{\checkmark}$ exactly those diagnoses that are consistent with the new measurement. Consequently, the diagnoses in $\mD_{\checkmark}$ are consistent with all measurements included in $\dpi_{k}$, and thus are diagnoses for $\dpi_{k}$. Due to Property~\ref{property:impact_of_DPI_transition}.\ref{property:impact_of_DPI_transition:enum:diags_can_only_grow}, no diagnosis for $\dpi_{k}$ can be a proper subset of any diagnosis for $\dpi_{k-1}$. Thus, all elements of $\mD_{\checkmark}$ must be minimal diagnoses which is why $\mathsf{node}$ must be a minimal diagnosis. 
For the other case (ii), the argumentation is exactly as for the Induction Base.
\end{proof}

\section{Related Work}
\label{sec:related_work}
Algorithms for diagnosis computation can be categorized (at least) according to the following dimensions: 
\begin{enumerate}
	\item \label{related_work:dimension_completeness}\emph{Incomplete} vs.\ \emph{complete}:\footnote{This dimension is sometimes also characterized as \emph{approximate} vs.\ \emph{exhaustive}, cf., e.g., \cite{jannach2016parallel}. However, note that ``approximation'' in this categorization refers to the approximation of the set of all (minimal) diagnoses achieved by the algorithm, and not, e.g., that computed sets are only approximate hitting sets of all conflicts, i.e., hit only a predefined minimal fraction of all conflicts \cite{vinterbo2000minimal}.} Complete algorithms
	guarantee to output all minimal diagnoses (given arbitrary time and memory). Examples of complete algorithms are HS-Tree \cite{Reiter87}, HS-DAG \cite{greiner1989correction}, GDE \cite{dekleer1987}, HST-Tree \cite{wotawa2001variant}, StaticHS \cite{rodler2018statichs}, Inv-HS-Tree \cite{Shchekotykhin2014}, BHS-Tree and Boolean algorithm \cite{lin2003computation}, HSSE-Tree \cite{xiangfu2006method}, SDE \cite{stern2012exploring}, RBF-HS and HBF-HS \cite{rodler2020_rbfhs,rodler2020dx_rbfhs}, the suite of algorithms presented in \cite{Rodler2015phd}, the parallel hitting set algorithms suggested by \cite{jannach2016parallel}, and a hypergraph inversion method proposed in \cite{haenni1998generating}.  
	Incomplete approaches, 
%
	in contrast, are usually geared towards computational efficiency, at the cost of not giving a completeness guarantee. 
	Examples of (generally) imcomplete algorithms are Genetic Algorithm \cite{li2002computing}, SAFARI \cite{feldman2008computing}, STACCATO \cite{abreu2009low}, CDA$^*$ \cite{williams2007conflict}, HDiag \cite{siddiqi2007hierarchical}, and NGDE \cite{dekleer2009mininimum}.  
	Also, incompleteness might be a consequence of a special focus of the diagnosis search, e.g., if the goal is to determine only (the) minimal cardinality (of) diagnoses \cite{siddiqi2007hierarchical,shi2010exact,dekleer2011hitting}. 
	In critical diagnosis applications, such as in medical applications \cite{schulz2010pitfalls,rector2011getting}, completeness is an important criterion as incompleteness might prevent the finding of the actual fault in the diagnosed system.\footnote{Note that soundness, i.e., that the algorithm outputs \emph{only} minimal diagnoses, is a very important property as well (and probably even more crucial than completeness). However, since soundness is usually a minimum presupposed property of diagnosis computation algorithms (and in fact satisfied by almost all algorithms in literature), we do not include a separate dimension that distinguishes between sound and unsound methods. 
	Nevertheless, it is noteworthy that there are two ways an algorithm may violate soundness: it may violate the diagnosis property or the minimality property. Usually, the former is a more severe issue (components that are not related in any way with the system's abnormality might be output), whereas the latter (unnecessarily many components, which however do explain the system's abnormality, may be output) can be fixed by performing a post-processing of the output diagnoses, 
	see, e.g., the $\mu$ function used to clean the output of BHS-Tree from non-minimal diagnoses \cite{haenni1998generating,lin2003computation}.}
	\item \emph{Best-first} vs.\ \emph{any-first}: Best-first approaches
	generate diagnoses in a specific order, usually guided by some preference criterion or heuristic, e.g., a probability measure or 
	minimum-cardinality-first.
	Among those, we further distinguish between \emph{general} best-first and \emph{focused} algorithms. The former, e.g., \cite{dekleer1987,Reiter87,greiner1989correction,dekleer1991focusing_prob_diag,Rodler2015phd,rodler2018statichs,rodler2020_rbfhs,rodler2020dx_rbfhs}, can output diagnoses in best-first order according to any (set-)monotonic function\footnote{A \emph{(set-)monotonic function} $f$ is one for which $f(X) \leq f(Y)$ whenever $X \subseteq Y$. Note that component fault probabilities satisfying the condition that each component's probability is below $0.5$ yield a diagnosis probability measure that is monotone (under the common framework for deriving diagnosis probabilities from component probabilities proposed in \cite{dekleer1987}). See \cite[Sec.~4.6.2]{Rodler2015phd} for more details.} that is used as a preference criterion, whereas the latter, e.g., \cite{darwiche2001decomposable,torasso2006model,siddiqi2007hierarchical,abreu2009low,dekleer2009mininimum,dekleer2011hitting}, consider only one particular preference criterion, usually 
	minimum-cardinality,
	or cannot 
	handle arbitrary (monotonic) preference functions.
	Any-first methods, e.g., \cite{lin2003computation,pill2012optimizations,Shchekotykhin2014}, on the other hand, do not ensure any particular order in which diagnoses are output, but still might attempt to generate preferred diagnoses first in a heuristic way \cite{rodler2020mbd_sampling}. One particular advantage of best-first methods is that they allow for a more reliable early termination\footnote{We refer by \emph{early termination} to the stopping of a sequential diagnosis session while there are \emph{multiple} minimal diagnoses left for the considered diagnosis problem, cf.\ \cite{dekleer1987}.} of a sequential diagnosis process (cf.\ Alg.~\ref{algo:sequential_diagnosis}). That is, if one of $k$ computed diagnoses has a very high probability compared to each of the other $k-1$ computed diagnoses, then it has a very high probability compared to each unknown (not yet computed) diagnosis as well (because it is guaranteed that all unknown diagnoses have a lower probability than the $k$ known ones). 
	%
	%
	A pro of any-first algorithms is that they
	are more flexible in terms of the search strategy pursued in the diagnosis search. For example, the Inv-HS-Tree method proposed by \cite{Shchekotykhin2014} does not need to stick to a (worst-case exponential-space) breadth-first or uniform-cost search (as comparable general best-first complete algorithms usually have to), but can use a more space-efficient depth-first search strategy. Hence, there are problems that can be solved by means of any-first methods, where similar best-first approaches run out of memory \cite{Shchekotykhin2014}. Unfortunately, any-first methods are not always appropriate and useful, e.g., for diagnosis tasks where (only) minimum-cardinality diagnoses are of interest.\footnote{Any-first methods would need to compute \emph{all} minimal diagnoses to find the minimum cardinality of diagnoses and thus to return a set of (proven) minimum-cardinality diagnoses.}
	%
	\item \label{enum:conflict-based_vs_direct} \emph{Conflict-based} vs.\ \emph{direct}: Conflict-based algorithms\footnote{
		These
	are sometimes referred to as \emph{conflict-to-diagnoses algorithms}, see, e.g., \cite{dekleer2011hitting}.}, e.g., \cite{dekleer1987,Reiter87,greiner1989correction,wotawa2001variant,lin2003computation,xiangfu2006method,dekleer2011hitting,stern2012exploring,Rodler2015phd,jannach2016parallel,rodler2018statichs,rodler2020_rbfhs,rodler2020dx_rbfhs},
	rely on the \mbox{(pre-)}computation of (a set of) conflicts and diagnoses are determined as hitting sets of all conflicts for a diagnosis problem. On the contrary, direct algorithms compute diagnoses without the indirection via conflicts. 
	Direct algorithms can be divided into approaches based on \emph{(i)~divide-and-conquer computation} \cite{felfernig2012efficient,marques2013computingMCSs,mencia2014relaxations,Shchekotykhin2014} 
	and 
	\emph{(ii)~compilation techniques} that translate the problem to a target language such as prime implicates \cite{dekleer1990prime_implicates}, decomposable negation normal form (DNNF) \cite{darwiche2001decomposable}, ordered binary decision diagrams (OBDD) \cite{torasso2006model} 
	or SAT \cite{metodi2014novel,marques2015MaxSATMBD}. 
	Methods of class (i), which are realized by means of (adapted) algorithms \cite{junker04,marques2013minimal} for solving the MSMP\footnote{MSMP refers to the problem of finding a \emph{minimal set over a monotone predicate} \cite{marques2013minimal,marques2017MSMP}. A \emph{monotone predicate} $p$ is a function that maps a set to a value in $\setof{0,1}$ with the property that $p(Y)=1$ whenever $p(X)=1$ and $Y \supseteq X$. Instances of MSMP problems are manifold, among them the problems of computing prime implicates \cite{marquis2000consequence}, minimal conflicts \cite{junker04}, minimal hitting sets \cite{felfernig2012efficient}, or minimal equally discriminating measurements for sequential diagnosis \cite{DBLP:journals/corr/Rodler2017}.} problem, are attractive, e.g., in diagnosis domains\footnote{One such domain is given by \emph{ontology alignment / matching} \cite{euzenat2011ontology} applications, where automated systems propose a set of (logical) correspondences between terms of two input knowledge bases describing similar domains, with the aim to integrate the expressed knowledge in a meaningful way. An example of such a correspondence would be that \emph{human} (term of the first ontology) is equivalent to \emph{person} (term of the second ontology). Since many correspondences are added to the union of both ontologies at once, it is often the case that a large set of conflicts arises \cite{meilicke2011thesis}.} 
	involving systems where a high number of components tend to be simultaneously faulty. In such a case, they mitigate memory issues arising for (complete) conflict-based algorithms by their ability to utilize space-efficient search techniques while preserving soundness of diagnosis computation \cite{shchekotykhin2012direct}. A drawback of these algorithms is their usual inability to enumerate diagnoses best-first as per a predefined preference order.
	An advantage of the problem-rewriting techniques of class (ii) is their ability to leverage highly optimized solving techniques from other domains (e.g., SAT), or the usually achieved polynomial complexity of (diagnostic) inference once the target representation has been compiled. On the downside, there is no guarantee that the representation is not exponential in size, and the published theories for compiling system descriptions mainly focus on propositional logic, which makes the (efficient) applicability of such techniques questionable for systems modeled in languages whose expressivity goes beyond propositional logic, e.g., ontology debugging problems \cite{kalyanpur2006thesis,Shchekotykhin2012}.

	Conflict-based techniques can be further categorized as follows:
	\begin{enumerate}
		\item \label{enum:on-the-fly_vs_preliminary}\emph{On-the-fly} vs.\ \emph{preliminary} (conflict computation): Since the precomputation of (all) minimal conflicts can be very expensive and is generally intractable\footnote{The computation of conflicts is dual to the computation of diagnoses \cite{stern2012exploring}, and hence it is NP-hard to decide if there is an additional minimal conflict given a set of minimal conflicts \cite{Bylander1991}.}, on-the-fly algorithms, e.g., \cite{Reiter87,greiner1989correction,wotawa2001variant,Rodler2015phd,jannach2016parallel,rodler2018statichs,rodler2020_rbfhs,rodler2020dx_rbfhs}, compute conflicts on demand in the diagnosis computation process. The main rationale behind this on-demand strategy is that diagnoses (hitting sets of all conflicts) can be computed even if not all conflicts are explicitly known\footnote{Note that to verify that a set is a diagnosis (i.e., hits all, possibly unknown, conflicts) a consistency check is sufficient. The hit conflicts thus do not need to be explicitly given or computed.}, and that conflict computation is usually (by far) the most expensive operation in the course of diagnosis computation (cf., e.g., \cite{pill2011eval_hitting_set_algos}).
		Especially in a sequential diagnosis setting (which is also the focus of this work), where a sample of (in principle only two \cite{Rodler2015phd}) computed diagnoses suffices and the diagnoses have the primary function to serve as a guideline and give evidence for proper measurement selection, normally not nearly all conflicts need to be precomputed to obtain this sample. Preliminary algorithms, on the other hand, assume the collection of conflicts to be given as an input to the diagnosis computation procedure. Instances of preliminary algorithms are \cite{li2002computing,lin2003computation,xiangfu2006method,abreu2009low,dekleer2011hitting,pill2012optimizations}. A way to view preliminary approaches is that they decouple the (model-based) reasoning, or, equivalently, the conflict computation, from the hitting set calculations. Sometimes the conflicts also do not result from reasoning, but from simulation, e.g., in spectrum-based diagnosis approaches \cite{abreu2007accuracy}, where ``conflicts'' correspond to so-called (program) spectra.\footnote{Roughly, a (program execution) spectrum is a set (sequence) of the system components (often: lines of program code) involved in one particular execution of the system (often: program). If an execution exhibits faulty behavior, the associated spectrum can be viewed as a conflict since at least one component in it must be faulty.}
		There are also systems for which the categorization is not clear-cut, such as GDE \cite{dekleer1987}, which interleaves conflict computation phases with diagnosis finding phases. 
		We finally note that any on-the-fly method can be used in a preliminary fashion (by precomputing, if possible, all conflicts and then using the collection of these conflicts instead of the on-the-fly reasoning); the inverse does not hold in general.  
		\item \emph{Centralized} vs.\ \emph{distributed}: Centralized algorithms (of which \emph{on-the-fly} and \emph{preliminary} instances exist)
		consider the diagnosis computation from conflicts as \emph{one} problem. Note that all of the works referenced when discussing conflict-based algorithms in \ref{enum:conflict-based_vs_direct} above are centralized.
		In contrast, distributed approaches (which are collectively \emph{preliminary}) 
		attempt to leverage the structure inherent in the collection of conflicts, e.g., linearity \cite{zhao2016deriving}, or equivalence classes based on conflict overlapping and disjointness \cite{zhao2013distributed}, to speed up the hitting set computation. More precisely, they seek to decompose the hitting set computation problem into suitable sub-problems, consider these sub-problems separately and finally merge the obtained partial results to determine the overall result in terms of the hitting sets. Obviously, the (potentially large) performance gains achieved by distributed approaches depend strongly on the ``richness'' of the exploitable structure in the conflict data. A drawback of such approaches is that a reasonable a-priori analysis for proper problem decomposition requires the precomputation of (a substantial set of) conflicts.
	\end{enumerate}  
	\item \emph{Black-box} vs.\ \emph{glass-box}: Black-box techniques, e.g., \cite{Reiter87,greiner1989correction,wotawa2001variant,Shchekotykhin2014,Rodler2015phd,jannach2016parallel,rodler2018statichs,rodler2020_rbfhs,rodler2020dx_rbfhs}, use the logical inference engine as is, as a pure oracle that answers queries (i.e., performs consistency checks) throughout the diagnosis computation process.\footnote{The distinction between black-box and glass-box techniques in this context was first suggested by Parsia et al.\ \cite{parsia2005debugging}.} Hence, black-box techniques are completely \emph{reasoner-independent} and able to use \emph{any logic} along with \emph{any reasoner} that can perform sound and complete consistency checks over this logic. Glass-box techniques \cite{kalyanpur2006thesis,schlobach2007debugging,baader2008axiom_pinpointing,Horridge2011a} try to make the determination of diagnoses more efficient by exploiting non-trivial modifications of the internals of the reasoner. These can be algorithmic modifications---e.g., the extraction of a conflict or conflict-related information as a byproduct of a negative consistency check \cite{parsia2005debugging}---as well as strategies based on gainful memory utilization---e.g., the TMS-based\footnote{TMS is a shortcut for \emph{truth maintenance system}.} bookkeeping of sets of logical axioms that are sufficient for particular inferred entailments to hold \cite{dekleer1986assumption,dekleer1987,dekleer1991focusing_prob_diag}. Glass-box approaches \emph{depend on} the (suitable adaptation of) \emph{one particular reasoner} that can deal with \emph{one particular (class of) logic(s)}.\footnote{Note that Baader and Penaloza \cite{baader2008axiom_pinpointing} provide a more general approach for extending different types of reasoners to extract debugging-relevant information throughout the reasoning process. The suitable adaptations however still need to be done on each particular reasoner in order to use it in a glass-box scenario.}
	While glass-box approaches are highly optimized for particular logics and can bring noticeable performance gains over black-box approaches in certain cases \cite{kalyanpur2006thesis,Horridge2011a}, advantages of black-box methods in the context of model-based diagnosis
	are their
	\emph{robustness} (no sophisticated, and potentially error-prone,
	modifications of complex reasoning algorithms), \emph{simplicity} (internals of reasoner irrelevant), \emph{generality} (applicable to diagnosis problem instances formulated over any knowledge representation formalism for which a sound and complete reasoner exists) and \emph{flexibility} (e.g., black-box methods can use a portfolio reasoning approach where the most efficient reasoner is used depending on the expressivity of the knowledge base that provides the reasoning context\footnote{Recall from Sec.~\ref{sec:avoidance_of_exp_reasoning} that DynamicHS performs its consistency checks 
	often with regard to significantly different parts of the knowledge base (diagnosed system),
	where the respective reasoning context is determined by (the labels along) one particular path in the hitting set tree. Although the logical expressivity 
	of the complete knowledge base (system description) might necessitate the use of a reasoner with unfavorable worst-case runtime, parts of the knowledge base might well fall into a lower logical expressivity class, 
	thus enabling the use of more performant reasoning approaches 
	that need not be complete wrt.\ the expressivity of the full knowledge base. E.g., if the goal is to diagnose a system described by a Description Logic \cite{DLHandbook} knowledge base, the full system descrition might be stated, say, in the logic $\mathcal{SROIQ}$ \cite{grau2008owl}, whereas parts thereof might be formulated, say, in $\mathcal{EL}$ \cite{baader2005EL}, which would allow to employ reasoners, say \cite{baader2005_CEL_reasoner} or \cite{kazakov2014}, that are particularly efficient for this more restricted language.}). 
	In-depth comparisons \cite{kalyanpur2006thesis,Horridge2011a} between black-box and glass-box strategies in the domain of knowledge-base debugging conclude 
	that, in terms of performance, black-box methods overall
	compete fairly reasonably with
	glass-box methods while offering a higher generality and
	being more easily and robustly implementable. 
	

	\item \emph{Holistic} vs.\ \emph{abstraction-based} vs.\ \emph{alteration-based}: Holistic approaches, to which most of the above-mentioned algorithms belong, 
	compute diagnoses by considering (the description of) the system to be diagnosed as is.
	In contrast, abstraction-based methods abstract from the original system by reformulating the problem in a more concise way, and alteration-based methods exploit modifications of the system description with well-known properties. The goal of these steps is to achieve better computation efficiency or to be able to solve problems that are otherwise too large or complex. Examples of abstraction-based techniques are \cite{mozetivc1991hierarchical,out1994construction,siddiqi2007hierarchical}, which pursue a stepwise hierarchical approach where, starting from a maximally abstract system description, successively more refined (in the sense of: more similar to the original system description) abstractions of those system parts 
	revealed to be abnormal while analyzing the abstract model are considered to obtain the diagnoses for the original system. The core principle behind these approaches is to reduce the search space by first using the abstract model and to reduce the search at the detailed level by excluding those system parts that have already been exonerated at the abstract level. An alteration-based technique is proposed in \cite{Siddiqi2011} where components in a system model are cloned in order to achieve a reduction of the generated system abstraction, while not losing relevant diagnostic information. Other alteration-based techniques safely discard considerable portions of the system description while maintaining full soundness and completeness wrt.\ the diagnostic task, e.g., in the knowledge-base debugging domain this can be achieved by using modules based on syntactic locality \cite{grau2008modular,sattler2009module}.



	
	%
	%
	%

	\item \label{related_work:dimension_statefulness} \emph{Stateful} vs.\ \emph{stateless} (during sequential diagnosis): Stateful approaches, e.g., \cite{dekleer1987,Siddiqi2011,Rodler2015phd,rodler2018statichs,penaloza2019}, maintain state (e.g., in terms of stored and later reused data structures) throughout a sequential diagnosis session. That is, data produced during one iteration constitutes an input to the execution of the next iteration, where the transition from one to the next iteration is defined by the incorporation of new information (resulting from a performed measurement) to the diagnosis problem. 
	Stateless algorithms, e.g., \cite{Reiter87,greiner1989correction,wotawa2001variant,xiangfu2006method,dekleer2011hitting,jannach2016parallel}, in contrast, do not propagate any information between two iterations. They can be thought of as getting a full reset after each iteration and starting completely anew, albeit with a modified input (previously considered diagnosis problem \emph{including additional measurements}), in the next iteration. 
\end{enumerate}
%
%
The bottom line of this categorization and discussion of diagnosis computation algorithms is:
\begin{itemize}
	\item Although in part highly different as regards algorithm type and properties, in principle, all algorithms have their right to exist; they are well-motivated by theoretical or practical problems or shortcomings of pre-existing algorithms.
	\item For most pairs of algorithms with a mutually different characterization in terms of the discussed dimensions \ref{related_work:dimension_completeness}--\ref{related_work:dimension_statefulness}, there will be problem classes that can be solved (more efficiently) with the one algorithm than with the other, and other problem classes for which the inverse holds. Simply put, the appropriateness of diagnosis computation algorithms is domain-dependent and different domains require different algorithmic techniques and algorithm properties. Thus, it is reasonable to expect that the one and only diagnosis algorithm that suits all possible model-based diagnosis scenarios best does not exist. \\
	Let us consider two examples: \emph{(1)}~Inv-HS-Tree appears to be not that well-suited for (spectrum-based) debugging problems focusing on \emph{procedural} software because it can be difficult (or even impossible) to enforce the execution of exactly the program traces required by the splitting strategy of the algorithm; this, however, is trivial for debugging problems involving \emph{declarative} knowledge bases. 
	STACCATO, on the other hand, does not ideally cater for knowledge-base debugging problems where a pre-computation of conflicts is often impractical and unnecessary (especially in a sequential diagnosis setting), but is very powerful in the (spectrum-based) software domain. \emph{(2)}~While the GDE and its successors are highly optimized and particularly well-performing for physical devices or digital circuits, they have not found widespread adoption in some other diagnosis domains such as knowledge-base debugging. A likely reason for that is the typically different justification\footnote{A \emph{justification} (for $\alpha$) is a minimal set of assumptions (axioms) that are sufficient for a particular entailment $\alpha$ to hold \cite{Horridge2008}. A justification for an unwanted entailment, such as $\bot$, is a minimal conflict. Depending on the context, justifications are sometimes referred to as \emph{environments} \cite{dekleer1987}.} structure inhering physical devices on the one hand and knowledge bases on the other. Specifically, the principle of \emph{locality} \cite{dekleer1990prime_implicates} (constraints for a particular system component only interact with constraints for physically adjacent components) appears to benefit justification bookkeeping strategies as used in GDE, whereas the potential strong non-locality and the related richer justification structure present in knowledge bases\footnote{As studied by Horridge et al.\ \cite{Horridge2012b}, the number of justifications for a single entailment $\alpha$ (and thus the number of conflicts) might well reach high three-digit numbers in knowledge-based systems.} can make such bookkeeping methods inefficient.\footnote{This issue was discussed in plenum during the ``Workshop on Principles of Diagnosis 2017'' (DX'17) in Brescia, Italy.} It seems that other techniques, such as HS-Tree, that abstain from protocoling computed justifications, work better in this particular domain. 
	\item If one algorithm only improves on another one's performance on all (reported) problems, then this often comes at the cost of sacrificing a (potentially useful) property of the improved algorithm. For instance, Inv-HS-Tree reduces the space-complexity of the diagnosis search from exponential to linear as compared to HS-Tree, but diagnoses cannot be enumerated best-first any longer. Other examples are distributed algorithms that can meliorate the diagnosis \emph{search} time significantly, but typically require a (usually expensive) precomputation of all conflicts, or glass-box algorithms that can notably improve the cost of diagnosis determination while, however, requiring systems expressible in one particular (class of) logic(s).\\
	Exceptions to this are particularly notable, i.e., when one algorithm improves on another one while preserving all its (desirable) properties.
	For instance, a compilation of the diagnosis problem to DNNF can be used whenever a compilation to OBDD is possible, and does not imply any disadvantages as far as ``standard'' diagnostic tasks are considered.\footnote{However, in other domains, such as system verification, tractable queries regarding OBDDs become intractable when using DNNF \cite{darwiche2001decomposable}.} This is also what DynamicHS aims to achieve: Enhancing HS-Tree for adoption in a sequential diagnosis setting while retaining the same guarantees (soundness, completeness, best-first property, general applicability). 
\end{itemize}

In terms of the described properties \ref{related_work:dimension_completeness}--\ref{related_work:dimension_statefulness} above, DynamicHS is (sound and) complete, (general) best-first, conflict-based (on-the-fly, centralized), black-box, holistic, and stateful. The algorithms most similar to DynamicHS as per these features, which have been reportedly used for sequential diagnosis according to the literature, are:\footnote{Differences in properties as compared to DynamicHS are italicized.} 
\begin{itemize}
	\item StaticHS \cite{rodler2018statichs}, which is sound, complete, (general) best-first, conflict-based (on-the-fly, centralized), black-box, holistic, and stateful: 
	The aim of StaticHS is orthogonal to the one of DynamicHS, as the former focuses on reducing the time spent (by the user) for measurement conduction 
	whereas the latter is geared to reduce the computation time. Moreover, DynamicHS admits more powerful tree pruning techniques than StaticHS \cite{Rodler2015phd}.
	\item HS-Tree \cite{Reiter87}, which is sound, complete, (general) best-first, conflict-based (on-the-fly, centralized), black-box, holistic, and \emph{stateless}: As extensively discussed, the only difference between HS-Tree and DynamicHS is that the latter maintains its state throughout sequential diagnosis whereas the former does not. As we show in our evaluations, the transition from statelessness (HS-Tree) to statefulness (DynamicHS) has a significant positive impact on the algorithm's runtime. And there is still potential for further improvement of DynamicHS, e.g., by exploiting the parallelization techniques proposed by \cite{jannach2016parallel}. 
	Besides the adoption of such techniques in DynamicHS's tree expansion phase (as shown by \shortcite{jannach2016parallel} for HS-Tree), they appear particularly appropriate for tree pruning (where nodes can be processed independently of one another) and for redundancy checking (where multiple independent reasoning operations can be necessary).
%
%
	\item RBF-HS and HBF-HS \cite{rodler2020_rbfhs,rodler2020dx_rbfhs}, which are sound, complete, (general) best-first, conflict-based (on-the-fly, centralized), black-box, holistic, and \emph{stateless}: 
	These two algorithms are specifically geared towards scenarios where the soundness, completeness as well as the best-first property are important and where memory is an issue, such as on low-end (e.g., IoT) devices or when facing particularly challenging diagnostic problems (e.g., with high-cardinality minimal diagnoses). In case the diagnosis problem at hand is solvable using HS-Tree given the available memory, the latter---and even more so DynamicHS, as we shall demonstrate in Sec.~\ref{sec:eval}---can be expected to be more time-efficient than RBF-HS and HBF-HS on average \cite{rodler2020_rbfhs}.
	\item GDE \cite{dekleer1987}, which is sound, complete, (general) best-first, conflict-based (centralized), \emph{glass-box}, holistic, and stateful: GDE has mainly been developed, optimized and used for the diagnosis of physical devices like circuits. The main difference to DynamicHS is that reasoning strategies drawing on a truth maintenance system (TMS) are employed. These store all justifications (environments) for already computed entailments and appear\footnote{See discussion above.} to be not the ideal approach for systems with a highly non-local structure and up to thousands of justifications per entailment \cite{Horridge2011a}.
	Moreover, when there is a large variety of logics used to conceptualize the (system) knowledge \cite{DLHandbook}, flexible black-box techniques, like HS-Tree and DynamicHS, that are (ad-hoc) usable for all these logics, appear particularly attractive.
	\item Inv-HS-Tree \cite{Shchekotykhin2014}, which is sound, complete, \emph{any-first}, \emph{direct}, black-box, holistic, and \emph{stateless}: 
	Due to its ability to use linear-search techniques, Inv-HS-Tree can solve problems for which HS-Tree runs out of memory \cite{Shchekotykhin2014}. This result remains valid for DynamicHS as its worst-case memory consumption is, in general, not lower than HS-Tree's. On the other hand, Inv-HS-Tree is any-first and can thus not be used for diagnosis tasks where a particular class of diagnoses (e.g., minimum-cardinality) or the most preferred diagnoses need to be found (first). Specifically, in the sequential diagnosis setting considered in this work, Inv-HS-Tree prevents 
	the reasonable use of early termination strategies (e.g., by stopping the sequential diagnosis session if some diagnosis exceeds a given probability threshold), which can be practical and save significant effort, yet with a high probability of finding the actual diagnosis.    
	\item SDA \cite{Siddiqi2011}, which is sound, \emph{direct}, \emph{glass-box}, \emph{abstraction-based}, stateful:\footnote{Whether SDA is complete and best-first (in the sense discussed in this work) cannot be unambiguously assessed as it does not compute \emph{a set of} diagnoses.} 
	SDA is a
	probabilistic hierarchical compilation-based method that eventually (at the end of sequential diagnosis) computes \emph{one} (best) diagnosis. In particular, no set of diagnoses is computed at intermediate steps. This method is crucially different to DynamicHS in the following regards: First, while the suggested abstractions have been shown to be highly beneficial as far as circuits are concerned, it remains unclear and seems highly non-trivial to apply similar techniques to logics more expressive than propositional logic and systems that are structurally different from typical circuit topologies (e.g., with non-local interactions between components). Second, SDA assumes the set of observables (possible measurement points) 
	to be explicitly given a-priori. However, this assumption is generally violated in diagnosis domains different from physical systems, e.g., in general knowledge bases, where measurement points (logical sentences that an oracle classifies as entailed or non-entailed) are mostly given in implicit form (i.e., derivable through reasoning) from the diagnosis problem \cite{DBLP:journals/corr/Rodler2017}, and a sample of \emph{multiple} precomputed diagnoses is often the only means for extracting reasonable measurement points \cite{Rodler2015phd}.  
	%
	%
\end{itemize}

\section{Evaluation}
\label{sec:eval}

\subsection{Objective}
The goals of our evaluation are
\begin{itemize}[noitemsep,topsep=0pt]
	\item to compare the performances of DynamicHS and HS-Tree in real-world sequential diagnosis scenarios under varying diagnostic settings,
	\item to understand the reasons of performance differences between both algorithms under particular consideration of DynamicHS's advanced techniques discussed in Sec.~\ref{sec:advanced_techniques_in_DynHS},
	\item to demonstrate the out-of-the-box
	applicability of 
	DynamicHS to diagnosis problems over
	different and highly expressive 
	knowledge representation languages.
\end{itemize}
Importantly, the goal is \emph{not} to show that DynamicHS is better than all or most diagnosis computation algorithms in literature, or that it is the best method in most or all diagnosis application domains, which is pointless (cf. Sec.~\ref{sec:related_work}). Rather, the intention is to show the advantage of preferring DynamicHS to HS-Tree in sequential diagnosis scenarios where HS-Tree is a state-of-the-art technique for diagnosis computation. In fact, HS-Tree is among the best available diagnosis computation techniques when a sound, complete, best-first, and reasoner-independent approach for arbitrary (and potentially highly expressive) logics is required.

One important domain where these requirements are essential is ontology and knowledge base\footnote{We will use the terms \emph{ontology} and \emph{knowledge base} interchangeably throughout this section. For the purposes of this paper, we consider both to be finite sets of axioms expressed in some monotonic logic, cf.\ $\mo$ in Sec.~\ref{sec:basics}.} debugging. In this field, practitioners and experts usually\footnote{The discussed requirements emerged from discussions with ontologists 
whom we interviewed (e.g., in the course of a tool demo at the International Conference on Biological Ontology 2018)
in order to develop and customize our ontology debugging tool \emph{OntoDebug} \cite{DBLP:conf/foiks/SchekotihinRS18,schekotihin2018protege} to match its users' needs.}, and especially in critical applications of ontologies such as medicine \cite{rector2011getting}, want a debugger to output 
exactly the faulty axioms that really explain the observed faults in the ontology (\emph{soundness} and \emph{completeness}) at the end of a debugging session.
%
%
%
%
In addition, experts often wish to perpetually monitor the most promising fault explanation throughout the debugging process (\emph{best-first property}) with the intention to stop the session early if they recognize the fault. As was recently studied by \cite{rodler2020mbd_sampling}, the use of best-first algorithms often also involves efficiency gains in sequential diagnosis as opposed to strategies that enumerate diagnoses in a different order. Apart from that, it is a big advantage for users of knowledge-based or semantic systems to have a debugging solution that works out of the box for different logical languages and with different logical reasoners (\emph{general applicability}, cf. \emph{black-box} property in Sec.~\ref{sec:related_work}). 
This is because
\emph{(i)}~ontologies are formulated in a myriad of different (Description) logics \cite{DLHandbook} with the aim to achieve the required expressivity for each ontology domain of interest at the least cost for inference, and \emph{(ii)}~highly specialized reasoners exist for different logics (cf., e.g., \cite{baader2005_CEL_reasoner,kazakov2014}), and being able to flexibly switch to the most efficient reasoner for a particular debugging problem 
can bring significant performance improvements.
%

Hence, we ran our experiments to compare DynamicHS with HS-Tree on a dataset of faulty real-world knowledge bases, for which HS-Tree is the method of choice according to literature, cf., e.g., \cite{kalyanpur2006thesis,schlobach2007debugging,Horridge2011a,meilicke2011thesis,Shchekotykhin2012,Rodler2015phd,fu2016graph,baader2018weakening}.

\setlength{\tabcolsep}{28pt}
\begin{table}[tbp]
	\renewcommand\arraystretch{1}
	\footnotesize
	\centering
	\caption{\small Dataset used in the experiments (sorted by the number of components/axioms of the diagnosis problem, 2nd column).}
	\label{tab:dataset}
	\begin{minipage}{0.98\linewidth}
		\centering
		\begin{tabular}{@{}lrlr@{}} 
			\toprule
			KB $\mo$				& $|\mo|$& expressivity \textsuperscript{\textbf{1)}} 		& \#D/min/max \textsuperscript{\textbf{2)}} \\ \midrule
			Koala (K) 
			& 42 		& $\mathcal{ALCON}^{(D)}$& 10/1/3     \\
			University (U) 
			& 50 		& $\mathcal{SOIN}^{(D)}$& 90/3/4      \\
			IT  
			& 
			140 		& $\mathcal{SROIQ}$& 1045/3/7	  \\
			UNI  
			\phantom{\textsuperscript{\textbf{4)}}}		
			& 
			142 		& $\mathcal{SROIQ}$& 1296/5/6	  \\
			Chemical (Ch) 
			& 144 		& $\mathcal{ALCHF}^{(D)}$& 6/1/3     \\
			MiniTambis (M) 
			\phantom{\textsuperscript{\textbf{4)}}}		
			& 173 		& $\mathcal{ALCN}$ 		& 48/3/3	  \\
			falcon-crs-sigkdd (fal-cs) 
			& 
			195 		& $\mathcal{ALCIF}^{(D)}$&  5/1/2 \\
			coma-conftool-sigkdd (com-cs) 
			& 
			338 		& $\mathcal{SIN}^{(D)}$&  570/3/10 \\
			ctxmatch-sigkdd-ekaw (ctx-se) 
			& 
			367 		& $\mathcal{SHIN}^{(D)}$&  26/1/5 \\
			hmatch-cmt-conftool (hma-cmc) 
			& 
			434 		& $\mathcal{SIN}^{(D)}$&  43/2/7 \\
			hmatch-conftool-cmt (hma-coc) 
			& 
			436 		& $\mathcal{SIN}^{(D)}$&  41/3/8 \\
			coma-cmt-conftool (com-cc) 
			& 
			442 		& $\mathcal{SIN}^{(D)}$&  905/2/8 \\
			ctxmatch-cmt-conftool (ctx-cc) 
			& 
			458 		& $\mathcal{SIN}^{(D)}$&  934*/2/16* \\
			falcon-conftool-ekaw (fal-ce) 
			& 
			465 		& $\mathcal{SHIN}^{(D)}$&  201/2/7 \\
			Transportation (T) 
			& 
			1300 		& $\mathcal{ALCH}^{(D)}$& 1782/6/9	  \\
			Economy (E) 
			& 1781 		& $\mathcal{ALCH}^{(D)}$& 864/4/8     \\
			DBpedia (D) 
			& 
			7228 		& $\mathcal{ALCHF}^{(D)}$& 7/1/1     \\
			Opengalen (O) 
			& 
			9664		& $\mathcal{ALEHIF}^{(D)}$& 110/2/6     \\
			CigaretteSmokeExposure (Cig) 
			& 
			26548 		& $\mathcal{SI}^{(D)}$& 1566*/4/7*	  \\
			Cton (C) 
			& 
			33203		& $\mathcal{SHF}$& 15/1/5     \\
			\bottomrule
		\end{tabular}
	\end{minipage}
	\renewcommand\arraystretch{1}
	\begin{minipage}{0.98\linewidth}
		\centering
		\setlength{\tabcolsep}{2pt}
		\begin{tabular}{@{}lp{13.2cm}@{}}
			\textbf{1):} & Description Logic expressivity: each calligraphic letter stands for a (set of) logical operator(s) that are allowed in the respective language, e.g., $\mathcal{C}$ denotes negation (``complement'') of concepts, for details see \cite{DLHandbook,DL_complexity};
			roughly, the more letters, the higher the expressivity of a logic and the complexity of reasoning for this logic tends to be.
			\\
			\textbf{2):} & \#D/min/max denotes the number/the minimal size/the maximal size of minimal diagnoses for the initial DPI $\dpi_0$ resulting from each input KB $\mo$. If tagged with a $^*$, a value signifies the number or size determined within 1200sec using HS-Tree (for problems where the finding of \emph{all} minimal diagnoses was impossible within reasonable time). 
		\end{tabular}
	\end{minipage}
\end{table}

\subsection{Dataset}
\label{sec:dataset}
The benchmark of 20 inconsistent or incoherent\footnote{A knowledge base $\mo$ is called \emph{incoherent} iff it entails that some predicate $p$ must always be \emph{false}; formally: $\mo \models \forall \mathbf{X} \lnot p(\mathbf{X})$ where $p$ is a predicate with arity $k$ and $\mathbf{X}$ a tuple of $k$ variables. With regard to ontologies, incoherence means that some class (unary predicate) must not have any instances, and if so, the ontology becomes inconsistent. 
} real-world ontologies we used for our experiments 
is given in Tab.~\ref{tab:dataset}.\footnote{The benchmark problems can be downloaded from \url{http://isbi.aau.at/ontodebug/evaluation}.}
Parts of this dataset have been investigated i.a.\ in \cite{kalyanpur2006thesis,qi2007measuring,Horridge2008,Stuckenschmidt2008,del2010modular,Shchekotykhin2012,Rodler2013,jannach2016parallel,Rodler2019userstudy}. For all ontologies whose name (Tab.~\ref{tab:dataset}, column~1) starts with an upper case letter, their faultiness is a result of (only) modeling errors of humans developing the ontology. The problems in ontologies termed by lowercase names, in contrast, result from automatically merging two human-modeled ontologies which describe the same domain in different ways. This merging process, which involves a matching system integrating two source ontologies by adding additional (potentially faulty) axioms stating relationships between concepts from both source ontologies, 
is called \emph{ontology alignment} \cite{euzenat2011ontology} and can lead to numerous independent faults in the resulting merged ontology. In our dataset, the domain described by the merged ontologies is ``conference organization'' (describing concepts such a papers, authors, program committee members, reviewers, etc.) and the faulty merged ontologies are results produced by four different matching systems \cite{oaei2006}. The names of the respective ontologies in Tab.~\ref{tab:dataset} follow the scheme $\mathsf{name1}-\mathsf{name2}-\mathsf{name3}$ where $\mathsf{name1}$ corresponds to the name of the used matching system, and $\mathsf{name2}$ as well as $\mathsf{name3}$, respectively, to the name of the first and second source ontology to be merged. 

As Tab.~\ref{tab:dataset} shows, the ontologies in the dataset cover a spectrum of different problem sizes (number of axioms or components; column 2), logical expressivities (which determine the complexity of consistency checking; column 3),  
as well as diagnostic structures (number and size of minimal diagnoses; column 4). Note that the complexity of consistency checks
over the logics in Tab.~\ref{tab:dataset} ranges from EXPTIME-complete to 2-NEXPTIME-complete \cite{grau2008owl,DL_complexity}.
Hence, from the point of view of model-based reasoning, ontology debugging problems represent a particularly challenging diagnosis domain as they usually deal with harder logics
than more traditional diagnosis problems (which often use propositional knowledge representation languages that are not beyond {NP-complete}).

\subsection{Experiment Settings}
\label{sec:exp_settings}
To study the performance and robustness of DynamicHS under varying circumstances, we considered a range of different \emph{diagnosis scenarios} in our experiments. 
A diagnosis scenario is defined by the set of inputs given to Alg.~\ref{algo:sequential_diagnosis}, i.e., by an initial DPI $\dpi_0$, a number $\ld$ of minimal diagnoses to be computed per sequential diagnosis iteration, a probability measure $\pr$ assigning fault probabilities to system components (axioms in the knowledge base), as well as a heuristic $\qqm$ used for measurement selection.
The DPIs $\dpi_0$ for our tests were defined as $\tuple{\mo,\emptyset,\emptyset,\emptyset}$, one for each $\mo$ in Tab.~\ref{tab:dataset}. That is, the task was to find a minimal set of axioms (faulty components) responsible for the inconsistency or incoherency of $\mo$, without any background knowledge or measurements initially given.
For the parameter $\ld$ we used the values $\{2,4,6,10,20,30\}$.
The fault probability $\pr(\tax)$ of each axiom (component) $\tax \in \mo$ was chosen uniformly at random from $(0,1)$. 
As measurement selection heuristics, we used $\{\text{ENT},\text{SPL},\text{MPS}\}$, where ENT is the well-known entropy function proposed by \cite{dekleer1987}, SPL is the ``split-in-half'' strategy \cite{moret1982decision,Shchekotykhin2012}, and MPS (``most probable singleton'') \cite{DBLP:journals/corr/Rodler16a,rodler17dx_activelearning} is the selection measure that overall performed most favorably in the evaluations conducted by \cite{DBLP:conf/ruleml/RodlerS18}. Roughly speaking, given a set of minimal diagnoses, ENT / SPL / MPS selects a discriminating measurement point (cf.\ Sec.~\ref{sec:computational_issues_in_SD}) which maximizes the expected information gain / maximizes the worst-case diagnosis elimination rate / maximizes the probability of the elimination of a maximal number of diagnoses.
For each diagnosis scenario, we randomly selected 20 different minimal diagnoses, each of which was used as the target solution (actual diagnosis; cf.\ Sec.~\ref{sec:diagnoses}) in one sequential diagnosis session (execution of Alg.~\ref{algo:sequential_diagnosis}). 
Finally, we executed DynamicHS (Alg.~\ref{algo:sequential_diagnosis} with setting $\mathit{dynamic} = \true$) and HS-Tree (setting $\mathit{dynamic} = \false$) to find the same 20 target solutions for each diagnosis scenario.
%
As a Description Logic reasoner, we adopted Pellet \cite{Sirin2007}, and we implemented the \textsc{findMinConflict} function by means of the QuickXPlain algorithm \cite{junker04,rodler2020qx}.

\subsection{Experiment Results}
\label{sec:exp_results}
We first analyze the performance of DynamicHS compared to HS-Tree in Sec.~\ref{sec:performance_analysis}, and then, in Sec.~\ref{sec:eval_advanced_tehniques}, explain the findings and give additional insights by examining the impact of the advanced techniques incorporated into DynamicHS discussed in Sec.~\ref{sec:advanced_techniques_in_DynHS}.\footnote{The raw data obtained from our experiments can be downloaded from \url{http://isbi.aau.at/ontodebug/evaluation}. Moreover, our implementation of DynamicHS can be accessed under \url{https://bit.ly/348Ny8f}.}

\begin{figure}[tbp]
	\includegraphics[width=\columnwidth]{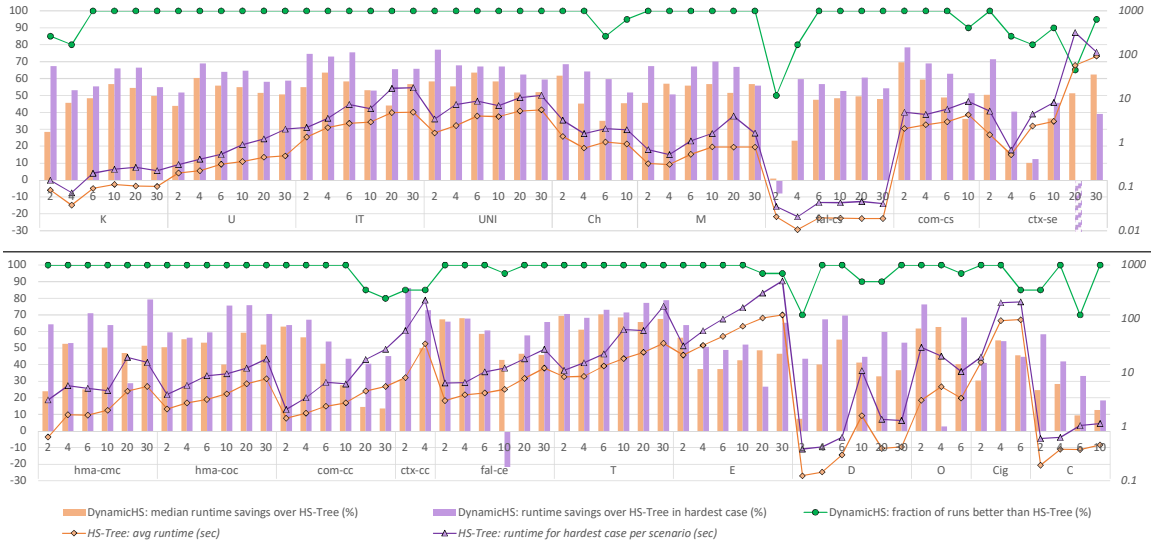}
	\caption{Experiment results (DynamicHS vs.\ HS-Tree) for measurement selection heuristic ENT: x-axis shows faulty ontologies $\mo$ from Table~\ref{tab:dataset} and number $\ld$ of leading diagnoses computed per call of hitting set algorithm (iteration of while-loop in Alg.~\ref{algo:sequential_diagnosis}). 
		For each ($\mo$,$\ld$) scenario, \emph{(left y-axis)} the orange bar shows the median \% runtime savings (over 20 sequential diagnosis sessions) achieved by DynamicHS, the violet bar indicates the \% runtime savings exhibited by DynamicHS for the hardest case of the scenario, and the green dot shows the fraction (in \%) of the 20 runs for the scenario where DynamicHS manifested lower runtime than HS-Tree; \emph{(right y-axis)} the orange diamond / violet triangle depicts the average / maximal runtime (in sec) of HS-Tree over the 20 sequential sessions. In the legend, features written in normal / italic font are plotted wrt.\ the left / right y-axis. The striped violet bar is not fully shown for clarity of the figure: In this case, HS-Tree required 86\,\% less time than DynamicHS.}
	\label{fig:results_median_savings_ENT}
\end{figure}

\subsubsection{Performance Analysis}
\label{sec:performance_analysis}
The comparison of runtime performance between HS-Tree and DynamicHS over the 20 sequential diagnosis sessions for each diagnosis scenario is shown in Fig.~\ref{fig:results_median_savings_ENT} for the heuristic ENT, in Fig.~\ref{fig:results_median_savings_SPL} for the heuristic SPL, and in Fig.~\ref{fig:results_median_savings_MPS} for the heuristic MPS.\footnote{For some knowledge bases in Tab.~\ref{tab:dataset}, the figures do not include data for all settings of $\ld$. The reason 
is that the experiment runs for these knowledge bases terminated with errors (for both algorithms). More specifically, the cause of the error was either an internal reasoner exception 
(recall that both algorithms use the reasoner as a black-box oracle, which is why this error is outside the range of influence of the algorithms) or an out-of-memory error. Regarding the latter, it is important to note that both algorithms give strong guarantees in terms of soundness, completeness and the best-first property, which requires them to keep track of all possible branches of the search tree, which in turn can become problematic if minimal diagnoses of high cardinality exist and tree depth is high \cite{Shchekotykhin2014}. Please note that the diagnostic structure (fourth column of Tab.~\ref{tab:dataset}) indicates only the sizes of minimal diagnoses for the \emph{initial} DPI $\dpi_0$ and that minimal diagnoses computed throughout a sequential diagnosis session grow in size (cf.\ Property~\ref{property:impact_of_DPI_transition}.\ref{property:impact_of_DPI_transition:enum:diags_can_only_grow}) and can reach cardinalities that exceed the quoted initial values substantially.} As the orange bars in these three figures testify, DynamicHS exhibits substantial median
runtime savings
over HS-Tree in almost all considered scenarios and for all three heuristics ENT, SPL and MPS.\footnote{The computation time reduction of DynamicHS over HS-Tree was statistically significant wrt.\ the level $\alpha := 0.05$ in almost all (97\,\%) of the diagnosis scenarios, as verified by Wilcoxon Signed Rank Tests. More specifically, statistical significance could be ascertained in all scenarios except for the following: ctx-se for ENT 20 as well as SPL 10 and 20; fal-cs for ENT 2; com-cc for MPS 10 and 20; and D for MPS 10, 20, 30 as well as for SPL 20.} Only in two (of the overall 312) scenarios slight losses against HS-Tree in terms of median runtime could be registered, namely $-2$\,\% for (SPL, D, 30) and $-3$\,\% for (MPS, fal-ce, 30). However, in terms of absolute runtime, DynamicHS never required more than 0.8 sec more time than HS-Tree in any of the 20 runs in the former scenario, and never more that 12 sec more in the latter.     

\begin{figure}[tbp]
	\includegraphics[width=\columnwidth]{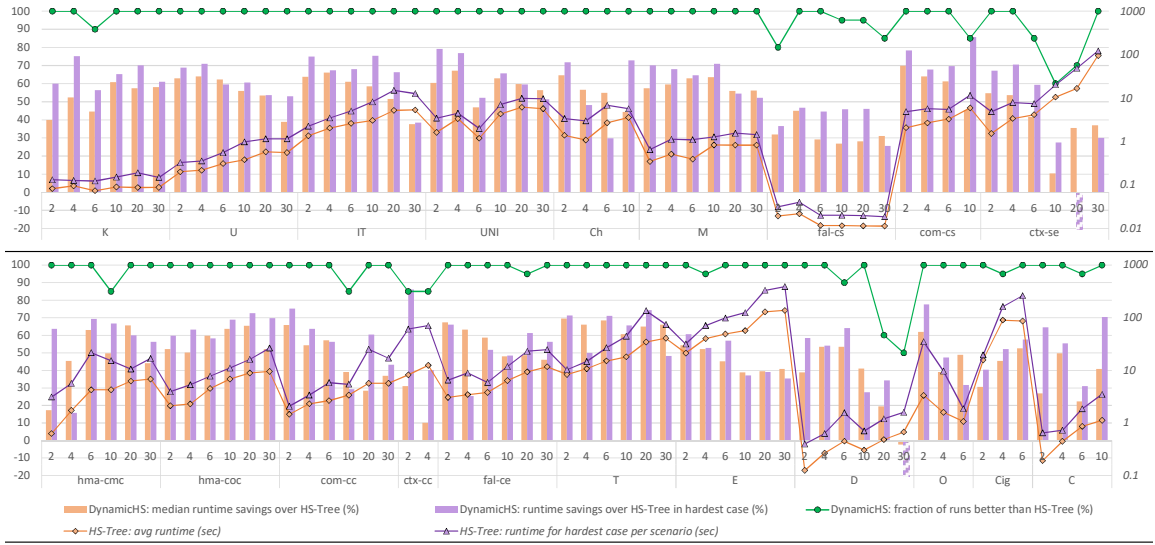}
	\caption{Experiment results (DynamicHS vs.\ HS-Tree) for measurement selection heuristic SPL: x-axis shows faulty ontologies $\mo$ from Table~\ref{tab:dataset} and number $\ld$ of leading diagnoses computed per call of hitting set algorithm (iteration of while-loop in Alg.~\ref{algo:sequential_diagnosis}). 
		For each ($\mo$,$\ld$) scenario, \emph{(left y-axis)} the orange bar shows the median \% runtime savings (over 20 sequential diagnosis sessions) achieved by DynamicHS, the violet bar indicates the \% runtime savings exhibited by DynamicHS for the hardest case of the scenario, and the green dot shows the fraction (in \%) of the 20 runs for the scenario where DynamicHS manifested lower runtime than HS-Tree; \emph{(right y-axis)} the orange diamond / violet triangle depicts the average / maximal runtime (in sec) of HS-Tree over the 20 sequential sessions. In the legend, features written in normal / italic font are plotted wrt.\ the left / right y-axis. The striped violet bars are not fully shown for clarity of the figure: In these cases, HS-Tree required 76\,\% (ctx-se, 20) and 32\,\% (D, 30) less time than DynamicHS.}
	\label{fig:results_median_savings_SPL}
\end{figure}

The left boxplot in Fig.~\ref{fig:boxplots_all_stats} depicts the overall distribution of the median savings (orange bars in Figs.~\ref{fig:results_median_savings_ENT}--\ref{fig:results_median_savings_MPS}) for the three heuristics.
As we can see from the boxplot, median savings reach maximal values of 
70\,\% / 70\,\% / 75\,\% 
for ENT / SPL / MPS, and amount to median values of 50\,\% / 52\,\% / 52\,\%. 
Moreover, the observed median savings do not differ in a statistically significant way between the three heuristics.\footnote{We verified this by an ANOVA (p-value: 0.26).}  
Also, the data does not show a general tendency that the percentage of runtime reduction depends on the number of minimal diagnoses $\ld$ computed per sequential diagnosis iteration;\footnote{We checked this by a Kruskal-Wallis test, one for each of the three heuristics.
The p-values are 0.81 (ENT), 0.22 (SPL), and 0.81 (MPS).} see the boxplots in Fig.~\ref{fig:boxplots_savings_for_values_of_ld}. For the values 2 / 4 / 6 / 10 / 20 / 30 of $\ld$, over all scenarios, we observe median runtime savings of 49\,\% / 51\,\% / 49\,\% / 45\,\% / 46\,\% / 49\,\% for ENT, of 45\,\% / 50\,\% / 45\,\% / 45\,\% / 48\,\% / 49\,\%  for SPL, and of 49\,\% / 51\,\% / 49\,\% / 45\,\% / 46\,\% / 49\,\% for MPS.

Considering the fraction of the 20 runs per diagnosis scenario where DynamicHS outperformed HS-Tree (green circles in Figs.~\ref{fig:results_median_savings_ENT}--\ref{fig:results_median_savings_MPS} and right boxplot in Fig.~\ref{fig:boxplots_all_stats}), we observe values equal or very close to 100\,\% in almost all scenarios. That is, sequential diagnosis sessions where HS-Tree is preferable to DynamicHS in terms of runtime are very rare, regardless of the adopted measurement selection strategy. In terms of numbers, DynamicHS requires less runtime than HS-Tree in 96\,\% (all data), 96\,\% (ENT), 97\,\% (SPL), and 95\,\% (MPS) of the scenarios.  

Finally, since the hardest sequential diagnosis session in terms of the required time for diagnosis computation was up to one order of magnitude harder than the average session of one and the same diagnosis scenario (cf.\ orange diamonds and violet triangles in Figs.~\ref{fig:results_median_savings_ENT}--\ref{fig:results_median_savings_MPS}), we also included the savings of DynamicHS over HS-Tree for these hardest cases in the plots of Figs.~\ref{fig:results_median_savings_ENT}--\ref{fig:results_median_savings_MPS} (see the violet bars) and summarized their distribution in the middle boxplot of Fig.~\ref{fig:boxplots_all_stats}. As readable from the boxplot, savings in these hardest cases reach maximal values up to 86\,\% / 86\,\% / 89\,\% for ENT / SPL / MPS, and amount to median values of 60\,\% / 60\,\% / 70\,\%. 
In fact, the observed savings even exceed the median savings (orange bars in Figs.~\ref{fig:results_median_savings_ENT}--\ref{fig:results_median_savings_MPS}) in 83\,\% (all data), 86\,\% (ENT), 75\,\% (SPL), and 87\,\% (MPS) of the diagnosis scenarios (compare the left and middle boxplots in Fig.~\ref{fig:boxplots_all_stats}).This indicates that savings achieved by DynamicHS versus HS-Tree tend to be above-average when 
diagnosis computation
takes long. Tab.~\ref{tab:times+savings_for_some_hardest_cases} enumerates for some of the per-scenario hardest cases the runtimes (rounded to full seconds) of both algorithms as well as the percental savings achieved by DynamicHS over HS-Tree. For instance, we see from the table that a runtime reduction of more than five and a half minutes could be obtained by DynamicHS for (ENT, E, 30), or that HS-Tree with (more than) 1:44 minutes computation time consumed more than nine times as much time as DynamicHS (less than 12 seconds) for the hardest case of scenario (MPS, ctx-cc, 4).

\begin{figure}[tbp]
	\includegraphics[width=\columnwidth]{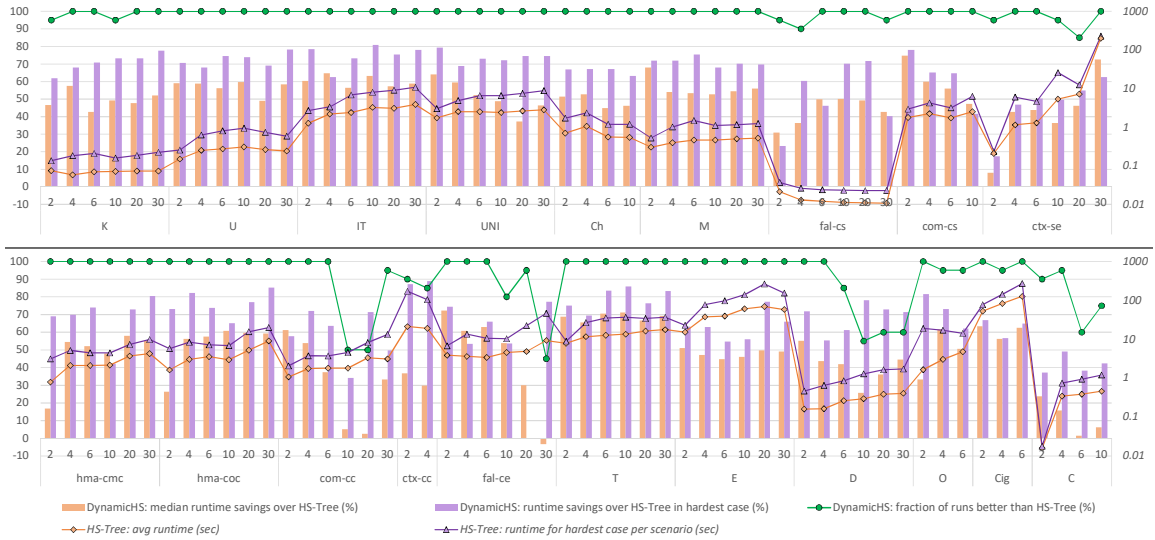}
	\caption{Experiment results (DynamicHS vs.\ HS-Tree) for measurement selection heuristic MPS: x-axis shows faulty ontologies $\mo$ from Table~\ref{tab:dataset} and number $\ld$ of leading diagnoses computed per call of hitting set algorithm (iteration of while-loop in Alg.~\ref{algo:sequential_diagnosis}). 
		For each ($\mo$,$\ld$) scenario, \emph{(left y-axis)} the orange bar shows the median \% runtime savings (over 20 sequential diagnosis sessions) achieved by DynamicHS, the violet bar indicates the \% runtime savings exhibited by DynamicHS for the hardest case of the scenario, and the green dot shows the fraction (in \%) of the 20 runs for the scenario where DynamicHS manifested lower runtime than HS-Tree; \emph{(right y-axis)} the orange diamond / violet triangle depicts the average / maximal runtime (in sec) of HS-Tree over the 20 sequential sessions. In the legend, features written in normal / italic font are plotted wrt.\ the left / right y-axis.}
	\label{fig:results_median_savings_MPS}
\end{figure}

\begin{figure}[tbp]
	\includegraphics[width=0.985\columnwidth]{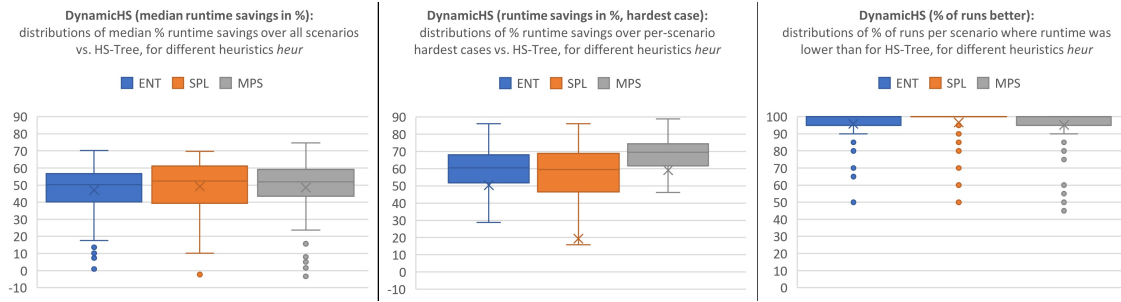}
	\caption{Boxplots showing the distribution of data 
		given in Figs.~\ref{fig:results_median_savings_ENT}--\ref{fig:results_median_savings_MPS}. The left / middle / right plot relates to the orange bars / violet bars / green circles in Figs.~\ref{fig:results_median_savings_ENT}--\ref{fig:results_median_savings_MPS}, and the blue / orange / gray boxplots refer to Fig.~\ref{fig:results_median_savings_ENT} / Fig.~\ref{fig:results_median_savings_SPL} / Fig.~\ref{fig:results_median_savings_MPS}.} 
	\label{fig:boxplots_all_stats}
\end{figure}

\begin{figure}[tbp]
	\includegraphics[width=0.985\columnwidth]{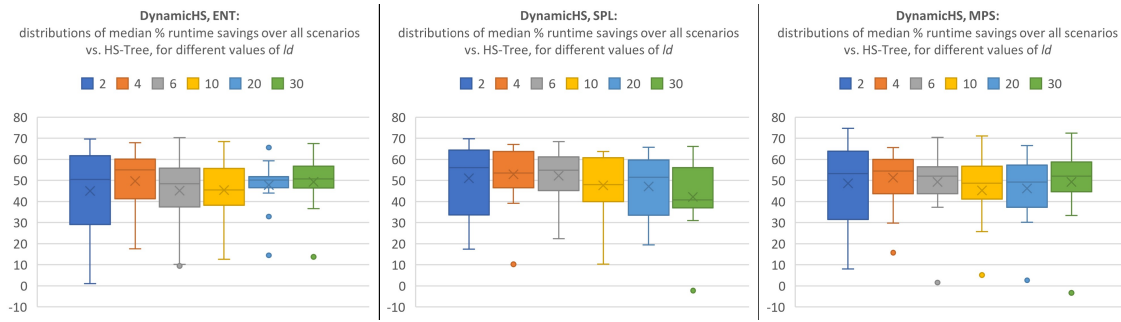}
	\caption{Boxplots showing the distributions of data (orange bars) displayed by Figs.~\ref{fig:results_median_savings_ENT}--\ref{fig:results_median_savings_MPS} for different values of $\ld$: Left / middle / right plot relates to Fig.~\ref{fig:results_median_savings_ENT} / Fig.~\ref{fig:results_median_savings_SPL} / Fig.~\ref{fig:results_median_savings_MPS}.}
	\label{fig:boxplots_savings_for_values_of_ld}
\end{figure}

\setlength{\tabcolsep}{10pt}
\begin{table}[tbp]
	\centering
	\footnotesize
	\caption{\small Runtimes of both algorithms and savings achieved by DynamicHS for five of the per-scenario hardest cases for each of the measurement selection heuristics ENT, SPL and MPS. The leftmost column lists scenarios in the form ``$\qqm$, $\mo$, $\ld$'' where $\qqm$ is the heuristic, $\mo$ the knowledge base from Tab.~\ref{tab:dataset}, and $\ld$ the number of leading diagnoses computed per sequential diagnosis iteration.}
	\label{tab:times+savings_for_some_hardest_cases}
	\begin{tabular}{@{}lccc@{}}
		\toprule
		scenario         & \multicolumn{1}{l}{runtime HS-Tree (min)} & \multicolumn{1}{l}{runtime DynamicHS (min)} & savings of DynamicHS (\%) \\
		\midrule
		ENT, E, 30       & 8:30                                      & 2:58                                        & 65                                          \\
		ENT, ctx-cc, 4   & 3:43                                      & 1:01                                        & 73                                          \\
		ENT, T, 30       & 2:52                                      & 0:36                                        & 79                                          \\
		ENT, ctx-cc, 2   & 1:00                                      & 0:08                                        & 86                                          \\
		ENT, O, 2        & 0:29                                      & 0:07                                        & 76                                          \\
		SPL, Cig, 6      & 4:22                                      & 1:51                                        & 58                                          \\
		SPL, T, 20       & 2:16                                      & 0:35                                        & 74                                          \\
		SPL, com-cc, 30  & 1:00                                      & 0:08                                        & 86                                          \\
		SPL, T, 10       & 0:44                                      & 0:15                                        & 66                                          \\
		SPL, O, 2        & 0:34                                      & 0:08                                        & 77                                          \\
		MPS, E, 20       & 4:23                                      & 1:00                                        & 77                                          \\
		MPS, ctx-cc, 2   & 2:51                                      & 0:22                                        & 87                                          \\
		MPS, ctx-cc, 4   & 1:44                                      & 0:12                                        & 89                                          \\
		MPS, T, 10       & 0:36                                      & 0:05                                        & 86                                          \\
		MPS, hma-coc, 30 & 0:20                                      & 0:03                                        & 85  										\\
		\bottomrule                                  
	\end{tabular}
\end{table}

\subsubsection{Analysis of Advanced Techniques}
\label{sec:eval_advanced_tehniques}
As the performance comparison between DynamicHS and HS-Tree at the end of Example~\ref{ex:algo_description} already suggested, there are two major sources for the runtime savings obtained by means of DynamicHS: \emph{(I)}~fewer expensive reasoning operations and \emph{(II)}~saved effort for tree (re)construction. The price to pay for these reductions is \emph{(III)}~the storage of the existing hitting set tree (including duplicate nodes) throughout the diagnosis session and \emph{(IV)}~the execution of regular tree pruning actions. We examine these four aspects more closely in the next three paragraphs based on our experiment results. The first paragraph addresses aspect (I), the second analyzes aspects (II) and (III), and the third focuses on aspect (IV).\vspace{5pt}

\begin{figure}[tbp]
	\includegraphics[width=\columnwidth]{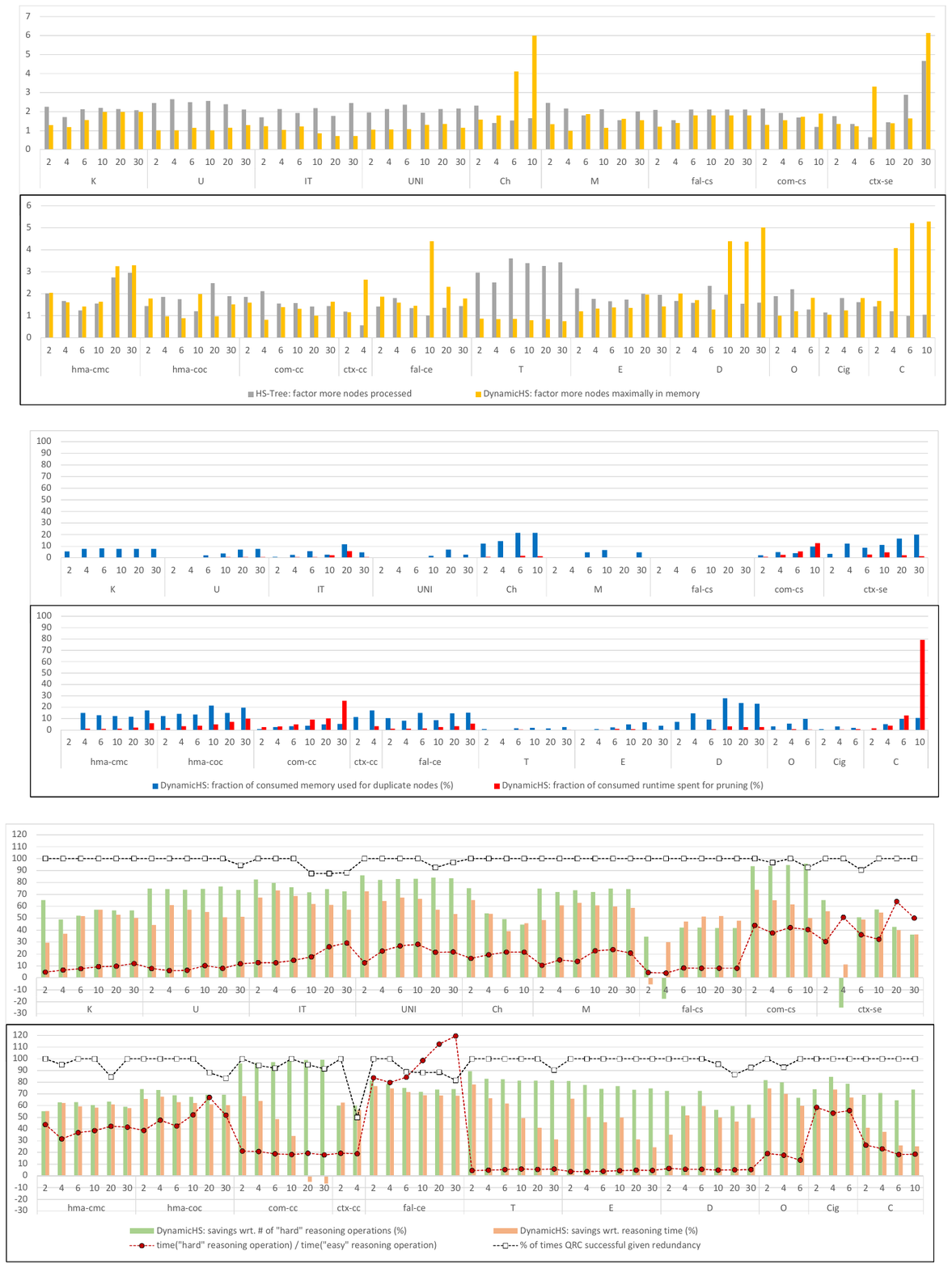}
	\caption{\small Analysis of DynamicHS wrt.\ avoidance of reasoning and efficient redundancy checking: x-axis shows faulty ontologies $\mo$ from Table~\ref{tab:dataset} and number $\ld$ of leading diagnoses computed per call of hitting set algorithm (iteration of while-loop in Alg.~\ref{algo:sequential_diagnosis}). The plot shows data for the heuristic ENT. 
		All values depicted are medians over the 20 sequential diagnosis sessions executed per diagnosis scenario. See Sec.~\ref{sec:avoidance_of_exp_reasoning} for a definition of ``hard'' and ``easy'' reasoning operations; ``QRC'' refers to quick redundancy check, see Sec.~\ref{sec:efficient_redundancy_checking}; ``savings'' refers to the savings over HS-Tree.}
	\label{fig:avoid_reasoning_effect_ENT}
\end{figure}

\noindent \emph{Avoidance of Expensive Reasoning and Efficient Redundancy Checking:} 
Fig.~\ref{fig:avoid_reasoning_effect_ENT} summarizes several statistics that illustrate aspect (I) in more detail. The reduction of costly reasoning operations (``hard'' \textsc{findMinConflict} calls, cf.\ Sec.~\ref{sec:avoidance_of_exp_reasoning}) is achieved by trading them against more efficient reasoning operations (``easy'' \textsc{findMinConflict} calls, cf.\ Sec.~\ref{sec:avoidance_of_exp_reasoning}).  
%
The red circles in the figure, which show how much harder an average ``hard'' 
call is than an average ``easy'' one in each diagnosis scenario, attest that this strategy of swapping ``hard'' for ``easy'' calls is indeed plausible, as the former are a median of 19 times and up to 120 times as time-consuming as the latter.

The green bars in Fig.~\ref{fig:avoid_reasoning_effect_ENT} reveal that significant savings in terms of ``hard'' calls are indeed consistently generated by DynamicHS. More specifically, in more than 98\,\% of the diagnosis scenarios, the median relative savings wrt.\ ``hard'' \textsc{findMinConflict} calls in comparison to HS-Tree are higher than 30\,\%; over all scenarios, median and maximal savings amount to 74\,\% and 99\,\%, respectively. 
%

Rather unsurprisingly, these savings wrt.\ the number of ``hard'' \textsc{findMinConflict} calls translate to a similarly substantial reduction of runtime spent for reasoning operations (orange bars in Fig.~\ref{fig:avoid_reasoning_effect_ENT}) manifested by DynamicHS. Savings in terms of the runtime dedicated to reasoning are achieved in 97\,\% of the diagnosis scenarios and amount to median and maximal values of 57\,\% and 78\,\%, respectively. The main reason why the diminution in terms reasoning time is most times lower than the decrease of ``hard'' reasoning operations is that DynamicHS, as opposed to HS-Tree, makes use of ``easy'' reasoner calls to compensate for these saved more expensive calls. That is, these ``easy'' calls account to a large extent for the difference between green and orange bars. As evident in some scenarios, e.g., for the knowledge base fal-cs, however, it is also possible that savings in reasoning time top savings in terms of ``hard'' \textsc{findMinConflict} calls.
This can happen, e.g., when DynamicHS saves many ``medium'' \textsc{findMinConflict} calls against HS-Tree (cf.\ Example~\ref{ex:hard_med_easy_consistency_checks_in_example_DPI} and Tab.~\ref{tab:stats_reasoning_example}) in addition to its savings wrt.\ ``hard'' ones.

Last but not least, also the used efficient redundancy checking technique 
incorporated in DynamicHS contributes to the achieved runtime savings. 
In fact, the adoption of the QRC (quick redundancy check, cf.\ Sec.~\ref{sec:efficient_redundancy_checking}) attempts to minimize the (``easy'') reasoning operations necessary to decide the redundancy of a particular node (wrt.\ a particular given minimal conflict). The white boxes in Fig.~\ref{fig:avoid_reasoning_effect_ENT} indicate that this strategy is very powerful in that the QRC detects redundancy almost always when redundancy is actually given, thus allowing the alorithm to skip the more expensive CRC (complete redundancy check, cf.\ Sec.~\ref{sec:efficient_redundancy_checking}). One could say that the QRC appears to be ``almost complete'' in our experiments. In numbers, we observe that the QRC detected \emph{all} redundancies in 75\,\%, at least nine of ten redundancies in 89\,\%, and at least eight of ten redundancies in 99\,\% of the scenarios.\vspace{5pt}

\noindent\emph{Statefulness (DynamicHS) vs.\ Statelessness (HS-Tree):} 
Let us now consider Fig.~\ref{fig:time_space_tradeoff_ENT}, which allows to investigate aspects (II) and (III). The figure depicts, per diagnosis scenario, the median factors how many more nodes HS-Tree had to process than DynamicHS (grey bars) and how many more nodes DynamicHS needed to 
store than HS-Tree (yellow bars). The fact that the majority of all bars attains values greater than 1 demonstrates that DynamicHS---expectedly---tends to trade less time for more space. It keeps the produced search tree in memory and utilizes the information contained in this tree to allow for more efficient diagnosis computation in the next sequential diagnosis iteration. We can identify from the figure that in two thirds of the diagnosis scenarios, the trade-off achieved by DynamicHS is favorable in the sense that the factor of more memory used by DynamicHS 
is less than the factor of more nodes processed by HS-Tree (yellow bars smaller than grey ones).  

Moreover, Fig.~\ref{fig:time_space_tradeoff_ENT} (yellow bars) reveals that DynamicHS required less memory than HS-Tree in 14\,\% of the scenarios, exhibited less than 25\,\% memory overhead in 35\,\% and less than 50\,\% overhead in 52\,\% of the scenarios, and consumed less than twice the memory of HS-Tree in 82\,\% of the diagnosis scenarios. Circumstances where DynamicHS can require even less memory than HS-Tree are when few or no duplicate nodes exist (e.g., when minimal conflicts are mostly disjoint), when DynamicHS's hitting set tree after tree updates is (largely) equal to the one produced by HS-Tree (cf.\ Sec.~\ref{sec:lazy_updating_policy}), 
and when DynamicHS happens to compute or select for reuse more ``favorable'' conflicts 
than HS-Tree in the course of node labeling.\footnote{The order in which conflicts are computed and selected for node labeling is neither controlled by HS-Tree nor by DynamicHS. If one of the algorithms happens to compute smaller minimal conflicts that are used to label nodes at the top of the hitting set tree and when the hitting set trees produced by both algorithms are not computed to their entirety (as is the case when $\ld$ diagnoses have been computed before the tree is complete), then the tree with the smaller conflicts at the top can be smaller than the one generated by the other algorithm.} 
In less than 10\,\% of the scenarios, however, the memory overhead shown by DynamicHS was substantial, reaching values of 4 or more times the amount of memory consumed by HS-Tree. Hence, although not observed in our evaluation where either both or none of the algorithms ran out of memory, there may be cases where DynamicHS is not applicable due to too little available memory while HS-Tree is. In general, this can be remedied by adding a mechanism to DynamicHS which simply discards the entire stored hitting set tree and starts over from scratch building a new tree whenever a certain fraction of the available memory has been exhausted. In other words, the stateful (DynamicHS) strategy can be flexibly and straightforwardly switched to a stateless (HS-Tree) one.

Fig.~\ref{fig:time_space_tradeoff_ENT} (grey bars) also shows the benefit of the stateful strategy pursued by DynamicHS. That is, the overhead in terms of the time expended for tree (re)construction incurred when using HS-Tree instead of DynamicHS is significant in the majority of scenarios. 
In numbers, HS-Tree had to process 
at least 1.5 times as many nodes as DynamicHS in 78\,\% of the scenarios, at least twice as many in 42\,\%, and at least three times as many in 5\,\% of the diagnosis scenarios.
%
%

Finally, note that aspects (I) and (II), i.e., DynamicHS's reduction of time for reasoning and its savings in terms of tree construction costs, are orthogonal in the following sense: 
Even if one of these aspects turns out to be unfavorable from the viewpoint of DynamicHS in comparison to HS-Tree, the other aspect is not affected by that in general. For instance, in our experiments we observed three scenarios (fal-cs, 2; com-cc, 20; com-cc, 30) where DynamicHS actually required slightly more time for reasoning than HS-Tree (cf.\ orange bars in Fig.~\ref{fig:avoid_reasoning_effect_ENT}), i.e., aspect (I) was unfavorable for DynamicHS. Since DynamicHS however did save node processing time in these cases, i.e., aspect (II) was favorable, the overall computation time was still lower for DynamicHS. Similarly, there are cases (cf., e.g., ctx-cc, 4) where aspect (II) is unfavorable while aspect (I) is favorable, again resulting in overall time savings of DynamicHS. \vspace{5pt}   

\begin{figure}[tbp]
	\includegraphics[width=\columnwidth]{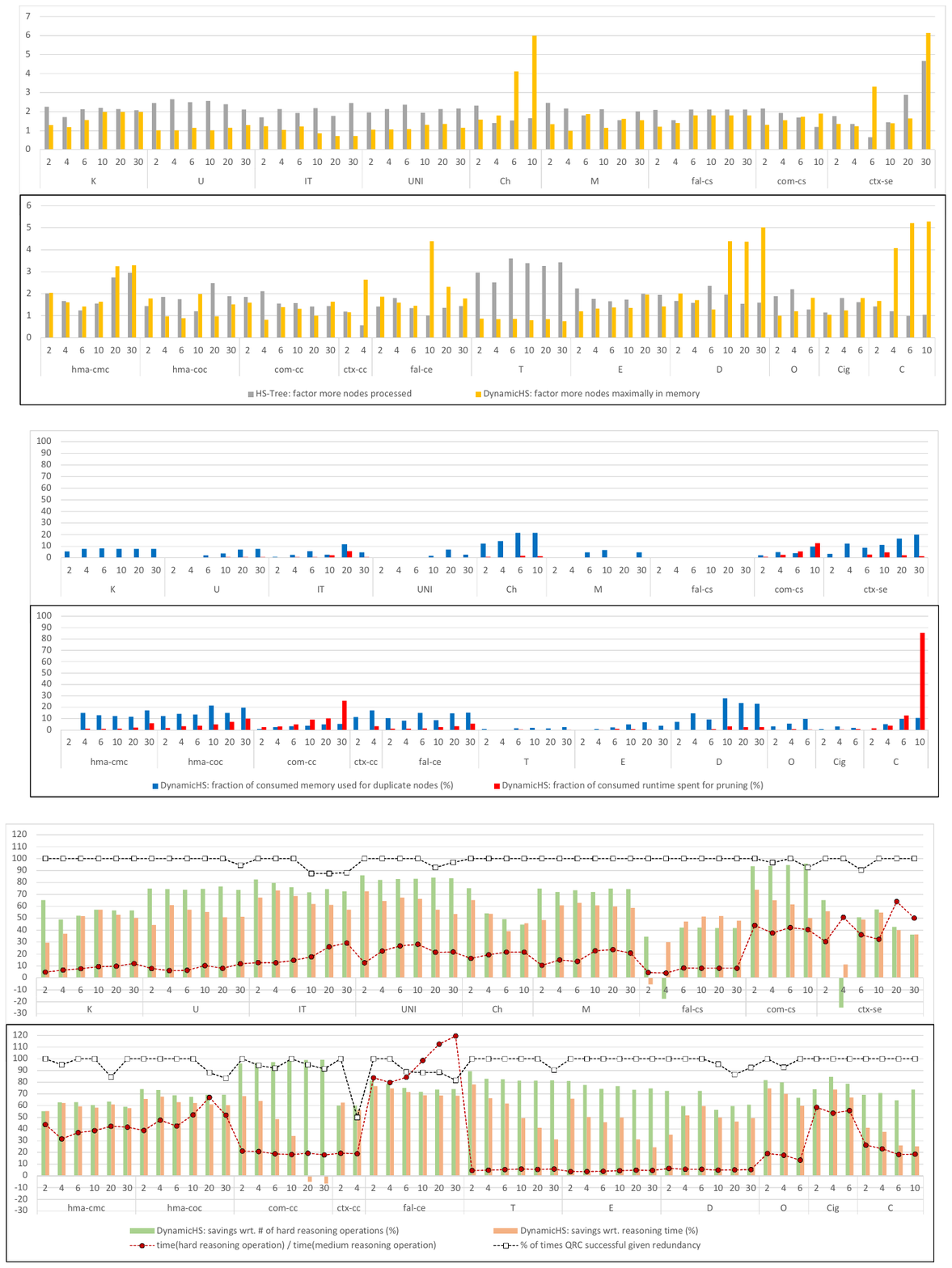}
	\caption{\small Comparison of stateful (DynamicHS) versus stateless (HS-Tree) hitting set tree management in terms of the nodes processed (time efficiency) and the nodes maximally in memory (space efficiency): x-axis shows faulty ontologies $\mo$ from Table~\ref{tab:dataset} and number $\ld$ of leading diagnoses computed per call of hitting set algorithm (iteration of while-loop in Alg.~\ref{algo:sequential_diagnosis}). The plot shows data for the heuristic ENT. All values depicted are medians over the 20 sequential diagnosis sessions executed per diagnosis scenario.}  
	\label{fig:time_space_tradeoff_ENT}
\end{figure}

\begin{figure}[tbp]
	\includegraphics[width=\columnwidth]{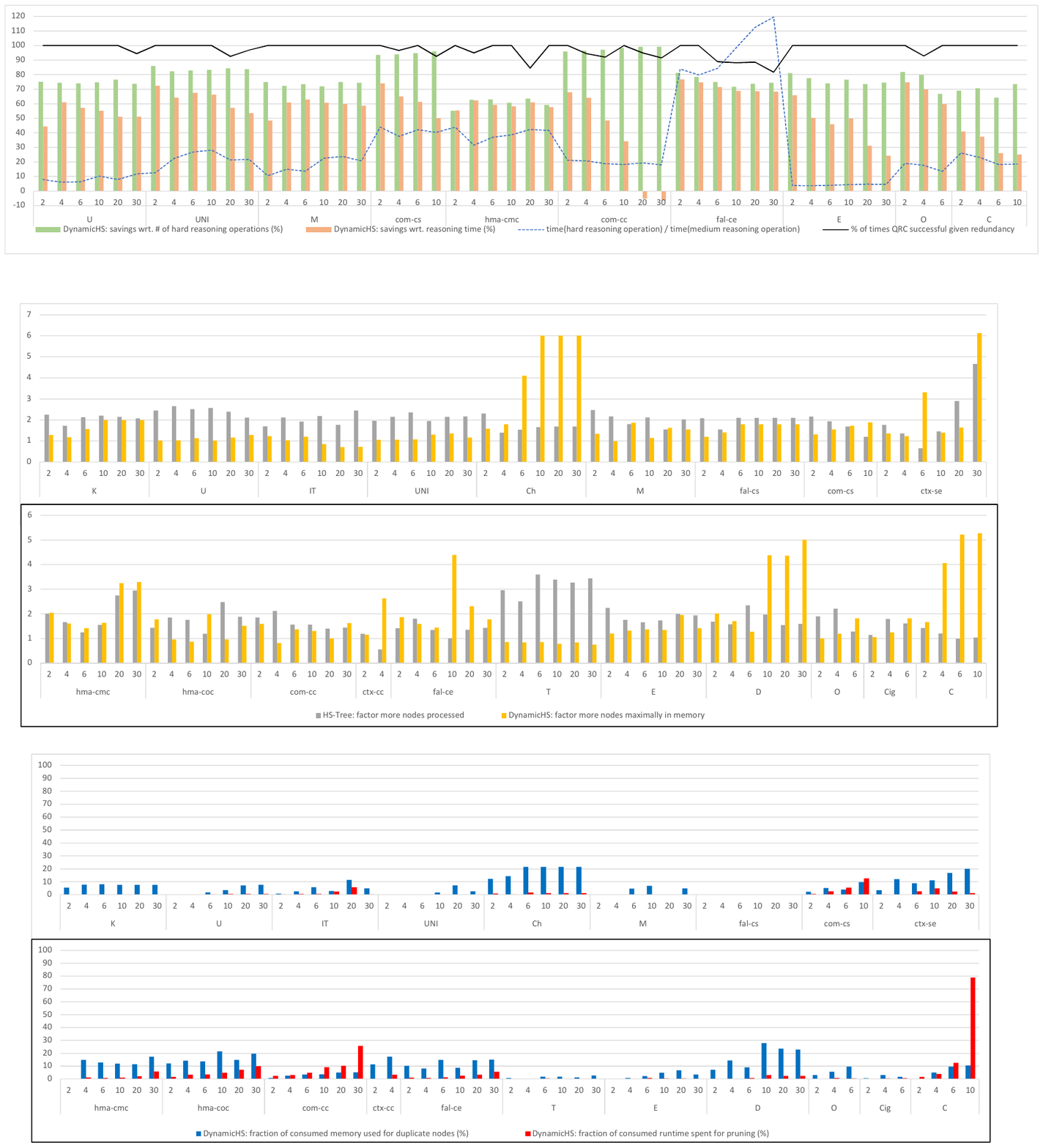}
	\caption{\small Analysis of DynamicHS wrt.\ space-efficiency of duplicate storage and time-efficiency of pruning actions: 
		x-axis shows faulty ontologies $\mo$ from Table~\ref{tab:dataset} and number $\ld$ of leading diagnoses computed per call of hitting set algorithm (iteration of while-loop in Alg.~\ref{algo:sequential_diagnosis}). The plot shows data for the heuristic ENT. All values depicted are medians over the 20 sequential diagnosis sessions executed per diagnosis scenario. }
	\label{fig:duplicates+pruning_ENT}
\end{figure}

\noindent\emph{Duplicates and Pruning:}
Fig.~\ref{fig:duplicates+pruning_ENT} illuminates aspect (IV). It displays the median fraction of the memory consumed by DynamicHS that is used for the storage of duplicate nodes per diagnosis scenario (blue bars). 
We can see that duplicate nodes account for less than 10\,\% of the used memory in 69\,\% of the diagnosis scenarios, for less than 20\,\% in 92\,\%, and for less than 30\,\% in 100\,\% of the scenarios. In other words, the memory overhead caused by duplicate nodes does not exceed 12\,\% / 25\,\% / 39\,\% in 69\,\% / 92\,\% / 100\,\% of the scenarios. 
%
Consequently, the additional memory consumption attributable to duplicates stays within acceptable bounds in all cases. This can be due to not too many existing duplicates (e.g., if minimal conflicts tend to be disjoint) or due to DynamicHS's memory-efficient storage of duplicates (cf.\ Sec.~\ref{sec:space-saving_dup_storage}). The latter requires duplicates to be reconstructed on demand in the course of pruning and associated node replacement actions (cf.\ Sec.~\ref{sec:space-saving_dup_storage} and Example~\ref{ex:advanced_techniques}). The relative computational expense of these pruning actions in comparison to the overall computation time of DynamicHS is described by the red bars in Fig.~\ref{fig:duplicates+pruning_ENT}. These tell that the relative computation time spent for pruning is negligible (less than 1\,\%) in 61\,\% of the diagnosis scenarios, marginal (less than 5\,\%) in 89\,\%, and small (less than 10\,\%) in 95\,\% of the scenarios. This overall fairly low overhead for pruning, which already includes the time for reconstructing duplicate nodes, testifies that DynamicHS's duplicate storage and reconstruction as well as its pruning techniques that completely dispense with costly reasoner calls are reasonable strategies for dealing with the statefulness of the hitting set tree.

Only in two cases, the time for pruning exceeds 20\,\% of DynamicHS's total computation time; in one of these cases (C, 10) it is very high, reaching almost 80\,\%, a possible sign that significant portions of the tree have become out-of-date through lazy updating (cf.\ Sec.~\ref{sec:lazy_updating_policy}). Note, however, that DynamicHS overall still saves time compared to HS-Tree in this scenario (cf.\ Fig.~\ref{fig:results_median_savings_ENT}) since the time spent later for more intensive pruning is counterbalanced with time saved earlier by skipping pruning actions (lazy updating).

\section{Conclusion}
\label{sec:conclusion}
In this work we proposed DynamicHS, an optimization of Reiter's seminal HS-Tree algorithm geared towards sequential diagnosis use cases. Since, in sequential diagnosis, the addressed diagnosis problem is subject to successive change in terms of newly acquired system information in the form of observations or measurements, the main rationale behind DynamicHS is the adoption of a reuse-and-adapt principle, which maintains the already produced diagnosis search data structure and appropriately updates it whenever the diagnosis problem is modified. The goal of this stateful diagnostic search is to avoid expensive redundant operations involved in the reconstruction of discarded search data structures when using (the stateless) HS-Tree.
Comprehensive experiments on a benchmark of 20 real-world diagnosis problems, from which we generated more than 6000 different sequential diagnosis problems under different settings as regards the number of diagnoses computed and the measurement selection method used, revealed that \emph{(1)}~DynamicHS leads to a (statistically significantly) better time performance than HS-Tree in almost all cases, \emph{(2)}~the time saved when using DynamicHS instead of HS-Tree is substantial in almost all cases and attains median and maximal values of 52\,\% and 89\,\%, respectively, \emph{(3)}~the memory overhead (due to the statefulness) of DynamicHS was reasonable in the majority of cases and DynamicHS was successfully applicable whenever HS-Tree was, and \emph{(4)}~the performance of DynamicHS compared to HS-Tree was particularly favorable in the hardest cases where the use of DynamicHS rather than HS-Tree could avoid time overheads of up to more than 800\,\%. Notably, DynamicHS achieves these performance improvements while preserving all desirable properties of HS-Tree. That is, like HS-Tree, DynamicHS is sound and complete, computes diagnoses in best-first order, and is generally applicable in that it is independent of the diagnosis domain, of the diagnosis problem structure, of the (monotonic) logic used to describe the diagnosed system, and of the adopted (sound and complete) logical inference engine.      

\section*{Acknowledgments}
I thank Manuel Herold for the excellent Java implementation of the DynamicHS algorithm and of the experiments. I am also grateful to Wolfgang Schmid for coordinating the execution of our experiments on our servers. This work was supported by the Austrian Science Fund (FWF), contract P-32445-N38.


%
%



\vskip 0.2in

\end{document}